\preto\tabular{\setcounter{magicrownumbers}{0}}
\newcounter{magicrownumbers}
\definecolor{DSgray}{cmyk}{0,1,0,0}
\newcommand{\Authornote}[2]{{\small\textcolor{DSgray}{\sf$<<<${  #1: #2}$>>>$}}}
\newcommand{\kmnote}[1]{{\Authornote{KM}{#1}}} % for comments
\newtheorem{lem}{Lemma}
\newtheorem{assum}{Assumption}
\DeclareMathOperator*{\argmax}{argmax}
\renewcommand{\arraystretch}{1}
\theoremstyle{thmstyleone}%
\newtheorem{theorem}{Theorem}%  meant for continuous numbers
\newtheorem{proposition}[theorem]{Proposition}% 
\theoremstyle{thmstyletwo}%
\theoremstyle{thmstylethree}%
\newtheorem{definition}{Definition}%
\newtheorem{lemma}{Lemma}[section]
\journal{Artificial Intelligence}
\begin{document}

\begin{frontmatter}

\title{Mass Distribution versus Density Distribution\\ in the Context of Clustering}

%%=============================================================%%
%% Prefix	-> \pfx{Dr}
%% GivenName	-> \fnm{Joergen W.}
%% Particle	-> \spfx{van der} -> surname prefix
%% FamilyName	-> \sur{Ploeg}
%% Suffix	-> \sfx{IV}
%% NatureName	-> \tanm{Poet Laureate} -> Title after name
%% Degrees	-> \dgr{MSc, PhD}
%% \author*[1,2]{\pfx{Dr} \fnm{Joergen W.} \spfx{van der} \sur{Ploeg} \sfx{IV} \tanm{Poet Laureate} 
%%                 \dgr{MSc, PhD}}\email{iauthor@gmail.com}
%%=============================================================%%

% \author*[1]{\fnm{Kai Ming} \sur{Ting}}\email{tingkm@nju.edu.cn}

% \author*[2]{\fnm{Ye} \sur{Zhu} }\email{ye.zhu@ieee.org}

% \author[1]{\fnm{Hang} \sur{Zhang} }\email{zhanghang@lamda.nju.edu.cn}

% \author[1]{\fnm{Tianrun} \sur{Liang} }\email{liangtr@lamda.nju.edu.cn}

% \affil[1]{\orgdiv{National Key Laboratory for Novel Software Technology \&\\ School of Artificial Intelligence}, \orgname{Nanjing University}, \orgaddress{\city{Nanjing}, \postcode{210093}, \state{Jiangsu}, \country{China}}}

% \affil[2]{\orgdiv{School of Information Technology}, \orgname{Deakin University}, \orgaddress{\city{Geelong}, \postcode{3125}, \state{Victoria}, \country{Australia}}}

\author[kaiming,kaiming2]{Kai Ming Ting\textsuperscript{\Letter}}
	\ead{tingkm@nju.edu.cn}
\author[deakinSRC]{Ye Zhu}
	\ead{ye.zhu@ieee.org}
\author[kaiming,kaiming2]{Hang Zhang\textsuperscript{\Letter}}
        \ead{zhanghang@lamda.nju.edu.cn}
\author[kaiming,kaiming2]{Tianrun Liang}
    \ead{liangtr@lamda.nju.edu.cn}

\address[kaiming]{State Key Laboratory for Novel Software Technology, Nanjing University, China 210023}
\address[kaiming2]{School of Artificial Intelligence, Nanjing University, Nanjing, China 210023}
 
\address[deakinSRC]{School of IT, Deakin University, Geelong, Australia 3125}

%%==================================%%
%% sample for unstructured abstract %%
%%==================================%%

\begin{abstract}
    This paper investigates two fundamental descriptors of data, i.e., density distribution versus mass distribution, in the context of clustering. Density distribution has been the de facto descriptor of data distribution since the introduction of statistics. We show that density distribution has its fundamental limitation---high-density bias, irrespective of the algorithms used to perform clustering.   
%When cluster cohesiveness is expressed in terms of similarity between a point and its most similar neighbor, average over all points in a cluster, the low-density clusters are significantly less cohesive than high-density clusters---leading to bias towards dense clusters in density-based clustering algorithms.
Existing density-based clustering algorithms have employed different algorithmic means to counter the effect of the high-density bias with some success, but the fundamental limitation of using density distribution remains an obstacle to discovering clusters of arbitrary shapes, sizes and densities.  
%The use of mass distribution eliminates this limitation and obstacle.
%We advocate to use mass distribution which is derived from a data dependent kernel called Isolation Kernel (IK) via a kernel estimator. IK guarantees that the cohesiveness of every cluster is approximately the same; and we show that this can be utilized to produce a representative sample for every cluster in a given dataset.  Subsequently, mass distribution, via the representative samples, enables clusters of arbitrary shapes, sizes and densities in a dataset to be discovered.  To produce an unbiased clustering outcome, we posit that using IK and mass distribution are a necessary condition, especially in datasets having clusters of varied densities.
%We posit that existing density-based algorithms can be viewed as (implicitly) maximizing the total density of all clusters, which biases towards high-density clusters. By simply replacing density with mass, we show that these existing algorithms, as well as the
Using the mass distribution as a better foundation,
we propose a new algorithm which maximizes the total mass of all clusters, called mass-maximization clustering (MMC).
%$C_i$ having constraint $\bar{S}(C_i) \approx \bar{S}(C_j)\ \forall i \ne j$, irrespective of the relative density of different clusters, where $\bar{S}(C)$ is the average similarity of all points and their most similar neighbors in cluster $C$. 
The algorithm can be easily changed to maximize the total density of all clusters in order to examine the fundamental limitation of using density distribution versus mass distribution. The key advantage of the MMC over the density-maximization clustering is that the maximization is conducted without a bias towards dense clusters. 

%We posit that existing density-based algorithms can be viewed as (implicitly) maximizing the average density per cluster $\bar{F}(C)$, particularly for dense clusters. By simply replacing density with mass, we show that these existing algorithms, as well as the proposed new algorithm, maximizing the average mass per cluster $\bar{M}(C)$ such that $\bar{M}(C_i) \approx \bar{M}(C_j)\ \forall i \ne j$, irrespective of the relative density of different clusters. In a nutshell, the key advantage of the mass-maximization clustering (MMC) over the density-maximization clustering is that the maximization is conducted without bias towards dense clusters.

%Unlike existing density-based clustering algorithms, MMC is a maximization procedure with an explicit objective function. Yet, MMC is not an optimization algorithm in the sense of mathematical optimization. In particular, the key step in MMC utilizes no iterative process to achieve the objective. 
\end{abstract}

\begin{keyword}
Density, Mass, Clustering, high-density bias, Mass Maximization
\end{keyword}
%%\pacs[JEL Classification]{D8, H51}

%%\pacs[MSC Classification]{35A01, 65L10, 65L12, 65L20, 65L70}

\end{frontmatter}

\section{Introduction and motivation}

% \textcolor{red}{Ideas to revise the paper:\\
% * A reviewer complains about Section 2. Can we fix the problems raised or can we delete this section? Need to define what 'High-density bias' is in this section.\\
% A clustering algorithm is said to have high-density bias when it is more likely to correctly assign points of high-density clusters than points of low-density clusters.\\
% There are many different terms which all refer to the same high-density bias, e.g.,  high-density bias. Use one consistent term throughout\\
% * Provide a diagram to illustrate the two limitations of density-based clustering; another diagram for mass distribution to show how mass overcome the limitations of density: can we use the $\tau$-cohesiveness of cluster here? A cohesive (ground-truth) cluster enables its points to be discovered as belong to the same clusters easier than those points of a less-cohesive (ground-truth) cluster. How to show cohesiveness in a diagram? Clusters of varied densities in the input space are represented as clusters of same density in the feature space of IK. Does this work?}

Density-based clustering \cite{ester1996density,DENCLUE,rodriguez2014clustering} has its appeal because it can discover clusters of arbitrary shapes and sizes which match the distribution of the given dataset in high density regions. However, it has two key shortcomings. First, in terms of clustering outcomes, density-based clustering has been `haunted' by the difficulty of discovering clusters of low density in the presence of high-density clusters. Despite various improvements (e.g., \cite{ankerst1999optics,HDBSCAN-2015,
%zhu2016density,
DP_jain,chen2018local,li2019LGD,ros2022detection,yang2023hcdc}), the issue of bias towards dense clusters has merely shifted from one form to another, solving the problem in an early version of algorithm but creating a new problem in the new version. Examples are given in Table \ref{tab-motivation}, where the four density-based clustering algorithms  fail to detect all clusters in at least one of the four datasets having different clusters of varied densities. That is, DBSCAN \cite{ester1996density} could succeed on Jain and AC only; DP \cite{rodriguez2014clustering} could succeed on AC and 3G only; LGD \cite{li2019LGD} failed on AC though succeeded on others; and a new density-based algorithm called DMC introduced here succeeded on all datasets, except 3G.

\begin{table}[h]
  \centering
    \caption{Examples in which no single density-based clustering algorithm (out of DBSCAN, DP, LGD \& DMC) can successfully discover all clusters in all four datasets having different clusters of varied densities. The ones with a yellow frame indicate perfect or near-perfect clustering outcomes. The number shown under each subfigure is the clustering outcome result in terms of F1.}
    \label{tab-motivation}
  \begin{tabularx}{\textwidth}{c|cccc|m{3cm}}
    \hline
   & DBSCAN & DP&  LGD & DMC & Description\\
    \hline 
\rotatebox{90}{RingG}&  \includegraphics[width=.16\textwidth,valign=c]{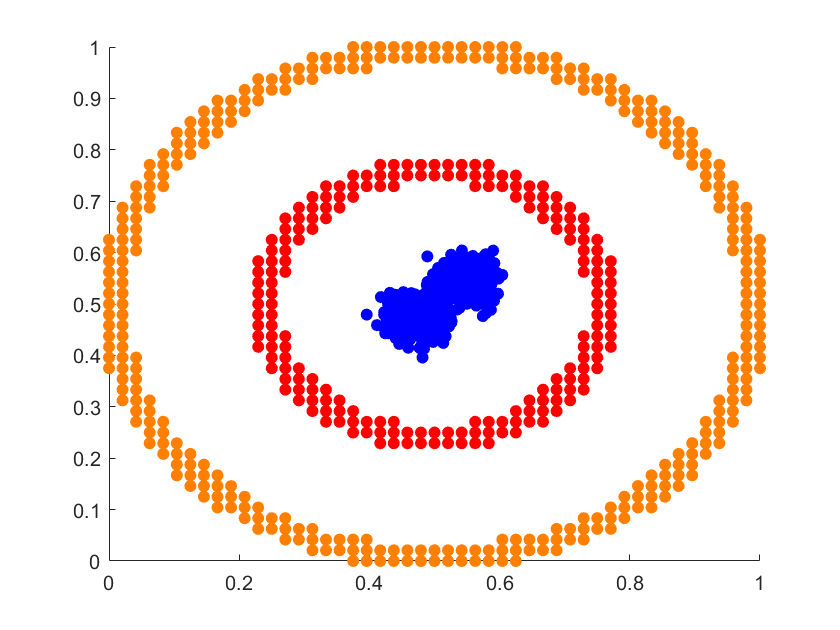} 
% &\includegraphics[width=.16\textwidth,valign=c]{figures/2dimVis/RingG_hdbscan.png} 
&\includegraphics[width=.16\textwidth,valign=c]{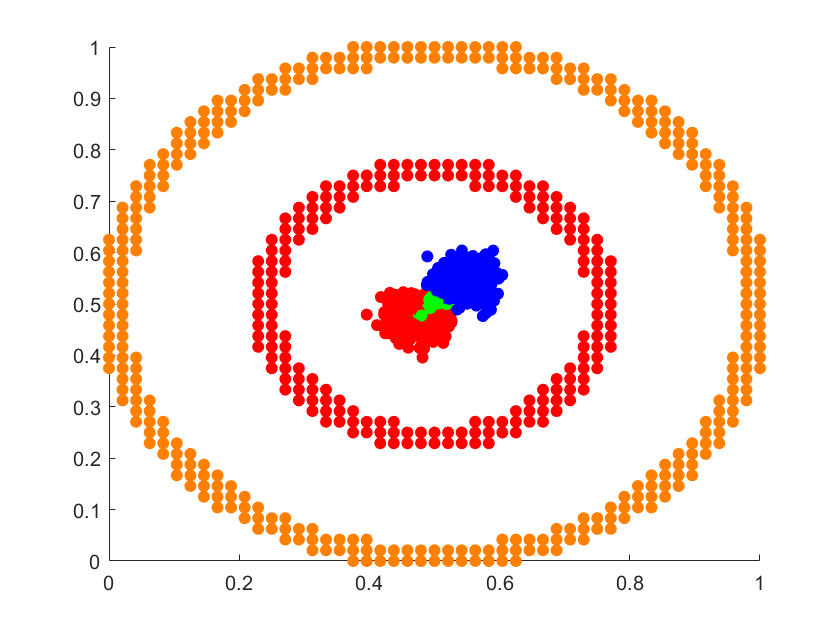} 
&\includegraphics[width=.16\textwidth,cframe=yellow 0.5mm,valign=c]{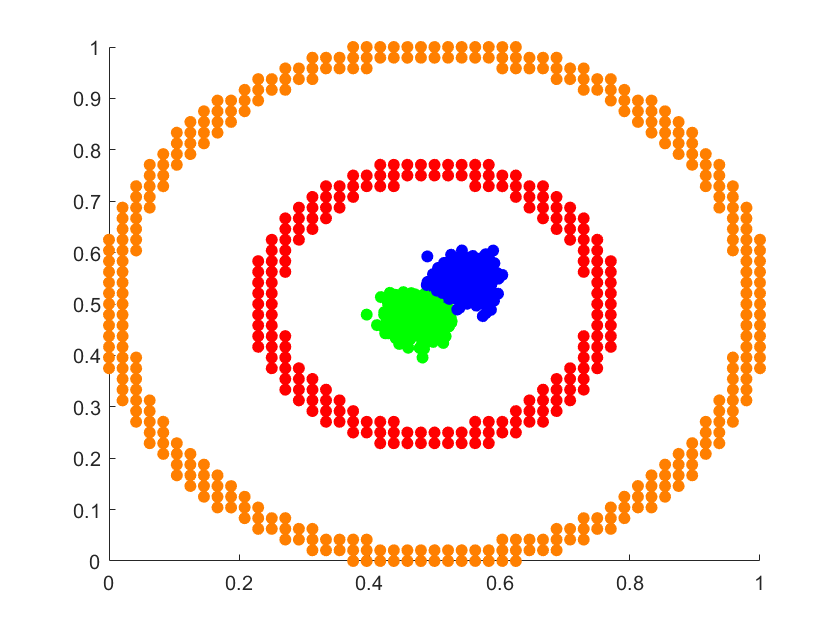}
&\includegraphics[width=.16\textwidth,cframe=yellow 0.5mm,valign=c]{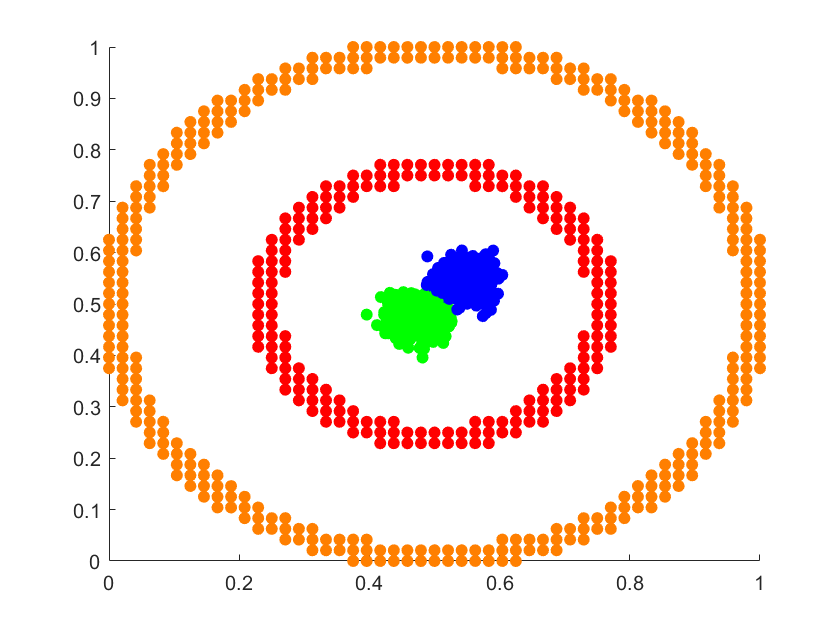} 
%&\includegraphics[width=.16\textwidth]{figures/2dimVis/RingG-sgl.png}  &\includegraphics[width=.16\textwidth]{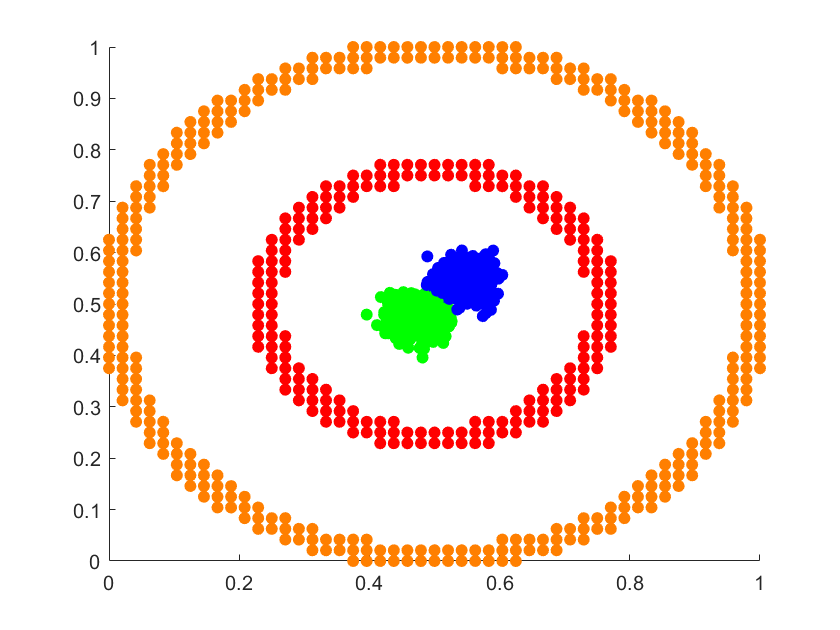} 
& Two dense Gaussian clusters \& two sparse ring clusters 
\\ & 0.67  & 0.96   & 1 & 1
\\ \hline
\rotatebox{90}{Jain}&\includegraphics[width=.16\textwidth,cframe=yellow 0.5mm,valign=c]{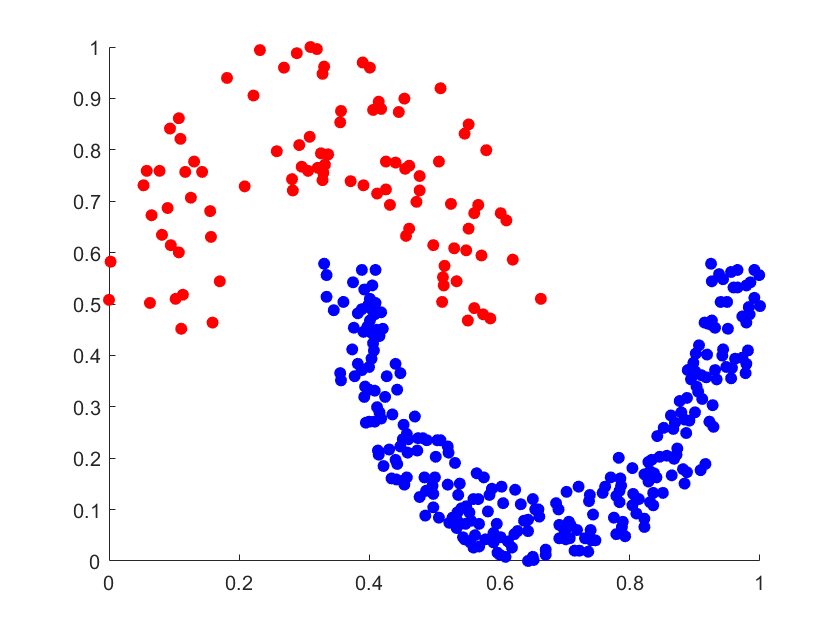}  
% &\includegraphics[width=.16\textwidth,valign=c]{figures/2dimVis/jain_hdbscan.png}
&\includegraphics[width=.16\textwidth,valign=c]{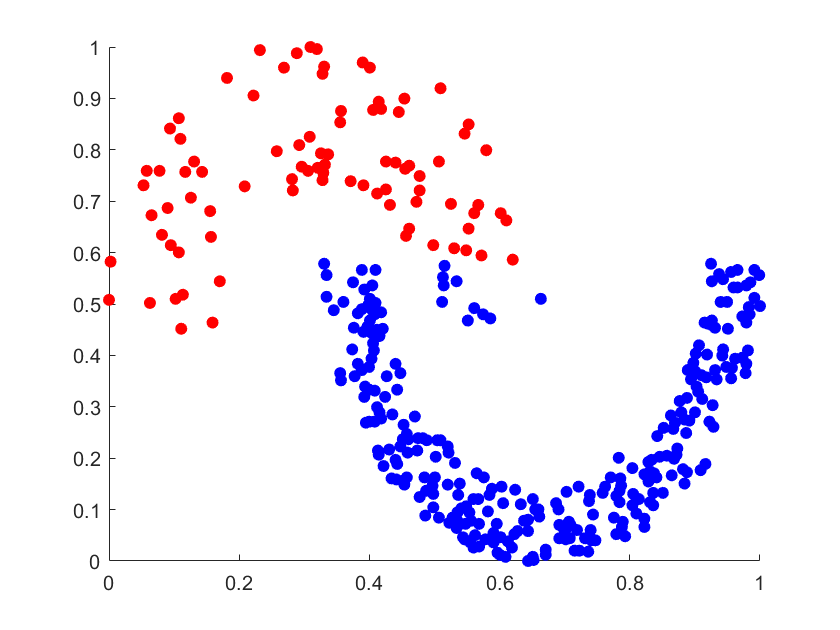} &\includegraphics[width=.16\textwidth,cframe=yellow 0.5mm,valign=c]{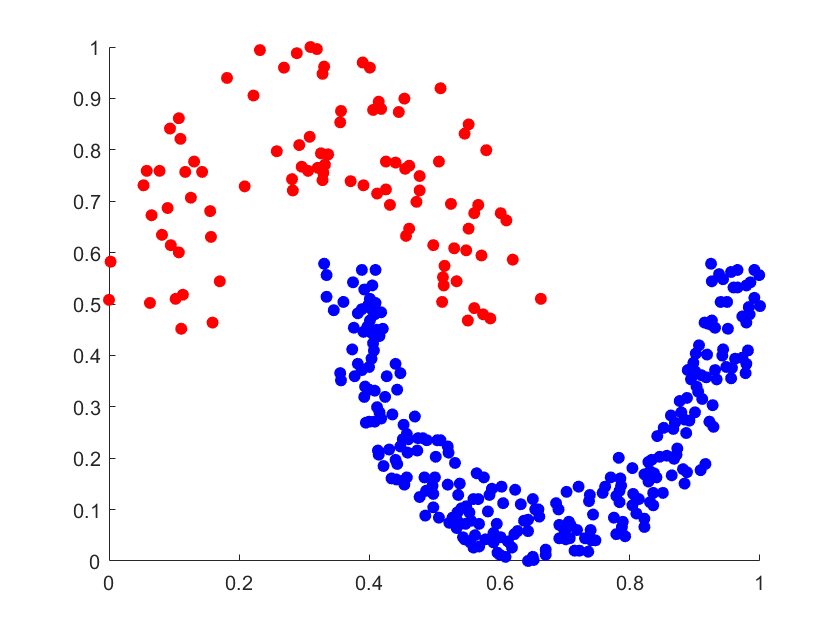}
 &\includegraphics[width=.16\textwidth,cframe=yellow 0.5mm,valign=c]{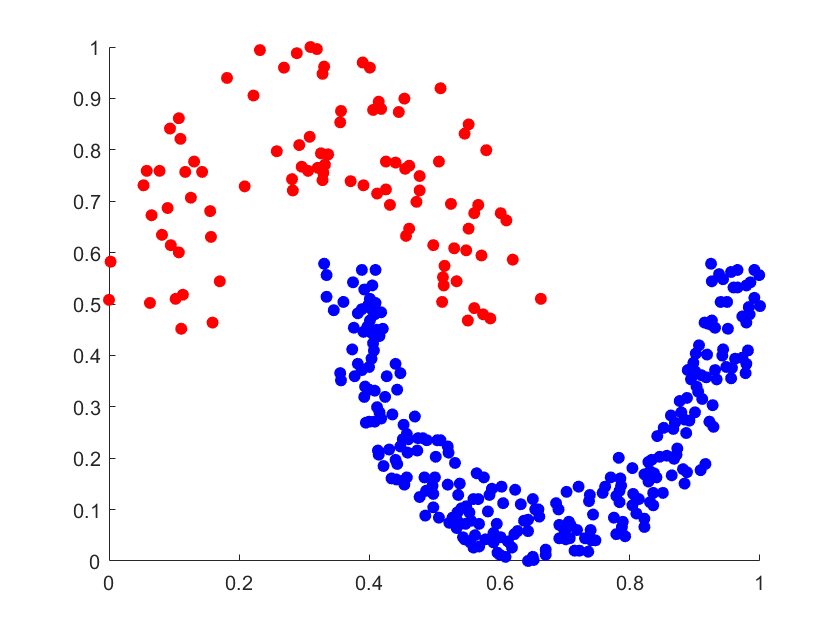}
 %&\includegraphics[width=.16\textwidth]{figures/2dimVis/jain-sgl.png}  &\includegraphics[width=.16\textwidth]{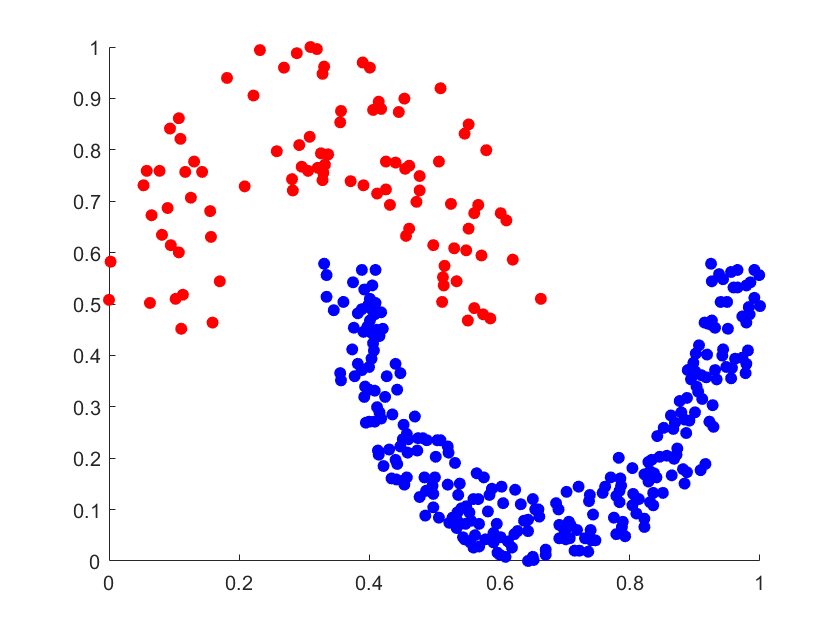}
 & \noindent Sparse top arc cluster \& dense bottom  arc cluster
 \\ & 1 & 0.74 & 1 & 1
\\ \hline
% \rotatebox{90}{3L}& \includegraphics[width=.16\textwidth]{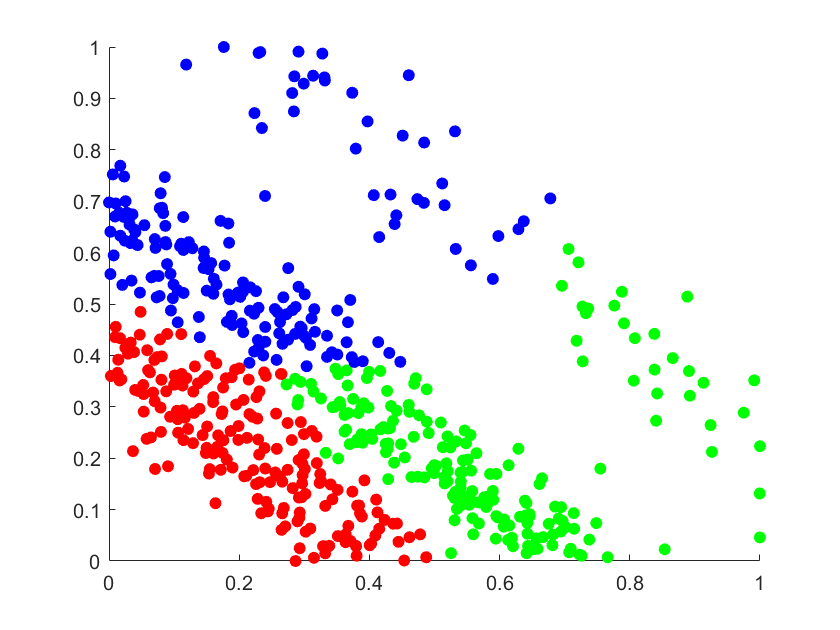} &\includegraphics[width=.16\textwidth]{figures/2dimVis/3L-lgd.png} &\includegraphics[width=.16\textwidth]{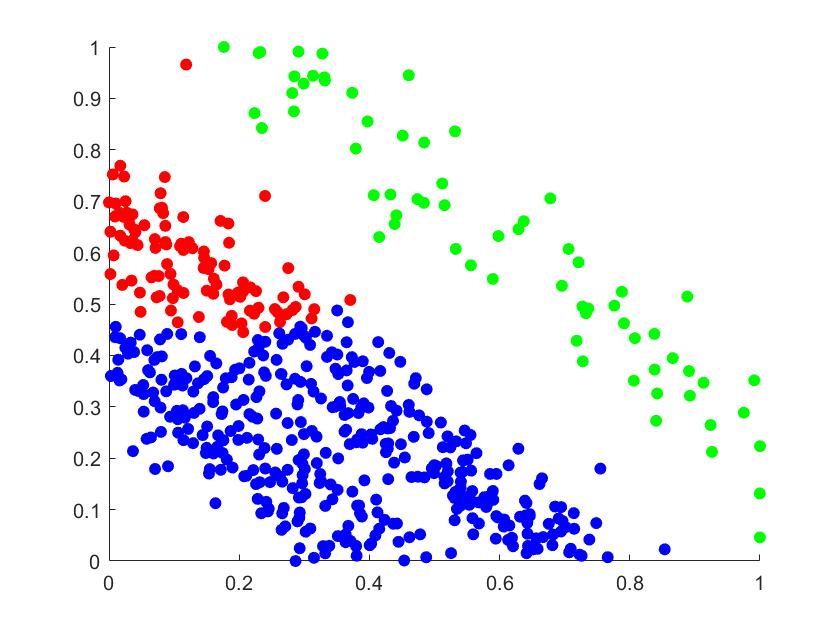}  &\includegraphics[width=.16\textwidth]{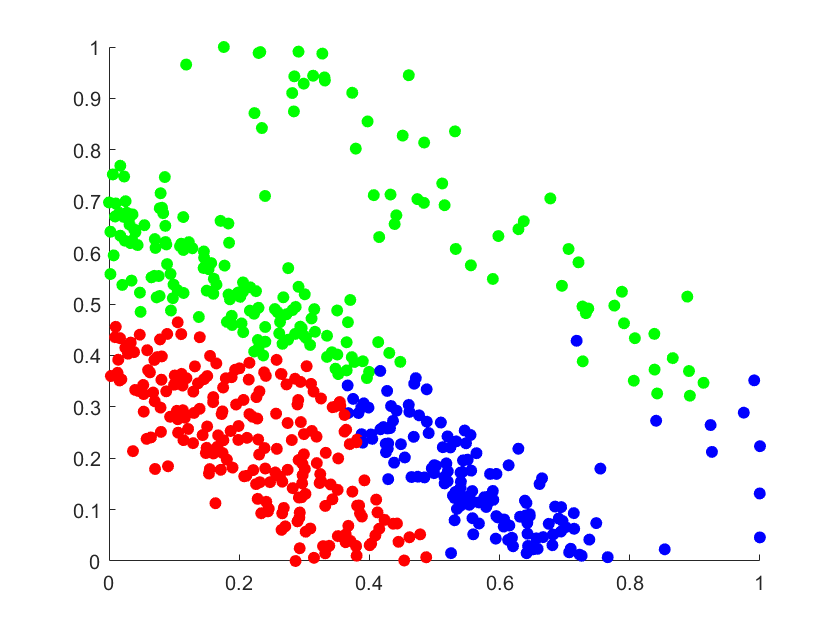} \\ \hline
% \rotatebox{90}{3G-HL}& \includegraphics[width=.16\textwidth]{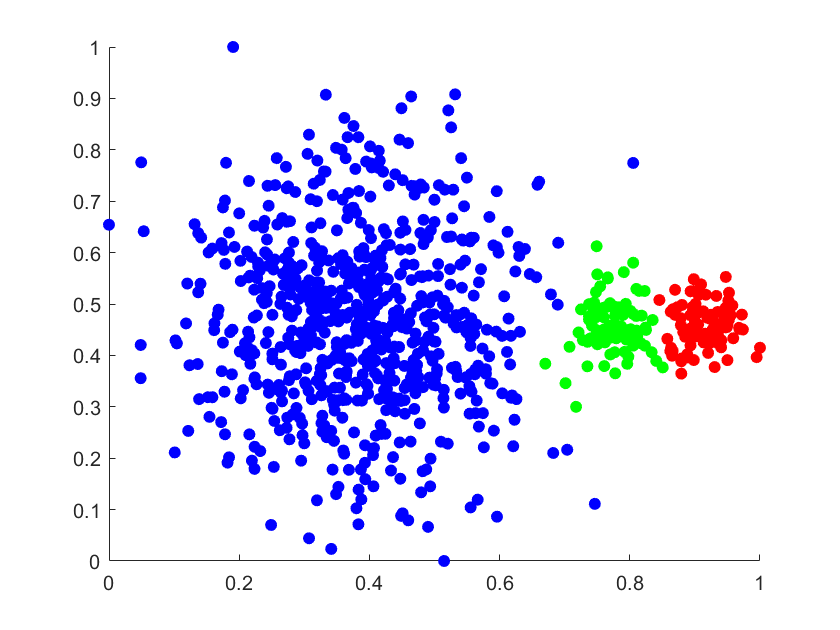} &\includegraphics[width=.16\textwidth]{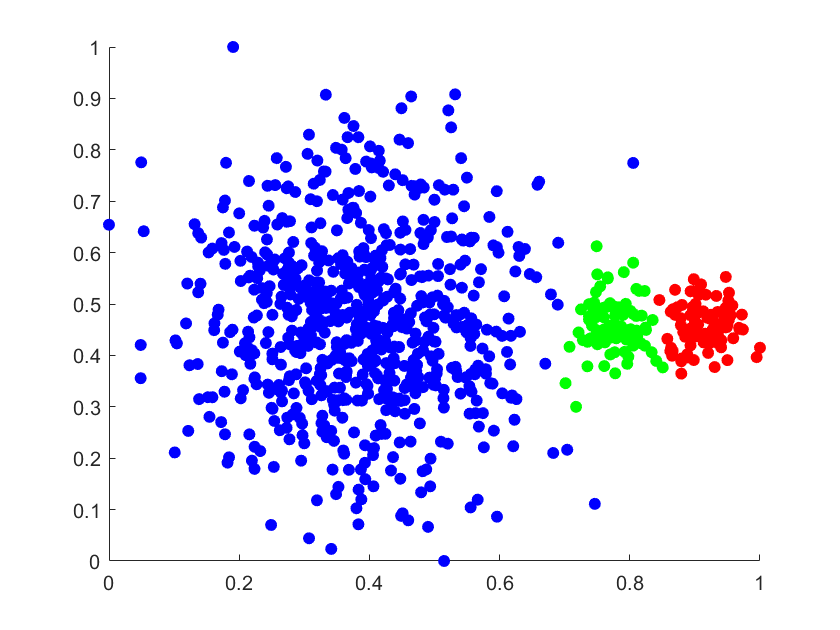} &\includegraphics[width=.16\textwidth]{figures/2dimVis/3ghl-sgl.png}  &\includegraphics[width=.16\textwidth]{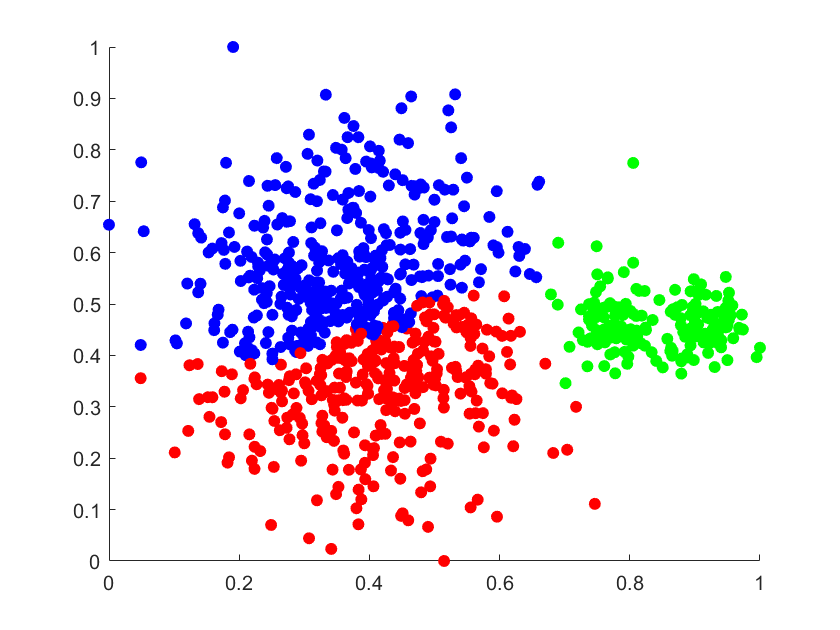} \\ \hline
% \rotatebox{90}{3G}& \includegraphics[width=.16\textwidth]{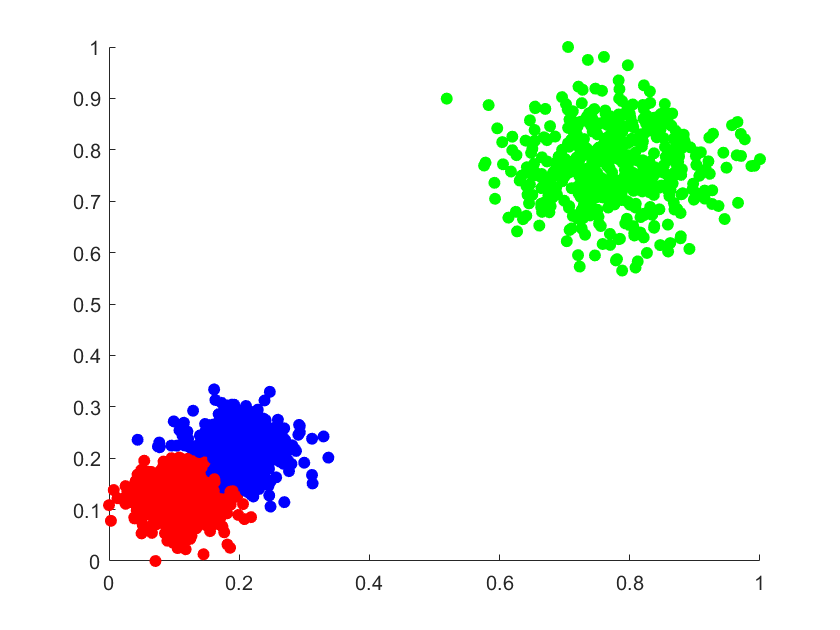} &\includegraphics[width=.16\textwidth]{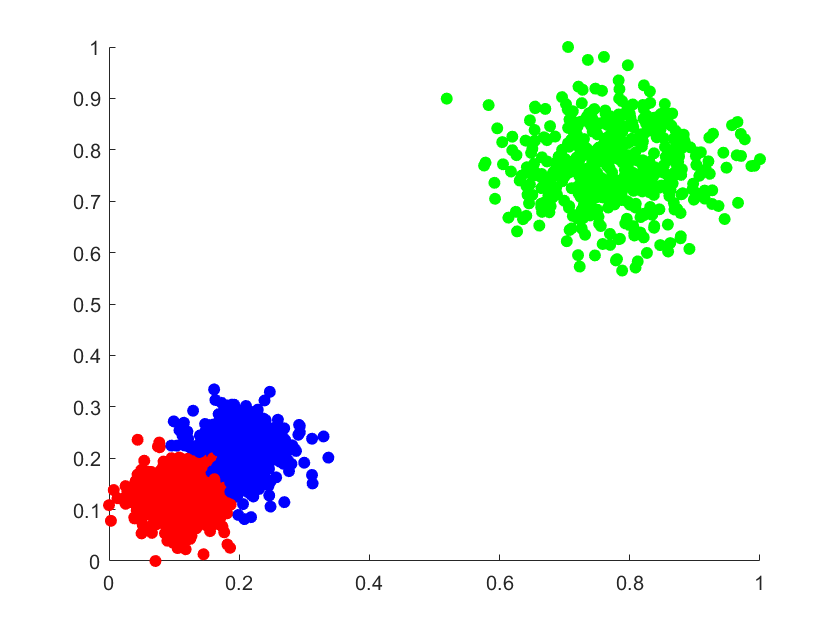} &\includegraphics[width=.16\textwidth]{figures/2dimVis/3g-sgl.png}  &\includegraphics[width=.16\textwidth]{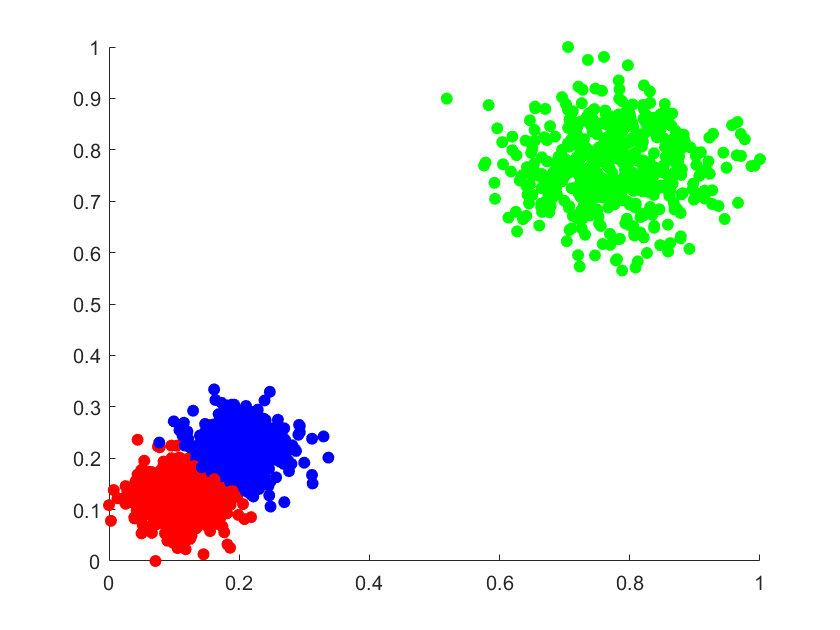} \\ \hline
% \rotatebox{90}{2Gaussians}& \includegraphics[width=.16\textwidth]{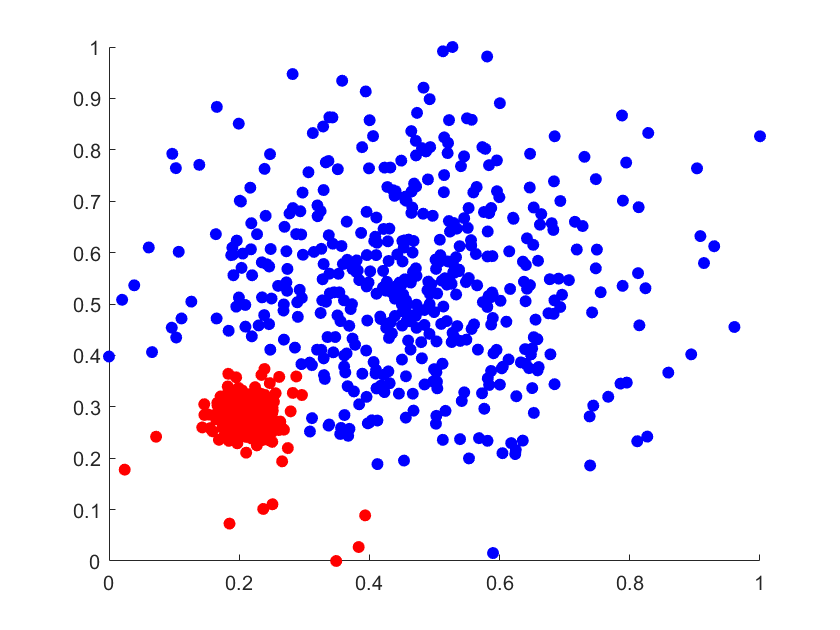} &\includegraphics[width=.16\textwidth]{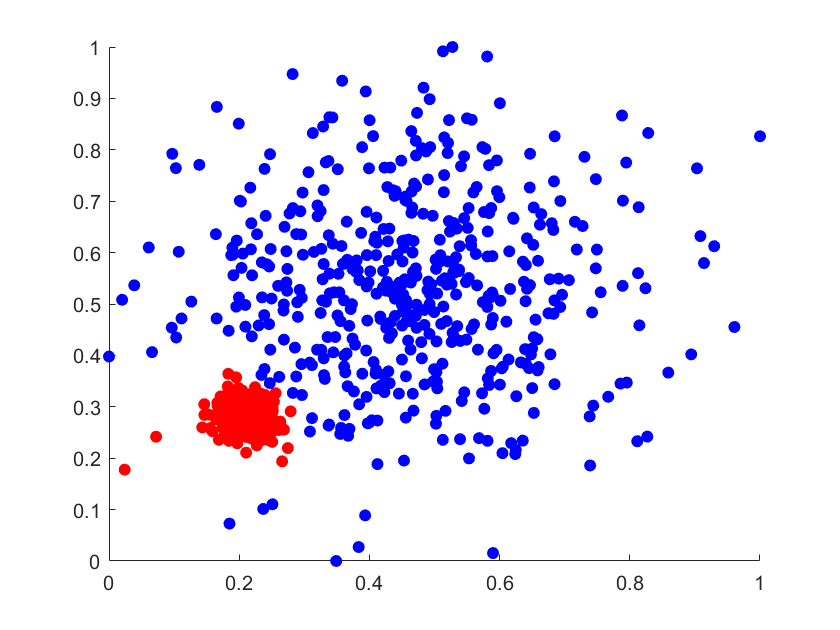} &\includegraphics[width=.16\textwidth]{figures/2dimVis/2G-sgl.png}  &\includegraphics[width=.16\textwidth]{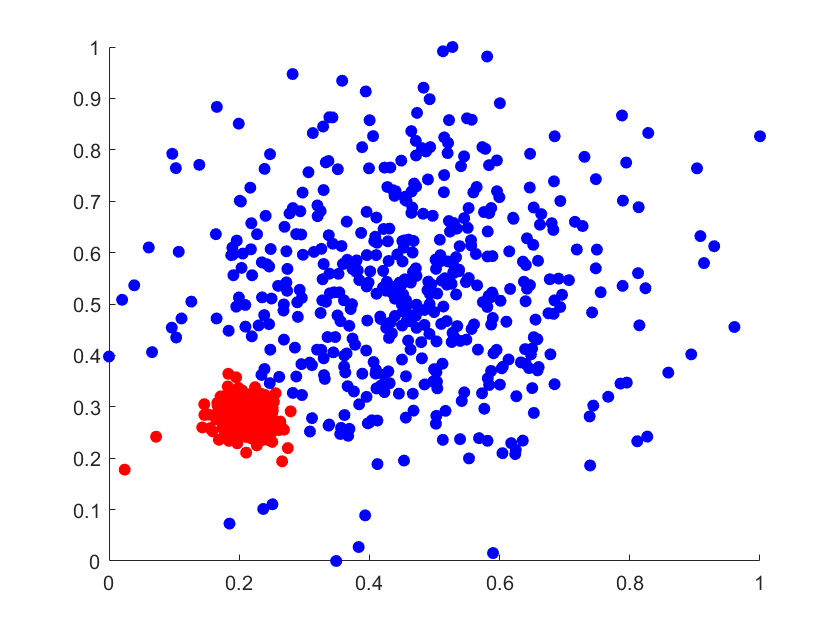} \\ \hline
\rotatebox{90}{AC}& \includegraphics[width=.16\textwidth,cframe=yellow 0.5mm,valign=c]{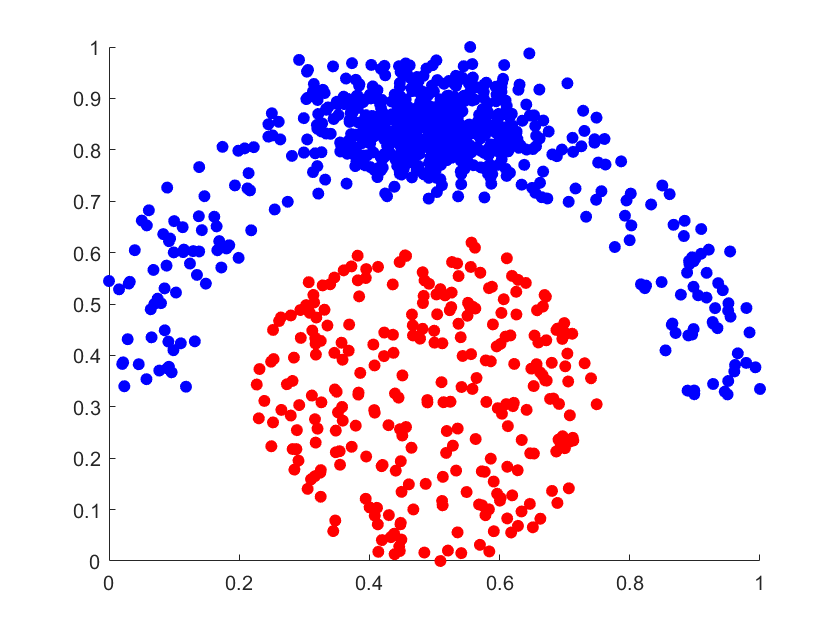} 
 % &\includegraphics[width=.16\textwidth,cframe=yellow 0.5mm,valign=c]{figures/2dimVis/AC_hdbscan.png}  
&\includegraphics[width=.16\textwidth,cframe=yellow 0.5mm,valign=c]{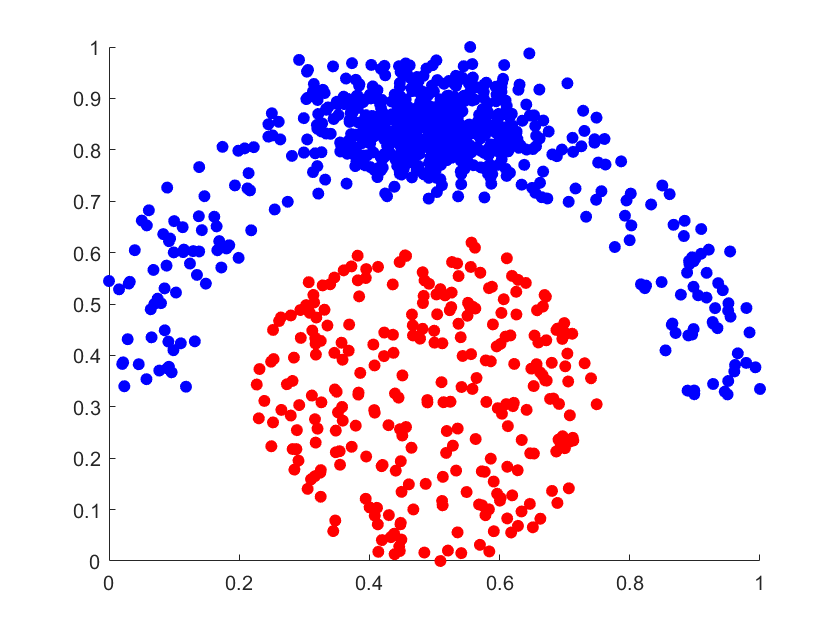} &\includegraphics[width=.16\textwidth,valign=c]{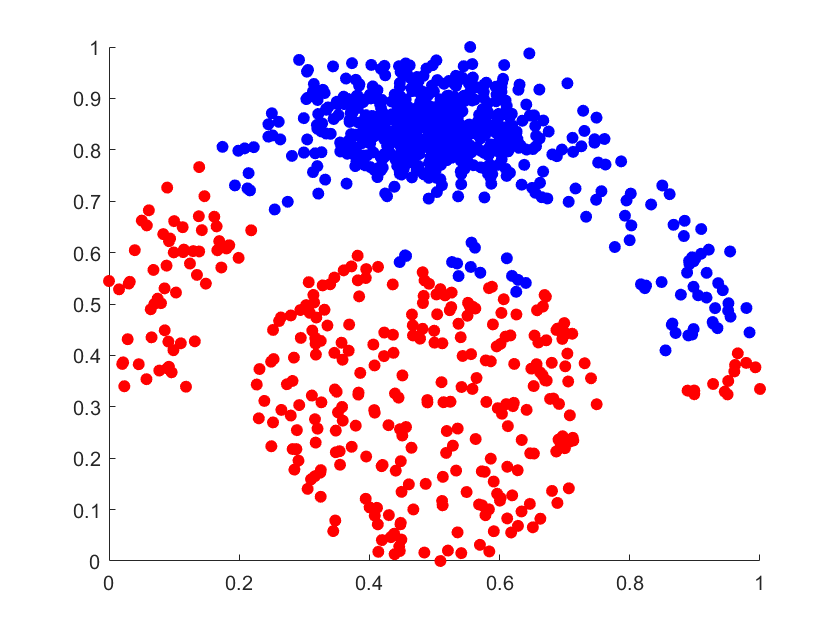} 
&\includegraphics[width=.16\textwidth,cframe=yellow 0.5mm,valign=c]{figures/2dimVis/AC-dmc.png}  %&\includegraphics[width=.16\textwidth]{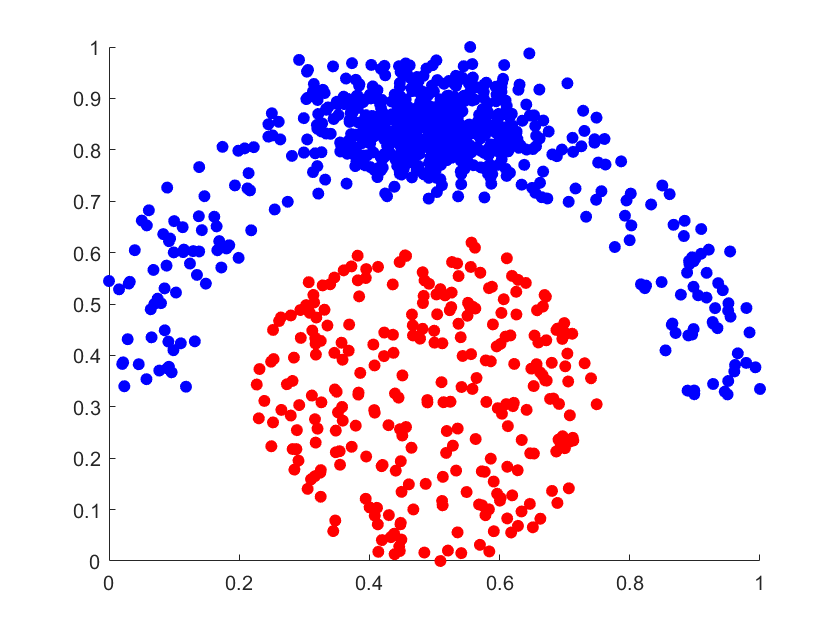}
& Dense-centered arc cluster \& sparse circle cluster
\\    & 1  & 1  & 0.90  & 1
\\ \hline
% \rotatebox{90}{G-Strip}& \includegraphics[width=.16\textwidth]{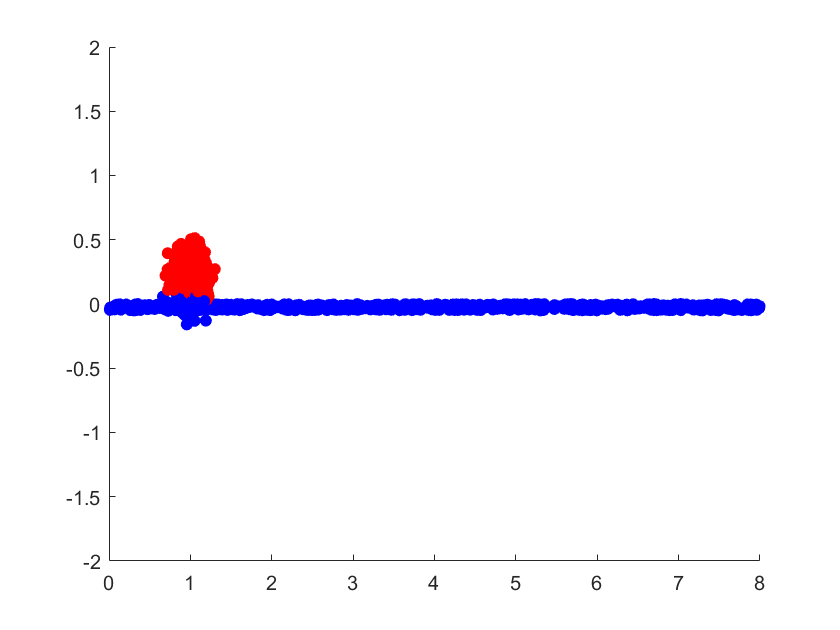} &\includegraphics[width=.16\textwidth]{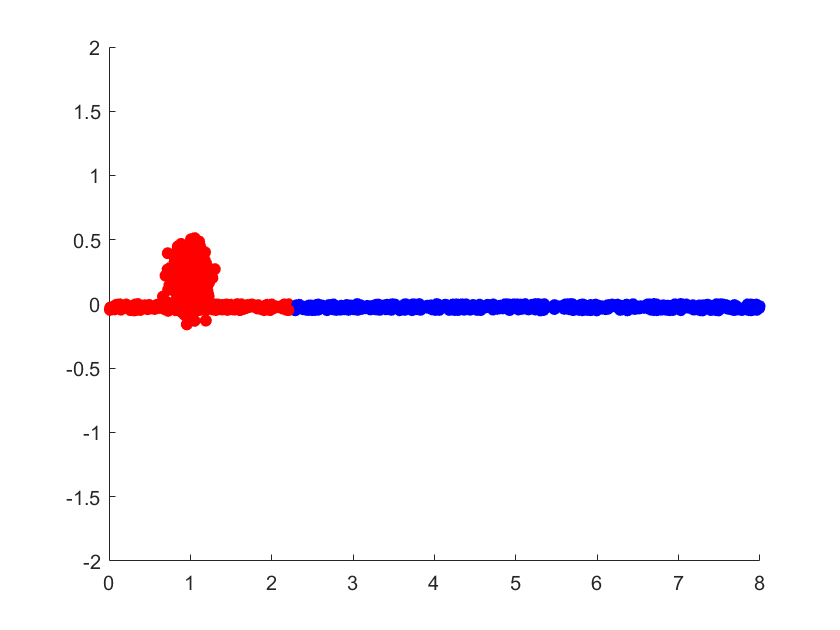} &\includegraphics[width=.16\textwidth]{figures/2dimVis/Gstrip-sgl.png}  &\includegraphics[width=.16\textwidth]{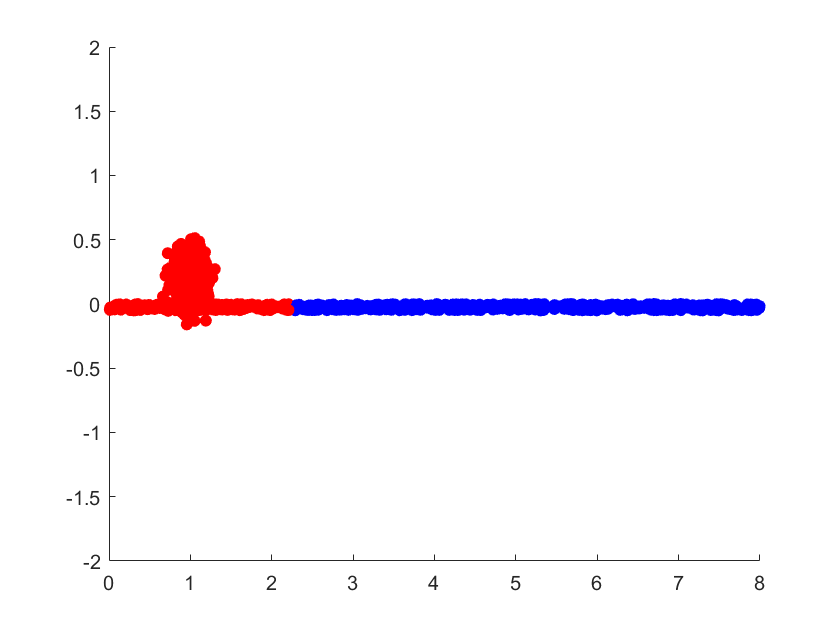} \\ \hline
 \rotatebox{90}{3G}&\includegraphics[width=.16\textwidth,valign=c]{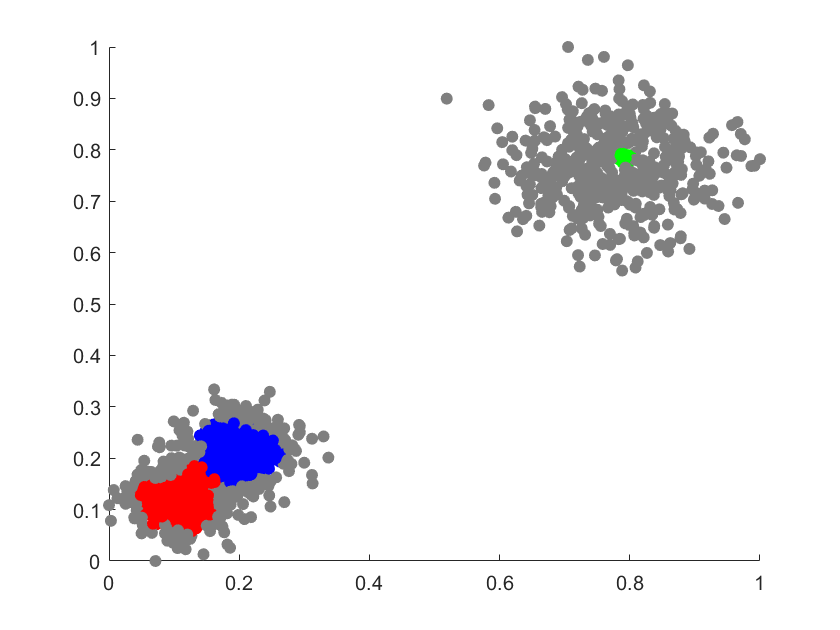} 
 % &\includegraphics[width=.16\textwidth,valign=c]{figures/2dimVis/3G_hdbscan.png} 
 &\includegraphics[width=.16\textwidth,cframe=yellow 0.5mm,valign=c]{figures/2dimVis/3g-dp.png} &\includegraphics[width=.16\textwidth,cframe=yellow 0.5mm,valign=c]{figures/2dimVis/3g-lgd.png} &\includegraphics[width=.16\textwidth,valign=c]{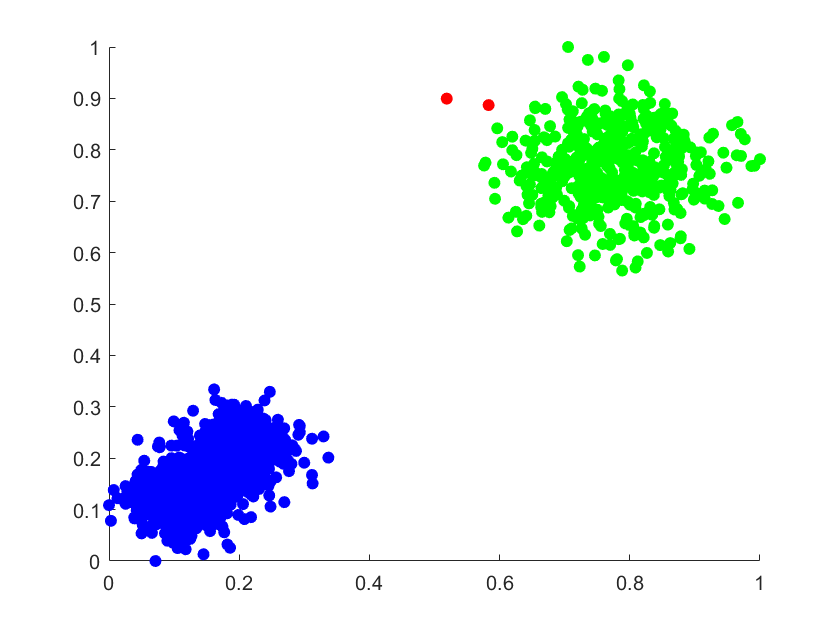}  %&\includegraphics[width=.16\textwidth]{figures/2dimVis/3g-GLSHC.png} 
 & Two dense Gaussian clusters \& one sparse Gaussian cluster 
 \\ & 0.57   & 0.98   & 0.97   & 0.73
 \\ \hline

  \end{tabularx}

\end{table}

Second, density-based clustering algorithms have at least quadratic time complexity\footnote{The term `time complexity' refers to the worst-case time complexity with respect to the dataset size.} because they must use a density estimator in order to estimate the density for every point in the given dataset. 

\textcolor{black}{Recent works \cite{Mass-based-similarity-KDD2016,LMN-MLJ2019, IsolationKernel-AAAI2019} have revealed that the first key shortcoming is a result of using density distribution---a fundamental limitation that can be rectified by using mass distribution instead. They demonstrate the different clustering outcomes due to density distribution versus mass distribution using the same DBSCAN algorithm by simply replacing Euclidean distance with a mass-based similarity or a data dependent kernel. Though the effect of using a data-dependent/data-independent measure is clear, these works do not provide an explanation to this effect. Our work here fills in this gap in giving a new understanding in terms of cluster cohesiveness that has a wider impact on all clustering algorithms.}

 \textcolor{black}{Our analysis further reveals that the first key shortcoming is a result of an additional fundamental limitation of the point-based linking process in existing density-based clustering algorithms, for which the improvements thus far have failed to recognize and address. By addressing these two limitations, we show that the bias towards dense clusters can be eliminated.\\
In addition, the solution has a significant efficiency gain---creating an algorithm having linear time complexity  with respect to the dataset size, where existing density-based clustering algorithms have at least quadratic time complexity}.

Our contributions are:
\begin{enumerate}
    \item \textcolor{black}{Making explicit that the aim of clustering is to discover clusters of arbitrary shapes, sizes and densities in a given dataset. This means that a clustering algorithm shall find all clusters, irrespective of their shapes, sizes and densities. As the use of density-distribution leads to the high-density bias, density-based clustering algorithms have an inherent weakness in finding low-density clusters.}   
    \item \textcolor{black}{Targeting to address both} the two fundamental limitations of existing density-based clustering algorithms: (a) The use of density distribution to describe the data distribution; and (b) the use of point-to-point linking process to form the final clusters. 
    \item  \textcolor{black}{Proposing an integrated means that} (i) uses mass distribution to describe the data distribution; (ii) represents each cluster as a distribution via a kernel; and (iii) finds a representative sample of every cluster as the means to assign each point in the dataset to its most similar distribution/cluster.
    \item Enacting a kernel mass estimator for the first time based on a recent \emph{data dependent} kernel called Isolation Kernel (IK). Its density counterpart is kernel density estimator which typically uses the \emph{data independent} Gaussian kernel. We show that IK guarantees
the cohesiveness of every cluster to be approximately the same; and this produces a
representative sample for every cluster.
Density-based clustering has high-density bias because the low-density clusters are significantly less cohesive than high-density clusters. (See the definition of cohesiveness in Section \ref{sec-cohesiveness}.)
    \item Establishing that mass distribution, via  representative samples,  enables clusters of arbitrary shapes, sizes and densities
in a dataset to be discovered,  without the high-density bias. 
    \item Creating a new clustering algorithm, called Mass-Maximization Clustering (MMC), which maximizes the total mass of all clusters.
    %$C_i$ having the constraint $\bar{S}(C_i) \approx \bar{S}(C_j)\ \forall i \ne j$, irrespective of the relative density of different clusters, where $\bar{S}(C)$ is average similarity of all points and their most similar neighbors in cluster $C$. 
    It is a generic algorithm which can be easily converted to one that maximizes the total density of all clusters in order to examine the fundamental limitation of using density distribution versus mass distribution. Its density counterpart is called Density-Maximization Clustering (DMC).
%    \item Demonstrating the limitation of using a point-to-point linking process to form clusters by converting existing density-based algorithms to mass-based algorithms by simply replacing the distance function used with a recent data dependent kernel called Isolation Kernel, without changing the existing algorithms. \textcolor{red}{Consider deleting this item}
    \item Showing that (a) MMC has superior clustering outcomes than its density counterpart DMC as well as existing density-based algorithms;
    %and their mass-based counterparts; 
    and (b) both MMC and DMC have linear time complexity whereas existing density-based clustering algorithms have at least quadratic time complexity.
\end{enumerate}

The proposed clustering algorithm is a new class of clustering  which has two key steps. First, it finds initial clusters which are the representative samples of the distributions for individual clusters. 
Second, it assigns each point in the given dataset to its most similar initial cluster, as measured via a kernel, aiming to maximize the total mass (or density) of all clusters. 
%This differs from the  point-to-point linking process to form clusters which is ubiquitous in existing density-based clustering algorithms. 
%In addition, it has an explicit objective in growing the clusters; but no objectives are used in the existing density-based clustering algorithms.

Both steps ensure that both the initial and final clusters discovered are cohesive clusters, and all clusters have approximately the same average cohesiveness, when a recent data dependent kernel is used.  

The use of a data independent kernel such as Gaussian kernel in the proposed algorithm produces a density-based clustering which has the same bias towards high density clusters as existing density-based clustering algorithms, and it can not achieve the desired clustering outcome---discovering clusters
of arbitrary shapes, sizes and densities in a dataset---without bias.

The cluster definition in MMC is unique among existing density-based clustering algorithms in the following aspects:
\begin{enumerate}
    \item \textbf{Cluster-as-distribution definition}: The clusters discovered by the proposed algorithm are defined based on a kernel, i.e., each final cluster is discovered by  treating the initial cluster as a distribution. This differs from the point-to-point linking process used in existing density-based clustering algorithms to form the final clusters---the distribution of a cluster is not used to form the final cluster. This is despite the fact that the density of each point must be estimated by a density estimator. 
    %and their cluster definitions differ from one density-based algorithm to another because of the different linking processes. For example, the cluster definitions differ substantially \cite{zhu2022hierarchical}  between DBSCAN \cite{ester1996density} and DP \cite{rodriguez2014clustering}, even though they are both density-based algorithms. 
    %\item Because the point-to-point linking process can be done in various different ways (this is the key difference in various density-based algorithms), the cluster definition relies on the linking process employed. Yet, the cluster-as-distribution definition is independent of procedure used to grow the clusters. 
    \item \textbf{Clustering objective function}: The cluster-as-distribution definition leads to a clustering objective function which maximizes the total mass (or density) of all clusters.
    %While there can be different ways to grow clusters, the clustering objective function based on cluster-as-distribution definition can be the same. We shows three different ways to grow clusters via the same distributional kernel in psKC, IDKC and MMC. Yet, they all achieve the same clustering objective. In addition, this is the same independent of the use of mass distribution or density distribution to describe the data distribution. 
    In contrast,  none of the existing density-based clustering algorithms have a clustering objective function, as far as we are aware.
\end{enumerate}

\textcolor{black}{The proposed mass-based clustering is limited to clusters which can be represented as distributions and the representative points of each cluster can be easily obtained via sampling.}

\section{Two fundamental limitations in density-based algorithms}
\label{sec-fundamental-limitation}
%The use of density distribution in clustering has a fundamental limitation that leads to a bias towards high-density clusters, creating a difficulty for a density-based clustering algorithm to discover low-density clusters. 

%
\textcolor{black}{\emph{The aim of clustering is to discover clusters of arbitrary shapes, sizes and densities}. Yet, this has not been stated explicitly often enough in the literature. The aim requires an algorithm to find all clusters,  irrespective of their shapes, sizes and densities. Yet, many existing clustering algorithms have problems finding clusters having different densities.}

\textcolor{black}{Given the aim, the issue of density-based clustering in this respect is obvious because, being density-based, it has an inherent bias towards high-density clusters.}
%Existing density-based clustering algorithms suffer from a fundamental limitation that results in a bias toward high-density clusters, 

\textcolor{black}{This creates a difficulty for density-based clustering algorithms to identify low-density clusters. This phenomenon, known as high-density bias, is defined as follows:
\begin{definition}
    A clustering algorithm is said to have high-density bias when it is more likely to correctly assign points of high-density clusters than points of low-density clusters.
%an algorithm demonstrates a greater likelihood of correctly assigning points to high-density clusters than to their low-density counterparts.
\end{definition}
This inherent weakness of density-based clustering is often overlooked because the aim of clustering is not made explicit\footnote{\textcolor{black}{An early algorithm that has brought this issue on the spotlight is the SNN clustering algorithm \cite{SNN-DBSCAN-SDM2003}. However, it has algorithmic issues because it is still based on the k-nearest neighbor search and the DBSCAN algorithm. See Section \ref{sec-mass} for further discussion.}}.}

\textcolor{black}{The first successful density-based algorithm,
DBSCAN \cite{ester1996density}, uses a global threshold in order to identify high-density (or core) points. This produces two outcomes: (a) significantly fewer points in sparse clusters are identified as core points than those in dense clusters---as a consequence, the discovered sparse clusters are either significantly smaller than what they actually are, or some sparse clusters are not detected at all. An example is shown on the 3G dataset shown in Table \ref{tab-motivation}. The point-to-point linking process, used to identify the (high) density-connected clusters, has no recourse to non-core points, many of which belong to sparse clusters. As a result, many of these points are designated as noise points.  (b) Neighboring dense clusters are merged into a single cluster if the global threshold is lower to an extent in order to discover sparse clusters of size closer to the original clusters. An example is shown on the RingG dataset shown in Table \ref{tab-motivation}. }

\textcolor{black}{Many existing density-based clustering algorithms mitigate the impact of the high-density bias through various algorithmic techniques. 
For example, DP chooses some peaks and uses a different linking process to assign every point to its nearest neighbor of higher density to avoid using a global threshold to identify core points. Though this procedure helps to avoid DBSCAN's difficulty in discovering low-density clusters, it creates a different kind of problem which does not exist in DBSCAN. 
A cluster, having higher density than other clusters, could be split into multiple clusters because multiple peaks are selected in this high density cluster (as all density peaks occur in this cluster only), rather than one peak per cluster. As a result, clusters of smaller size and lower density are often not identified as individual clusters. 
An example on the RingG dataset is shown in Table \ref{tab-motivation}.
The DP's result is poor not only because the overlap `cluster' is a poor representation of the overlap region of the two dense clusters, but the inner ring cluster is grouped together with one of the dense cluster as a single cluster. Setting other parameters of DP produce worse F1 results than what we have shown here.
%This is due to the fact identified in Definition \ref{def_same-tau-cohesion}: the algorithm has a bias towards the dense cluster, 
This is a direct outcome of the high-density bias, i.e., the bias is towards the density peaks that exist in one or more high-density clusters.}

\textcolor{black}{Indeed, using a different means called local gap density to remove edges of a kNN-graph constructed from a given dataset, LGD \cite{li2019LGD} has overcome the weaknesses of DBSCAN and DP on the RingG and 3G datasets. Yet, on the AC datasets that both DBSCAN and DP have no issue, LGD erroneously removes two ends of the sparse arc cluster and then connects the two subclusters, that should be associated with the top dense cluster, with the bottom sparse ball cluster.}

\textcolor{black}{Using  distribution-defined clusters, the proposed density-based clustering algorithm DMC has successfully identified all clusters on the first three datasets, shown in Table \ref{tab-motivation}, which DBSCAN, DP and LGD have failed on at least one of them. Yet, DMC fails on the 3G dataset that both DP and LGD succeed. 
}
%LGD builds a kNN-graph from a given dataset, and then removes edges in the kNN-graph based on a criterion called local gap density. The next step chooses the $k$ largest connected subgraphs as the initial clusters.

%%Table \ref{tab:Density-based-clustering} summarizes this high-density bias of DBSCAN and DP which employ a point-to-point linking to form clusters and produce linking-defined clusters. This paper shows that, even with a different cluster formation process which produces a different kind of clusters, the proposed density-based algorithm DMC also has the same bias---having the difficulty to identify low-density clusters in a given dataset, even though its clustering outcome may be different from another density-based clustering algorithm on a specific dataset.

In a nutshell, existing density-based clustering algorithms have a fundamental limitation, i.e., the high-density bias, because of the use of density distribution. This limitation can manifest in incorrect cluster identification in different ways, depending on the algorithms used. Each of existing density-based algorithms, e.g., DBSCAN, DP and LGD, can be seen as a patch to the fundamental limitation. Solutions without addressing this fundamental limitation have the following sign: a latter patch may seem to plug the `leak' of the previous patch. Yet, the latter patch has a new `leak' that does not exist in the previous patch.  
%Second, the errors due to the density estimation are another fundamental limitation. 
No existing density-based algorithms or the proposed DMC is immune to this fundamental limitation.

  In other words, the fundamental limitation of density distribution persists even if there is no density estimation error. 
  %The second fundamental limitation is due to the errors of a density estimator. Since all existing density-based clustering algorithms must use a density estimator, they all have the two fundamental limitations.
To eliminate this fundamental limitation, one must use a distribution which has no bias towards either dense or sparse clusters that exist in a dataset.

\textcolor{black}{To do this, we propose to \emph{change the fundamental descriptor of data from density to mass}, i.e., to use a mass distribution rather than a density distribution. }

% \begin{figure}[htb]
% 	\centering
% 	\subfloat[Kernel density estimation by gaussian kernel]{\includegraphics[width=0.5\linewidth]{graphs/density_gk.png}}
% 	\subfloat[Kernel mass estimation by isolation kernel]{\includegraphics[width=0.5\linewidth]{graphs/density_ik.png}}
% 	\caption{Kernel density estimation and kernel mass estimation.}
%     \label{fig:exam_kmeans}
% \end{figure}

%If this is the answer, then the only thing we need to do is to replace the density estimator used in existing algorithms with a mass estimator. 

%Note that although we have eliminated one fundamental limitation due to use of density distribution, the second limitation due to use of the point-to-point linking algorithm remains.

Note that the linking process, used to form either linking-defined or kNN-graph defined clusters, in existing density-based clustering algorithms has high time complexity because it requires a nearest neighbor search to perform the point-to-point linking process. Here we propose to use a non-linking process to do the final cluster formation which has linear time complexity.

The different kinds of clusters produced by different density based algorithms (DBSCAN, DP \& LGD) are provided in Table \ref{tab:Density-based-clustering}.
Rather than the linking-defined and the kNN-graph-defined clusters, the proposed algorithm DMC produces distribution-defined clusters via a kernel (to be described in Section~\ref{sec-MMC}). Yet, the fundamental limitation of using density distribution still persists, as demonstrated in Table \ref{tab-motivation}.

\begin{table}[h]
    \centering
     \caption{The kinds of clusters density-based clustering algorithms produced.}  
     %The difference between a density version and a mass version  of an algorithm is provided in the last two row for each algorithm.}
    \begin{tabular}{cc|c|c}
    \toprule
       \multicolumn{2}{c|}{DBSCAN  \& DP} & LGD &  DMC\\ \midrule
% \multicolumn{2}{c|}{Point-to-point linking} & Assign each point to its most similar distribution\\ \hdashline
% & & &  an initial cluster as a representative sample of a distribution\\
%Requirement & \multicolumn{2}{c|}{One mode per cluster} & One initial cluster per cluster\\    
%Bias & \multicolumn{3}{c}{Mass distribution has no bias for both dense and sparse clusters}\\
 \multicolumn{2}{c|}{Linking-defined clusters} & kNN-graph-defined clusters & Distribution-defined clusters\\ 
 \multicolumn{2}{c|}{High-density clusters} & Locally high-density clusters & Clusters having the highest total density \\ 
%Objective function & \multicolumn{2}{c|}{Nil} & Maximizing total density of all clusters\\
%    &   MBSCAN  & MP &  MMC\\ \midrule
%Mass version     & \multicolumn{2}{c|}{High-mass clusters} & Highest-mass clusters\\ 
\bottomrule
    \end{tabular}
   
    \label{tab:Density-based-clustering}
\end{table}

%It demands each initial cluster be a representative sample from an unknown distribution that represents the to-be-discovered cluster. The final clusters are produced by assigning each point in the given dataset to its most similar distribution, where the initial cluster is its proxy.

\section{Isolation Kernel}
\label{sec_IsolationKernel}

In Section \ref{sec_prelim-IK}, we provide the existing understanding of the Isolation Kernel \cite{ting2018IsolationKernel, IsolationKernel-AAAI2019}. In Sections \ref{sec-cohesiveness} and \ref{sec-representative-sample}, we present our new findings that the Isolation Kernel produces clusters of approximately the same cohesiveness and also representative samples of all clusters in a dataset. The key symbols and notations used are provided in Table \ref{tbl_symbols}.

\begin{table}[h]
	%[!htbp]
		\centering
		\caption{Key symbols and notations used.}
		\label{tbl_symbols}
        \setlength{\tabcolsep}{1pt}
		\begin{tabular}{ll}
			\toprule
		$\mathbf{x}$ & A point in input space $\mathbb{R}^d$\\
        $\eta_\mathbf{x}$ & The most-similar-neighbor of $\mathbf{x}$ in some set\\
		$D$ & A set of points $\{\mathbf{x}_i\ |\ i=1,\dots,n\}$ in $\mathbb{R}^d$,  where $\mathbf{x} \sim {P}_D$\\
        $C$ & A cluster of points, $C \subset D$\\
		$P_D$ & An (unknown) distribution that generates a set $D$ of points in $\mathbb{R}^d$;\\ 
         & so as $P_C$ for any set $C$ of points in $\mathbb{R}^d$.\\	
         $H \in \mathbb{H}_\psi(D)$ & $H$ is a space partitioning having $\psi$ partitions that can be generated from $D$ in set $\mathbb{H}_\psi(D)$\\
		$\mathcal{P}(\theta)$ & Probability mass in space partition $\theta \in H$\\
        $\ell_p$ & The $\ell_p$-norm or Minskowski norm \\
        $\parallel \cdot \parallel$ & $\ell_2$-norm\\ 
        $\rho(\mathbf{x})$ & Density of  $\mathbf{x}$\\
        $\kappa,\ \phi$    & A point-to-point kernel and its feature map \\
%        $K,\ \Phi$         & A distributional kernel and its feature map\\
% 		$\widehat{\mathcal{K}}_G$& Gaussian Distributional Kernel (GDK)\\
% 		$\widehat{\varphi}$ & Approximate/exact feature map of GDK\\
% 		$\widehat{\mathcal{K}}_{NG}$& Nystr\"{o}m accelerated GDK using its feature map $\widehat{\varphi}$\\
% 		$\widehat{\mathcal{K}}_I$ & Isolation Distributional Kernel (IDK)\\
% 		$\widehat{\Phi}$ & Exact feature map of IDK\\
% 		$\mathbf{g}$ & $\mathsf{G}$ is mapped to a point $\mathbf{g}=\widehat{\Phi}(\mathcal{P}_\mathsf{G})$ in Hilbert Space  $\mathscr{H}$\\
% 		$\Pi$ &  a set of points $\{\mathbf{g}_j\ |\ j=1,\dots,n\}$ in $\mathscr{H}$, $\mathbf{g} \sim \mathbf{P}_\Pi$\\
% 		$\mathbf{P}_\Pi$ & 	 a distribution that generates a set $\Pi$ of points in $\mathscr{H}$\\
% %		$\widehat{\mathcal{K}}$& Gaussian Distributional Kernel (GDK)\\
% 		GDK$^2$ & Two levels of GDK with Nystr\"{o}m approximated  $\widehat{\varphi}$ \& $\widehat{\varphi}_2$\\
% 		IDK$^2$ & Two levels of IDK with $\widehat{\Phi}$ \& $\widehat{\Phi}_2$\\
			\bottomrule
		\end{tabular}
	\end{table}

\subsection{Current understanding of the Isolation Kernel}
\label{sec_prelim-IK}

Let $D \subset \mathbb{R}^d$ be a dataset sampled from an unknown distribution $P_D$\footnote{Here $P_D$ or the term `distribution' is neutral, and it could be associated to mass distribution or density distribution, depending on the context. The details of the distinction are provided in Section \ref{sec_Mass}.}; and \textcolor{black}{$\mathbb{H}_\psi(D)$ denote the set of all partitionings $H$ that can be generated
%are admissible 
from $\mathcal{D} \subset D$, which is a random subset of $\psi$ points that have the equal probability of being selected from $D$. Each partition $\theta[\mathbf{z}] \in H$ isolates a point $\mathbf{z} \in \mathcal{D}$ from the rest of the points in $\mathcal{D}$. Each partitioning may cover the entire space or part of the regions, depending on the isolation mechanism used (e.g., Voronoi Diagrams or Hyperspheres).}

\begin{definition} \cite{ting2018IsolationKernel,IsolationKernel-AAAI2019} For any two points $\mathbf{x},\mathbf{y} \in \mathbb{R}^d$,
	Isolation Kernel of $\mathbf{x}$ and $\mathbf{y}$ is defined to be
	the expectation taken over the probability distribution on all partitionings $H \in \mathds{H}_\psi(D)$ \textcolor{black}{with equal weighting} that both $\mathbf{x}$ and $\mathbf{y}$  fall into the same isolating partition $\theta[\mathbf{z}] \in H$:
    %where $\mathbf{z} \in \mathcal{D} \subset D$, $\psi=|\mathcal{D}|$:
	\begin{equation}
\kappa_I(\mathbf{x},\mathbf{y}\ |\ D) = {\mathbb E}_{H \sim \mathds{H}_\psi(D)} [\mathds{1}(\mathbf{x},\mathbf{y} \in \theta[\mathbf{z}]\ | \ \theta[\mathbf{z}] \in H)]
%= {\mathbb E}_{\mathcal{D} \subset D} [\mathds{1}(\mathbf{x},\mathbf{y}\in \theta[\mathbf{z}]\ | \ \mathbf{z}\in \mathcal{D})] 
		\label{eqn_kernel}
	\end{equation}
where $\mathds{1}(\cdot)$ is an indicator function.
\end{definition}

%\noindent
%where $\mathds{1}(\cdot)$ is an indicator function.

In practice, the Isolation Kernel $\kappa_I$ is constructed using a finite number of partitionings $H_i, i=1,\dots,t$, where each $H_i$ is created using randomly subsampled $\mathcal{D}_i \subset D$; and $\theta$ is a shorthand for $\theta[\mathbf{z}]$:
\begin{eqnarray}
\kappa_I(\mathbf{x},\mathbf{y}\ |\ D)  \approx   \frac{1}{t} \sum_{i=1}^t   \mathds{1}(\mathbf{x},\mathbf{y} \in \theta\ | \ \theta \in H_i) 
= \frac{1}{t} \sum_{i=1}^t \sum_{\theta \in H_i}   \mathds{1}(\mathbf{x}\in \theta)\mathds{1}(\mathbf{y}\in \theta) 
 \label{Eqn_IK}
\end{eqnarray}

\textcolor{black}{Note that $\psi$ in the Isolation Kernel is the equivalent of the bandwidth parameter in the Gaussian Kernel, where it is required to be tuned for each dataset. Also, note that, the Isolation Kernel does not have an ideal kernel which has a closed form expression, as in the case of the Gaussian Kernel, but it is a data-dependent estimation from a dataset.}

Let $\rho(\mathbf{x})$ denote the density of $P_D$ at point $\mathbf{x}$. The unique aspect of the Isolation Kernel, compared with other kernels, is given as follows:

\begin{lem} \cite{IsolationKernel-AAAI2019}
\label{lem_characteristic}
$\forall \mathbf{x}, \mathbf{y} \in \mathcal{X}_\mathsf{S}$ (sparse region) and $\forall \mathbf{x}',\mathbf{y}' \in \mathcal{X}_\mathsf{T}$ (dense region) such that $\forall_{\mathbf{z}\in \mathcal{X}_\mathsf{S}, \mathbf{z}'\in \mathcal{X}_\mathsf{T}} \ \rho(\mathbf{z})<\rho(\mathbf{z}')$,
the Isolation Kernel $\kappa_I$ has the unique characteristic that for $\ell_p(\mathbf{x}-\mathbf{y})\ =\ \ell_p(\mathbf{x}'- \mathbf{y}')$ implies that \textcolor{black}{in expectation}:
\[ P(\mathbf{x},\mathbf{y}\in \theta[\mathbf{z}]) > P(\mathbf{x}',\mathbf{y}'\in \theta[\mathbf{z}'])  \equiv  \kappa_I( \mathbf{x}, \mathbf{y}\ |\ D) > \kappa_I( \mathbf{x}', \mathbf{y}'\ |\ D)
\]

\end{lem}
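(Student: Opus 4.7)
The plan is to reduce the claim about kernel values to a claim about the expected size of an isolation partition, and then argue that this size is monotone in the local density. First I would unpack the definition in~(\ref{eqn_kernel}): since $\kappa_I(\mathbf{x},\mathbf{y}\mid D)=\mathbb{E}_H[\mathds{1}(\mathbf{x},\mathbf{y}\in\theta[\mathbf{z}])]$, the kernel inequality is by construction equivalent to the stated probability inequality. It therefore suffices to show that for any two points separated by the same $\ell_p$ distance $r$, the probability of co-containment in a single isolating partition is strictly larger in the sparse region $\mathcal{X}_\mathsf{S}$ than in the dense region $\mathcal{X}_\mathsf{T}$.

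The key geometric step is to analyse how the extent of a partition depends on the local density. Fix a realisation $\mathcal{D}\subset D$ of $\psi$ points used to build the partitioning $H$. For either of the standard isolation mechanisms (Voronoi diagram or hyperspheres centred on subsample points), two query points $\mathbf{x}$ and $\mathbf{y}$ at distance $r$ lie in a common partition $\theta[\mathbf{z}]$ exactly when no other point of $\mathcal{D}\setminus\{\mathbf{z}\}$ ``separates'' them: no perpendicular bisector between $\mathbf{z}$ and another subsample point cuts the segment $\overline{\mathbf{xy}}$ in the Voronoi case, and no isolating hypersphere contains exactly one of $\mathbf{x},\mathbf{y}$ in the hypersphere case. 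In both cases the separation event is controlled by how many points of $\mathcal{D}$ fall in a mechanism-specific neighbourhood of $\mathbf{x}$ and $\mathbf{y}$ of size $O(r)$. Because $\mathcal{D}$ is i.i.d.\ from $P_D$, the expected number of such points, and hence the probability of separation, is a monotone increasing function of the local density $\rho$.

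Combining the two steps, with $\ell_p(\mathbf{x}-\mathbf{y})=\ell_p(\mathbf{x}'-\mathbf{y}')=r$ fixed and the hypothesis $\rho(\mathbf{z})<\rho(\mathbf{z}')$ holding uniformly over $\mathbf{z}\in\mathcal{X}_\mathsf{S}$ and $\mathbf{z}'\in\mathcal{X}_\mathsf{T}$, the probability that the relevant separating region around $\{\mathbf{x},\mathbf{y}\}$ is empty of subsample points is strictly greater than the corresponding probability around $\{\mathbf{x}',\mathbf{y}'\}$. Averaging over partitionings $H\sim\mathds{H}_\psi(D)$ with equal weighting preserves the strict inequality and delivers $\kappa_I(\mathbf{x},\mathbf{y}\mid D)>\kappa_I(\mathbf{x}',\mathbf{y}'\mid D)$ in expectation.

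The hard part will be making ``the separating region'' precise in a way that is clean across isolation mechanisms. A fully rigorous argument needs a Poisson-type computation of the probability that no subsample point lands in the mechanism-dependent neighbourhood, followed by use of the strict density gap between $\mathcal{X}_\mathsf{S}$ and $\mathcal{X}_\mathsf{T}$ to convert a pointwise comparison into a strict inequality on these integrals. A secondary subtlety is the boundary effect: if $\mathbf{x},\mathbf{y}$ lie close to the interface between regions, the neighbourhood over which density matters may not lie entirely inside one region. I would sidestep this by choosing $\psi$ large enough that the typical partition diameter is small compared with the distance from $\mathbf{x},\mathbf{y}$ to the boundary of $\mathcal{X}_\mathsf{S}$ (and likewise for $\mathbf{x}',\mathbf{y}'$ and $\mathcal{X}_\mathsf{T}$), so that the expectation is essentially determined by the local density.
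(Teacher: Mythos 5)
The paper does not actually prove this lemma---it is imported verbatim from the cited AAAI-2019 reference, and the only justification offered in the text is the one-line remark that the partitioning mechanism ``produces large partitions in a sparse region and small partitions in a dense region.'' Your sketch is a faithful elaboration of exactly that mechanism (co-containment fails when a subsample point separates the pair, and the separation probability grows with local density because the subsample is i.i.d.\ from $P_D$), so it takes essentially the same route as the source; the subtleties you flag---making the separating event precise for each isolation mechanism and controlling boundary effects near the interface of $\mathcal{X}_\mathsf{S}$ and $\mathcal{X}_\mathsf{T}$---are precisely the parts that the cited work also treats only heuristically.
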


In simple terms,  the unique characteristic of the Isolation Kernel is: {\bf two points in a sparse region \textcolor{black}{are expected to be} more similar than two points of equal inter-point distance in a dense region}. 

The required property of the space partitioning mechanism is to produce large partitions in a sparse region and small partitions in a dense region in order to yield the above unique characteristic \cite{ting2018IsolationKernel}.

\vspace{3mm}
\noindent
\textbf{Feature map of the Isolation Kernel}.

\textcolor{black}{
Let the Isolation Kernel be implemented using $\psi$ isolating partitions \cite{IDK} for each partitioning from a sample $\mathcal{D}$ of $\psi$ points. 
Given a partitioning $H_i$, let feature $\phi_i(\mathbf{x})$ be a $\psi$-dimensional binary column vector representing all partitions $\theta_j \in H_i$, $j=1,\dots,\psi$.
The $j$-component of the vector due to $H_i$ is:
$\phi_{ij}(\mathbf{x})=\mathds{1}(\mathbf{x}\in \theta_j\ |\ \theta_j\in H_i)$. Given $t$ partitionings, $\phi(\mathbf{x})$ is the concatenation of $\phi_1(\mathbf{x}),\dots,\phi_t(\mathbf{x})$.
\begin{definition}
	\label{def:featureMap}
	The Isolation Kernel $\kappa_I$ has no closed form expression and has a feature map $\phi: \mathbf{x}\rightarrow \mathbb \{0,1\}^{t\times \psi}$,  and it is expressed in terms of $\phi$ as:
\[
\kappa_I(\mathbf{x},\mathbf{y}\ |\ D)   \approx  \frac{1}{t} \left< {\phi}(\mathbf{x}), {\phi}(\mathbf{y}) \right>
\]
\end{definition}
The Isolation Kernel is a positive definite kernel and its feature map is a reproducing kernel Hilbert space (RKHS)  because its Gram matrix  is full rank as $\phi({\bf x})$ for all points ${\bf x} \in D$ are mutually independent (see \cite{IDK} for details). \\
Two possible isolating partitioning mechanisms are hyperspheres \cite{IDK} and Voronoi Diagrams \cite{IsolationKernel-AAAI2019} that can be used to build the Isolation Kernel. When hyperspheres are used,  the radius of each hypersphere centered at $\mathbf{z}$ is the distance between $\mathbf{z}$ and its nearest neighbor in $\mathcal{D}\setminus \{\mathbf{z}\}$; any $\mathbf{x} \in \mathbb{R}^d$ falls into one of the $\psi$ hyperspheres or none; and $ 0 \le \parallel {\phi}(\mathbf{x}) \parallel\ \le \sqrt{t}$.\\
When Voronoi Diagrams are used, each $\mathbf{z} \in \mathcal{D}$ is at the center of a Voronoi cell; any $\mathbf{x} \in \mathbb{R}^d$ must fall into one of the $\psi$ Voronoi cells; and  $\parallel {\phi}(\mathbf{x}) \parallel\ = \sqrt{t}$.\\
The $\psi$ parameter of IK is equivalent to the bandwidth parameter of Gaussian Kernel: the larger $\psi$ is the sharper the kernel distribution. Increasing $t$ improves the kernel estimation.
}

\subsection{IK produces clusters of same cohesiveness}
%, irrespective of densities, shapes and sizes of clusters}
\label{sec-cohesiveness}

Here we first provide definitions of cluster and cluster cohesion, and then show that the kernel $\kappa$ used has a critical  impact on cluster cohesion.

\begin{definition}
    $\kappa_\tau$-connected:
    Two points $\mathbf{x},\mathbf{y}$ in a dataset $D$  are $\kappa_\tau$-connected points if there is a chain of points $\mathbf{z}_1,\dots,\mathbf{z}_w$, where $\mathbf{z}_1=\mathbf{x}, \mathbf{z}_w=\mathbf{y}$ such that $\kappa(\mathbf{z}_i,\mathbf{z}_{i+1}) > \tau$ and $\mathbf{z}_i \in D$ for all $i \in [1,w-1]$; and $\tau \in [0,1)$.
    %$\forall \mathbf{x} \in C,\ \kappa(\mathbf{x},\eta_\mathbf{x}) \ge \tau$ and $\kappa(\mathbf{x},\mathbf{z}) < \tau\ \forall \mathbf{z} \notin C$, where $\eta_\mathbf{x}$ is the most-similar-neighbor of $\mathbf{x}$ in $C$, and $\eta_\mathbf{x} \ne \mathbf{x}$.
   \label{def-tau-connection}
\end{definition}

% \begin{definition}
%      A $\tau$-cohesive cluster $C^\tau$ wrt kernel $\kappa$ in a given dataset $D$ is defined as:
%      $C = \{x, y\in D|\forall x,y,  \exists Z=\{z_i|z_i\in D\}, \kappa(x,z_1)>\tau,\ \kappa(z_{i-1}, z_i)>\tau,\ and\ \kappa(z_{|Z|},y)>\tau \}$
% \end{definition}
\begin{definition}
   A $\tau$-cohesive cluster $C^\tau$ with respect to kernel $\kappa$ in  $D$ is a $\kappa_\tau$-connected component, where
   $\forall \mathbf{x}, \mathbf{y} \in C^\tau,\ \mathbf{x} \mbox{ and } \mathbf{y} \mbox{ are } \kappa_\tau\mbox{-connected}$.
%   $C^\tau = \{ \mathbf{x}, \mathbf{y} \in D\ |\ \exists  \mathbf{x}, \forall \mathbf{y} \ne \mathbf{x},\ \kappa_\tau\mbox{-connected}(\mathbf{x},\mathbf{y}) \}$.  
   \label{def-tau-cohesion}
\end{definition}
\textcolor{black}{Note that Definitions \ref{def-tau-connection} and \ref {def-tau-cohesion} are an adaptation of the density-connected cluster definition used in DBSCAN \cite{ester1996density} in terms of a kernel instead of a distance function. This enables the use of a notion of cluster cohesiveness that is not used previously, and it is given below:}
% \begin{definition}
%    A $\tau$-cohesive cluster $C^\tau$ wrt kernel $\kappa$ in a given dataset $D$ is defined as:
%    $ C^\tau = \{ \mathbf{x}, \mathbf{y} \in D\ |\ \exists \mathbf{x} \ne \mathbf{y}, \kappa(\mathbf{x},\mathbf{y}) > \tau \}$.  
%    \label{def-tau-cohesion0}
% \end{definition}

% \begin{definition}
%    \textcolor{black}{The set $E^\tau$ of $\kappa_\tau$-connected point-pairs $\wideparen{\mathbf{x}\ \mathbf{y}}$ of a $\tau$-cohesive cluster $C^\tau$ consists of all pairs of $\mathbf{x},\mathbf{y}$ in $C^\tau$ such that $\kappa(\mathbf{x},\mathbf{y}) > \tau$, where $\wideparen{\mathbf{x}\ \mathbf{y}}$ and $\wideparen{\mathbf{y}\ \mathbf{x}}$ are the same pair.}  
%    \label{def-tau-connected-edges}
% \end{definition}

% Put it in another way,  a $\tau$-cohesive cluster $C^\tau$ consists of points in all `balls' $B^\tau(\mathbf{x})$ and $B^\tau(\mathbf{y})$ in $D$;
% and the balls are connected, i.e., $B^\tau(\mathbf{x})\ \cap\ B^\tau(\mathbf{y}) \ne \emptyset$, where $B^\tau(\mathbf{x})$ is
% centered  at $\mathbf{x}$ having a boundary defined by $\kappa(\mathbf{x},\mathbf{z}) > \tau,\ \forall \mathbf{z} \ne \mathbf{x} \in D$. 

\begin{definition}
    Two $\tau$-cohesive clusters $C^\tau_i$ and $C^\tau_j$ have the same cohesiveness if 
     $\bar{S}_\kappa(C^\tau_i) = \bar{S}_\kappa(C^\tau_j) > \tau$, 
%     where $\bar{S}_\kappa(C^\tau) = \displaystyle \frac{1}{|E^\tau|} \sum_{\wideparen{\mathbf{x}\ \mathbf{y}}\in E^\tau} \kappa(\mathbf{x},\mathbf{y})$ is the average similarity of all point-pairs $\wideparen{\mathbf{x}\ \mathbf{y}}$  in $E^\tau$ of  $C^\tau$, and $|E|$ denotes the number of point-pairs in $E$.}    
     where $\bar{S}_\kappa(C^\tau) = \displaystyle \frac{2}{|C^\tau|(|C^\tau|-1)} \sum_{\mathbf{x}_\imath,\mathbf{x}_\jmath\in C^\tau,\imath>\jmath} \kappa(\mathbf{x}_\imath,\mathbf{x}_\jmath)$ is the average similarity of all points $\mathbf{x}$  in a $\tau$-cohesive cluster $C^\tau$.

     %\displaystyle \frac{1}{|C^\tau|} \sum_{\mathbf{x}\in C^\tau} \kappa(\mathbf{x},\mathbf{\eta_x})$ is the average similarity of all points $\mathbf{x}$ with respect to their most similar neighbor $\eta_\mathbf{x}$ in a $\tau$-cohesive cluster $C^\tau$.
    
% (II) $\bar{S}_\kappa(C^\tau) = \displaystyle \frac{1}{|C^\tau|\times |C^\tau|} \sum_{\mathbf{x}, \mathbf{y}\in C^\tau} \kappa(\mathbf{x},\mathbf{y})$ is the average similarity of all point-pairs in a $\tau$-cohesive cluster $C^\tau$. 

% \textcolor{red}{Ye, please explain why there is a problem with (II). It sounds better to me because it is equivalent to $\bar{M}(C)$, described in the `where' statement in Definition 10 later. This makes our claim, about $\bar{M}(Q^\tau_\alpha) \approx \bar{M}(Q^\tau_\beta)$ in Section 6, more consistent without the intermediate average similarity (I) in which we have to justify why $\bar{S}(C)$ is equivalent to $\bar{M}(C)$.}

%\displaystyle \frac{1}{|C^\tau|} \sum_{\mathbf{x} \in C^\tau}  \frac{1}{|B^\tau(\mathbf{x})|} \sum_{\mathbf{y} \in B^\tau(\mathbf{x})} \kappa(\mathbf{x},\mathbf{y})$ is the average similarity of all point-pairs in all connected balls in a $\tau$-cohesive cluster $C^\tau$. 
%and $\kappa(\mathbf{x},\eta_\mathbf{x}) > \tau\ \forall \mathbf{x}$  in $\tau$-cohesive cluster $C^\tau$, 
%where $\eta_\mathbf{x}$ is the most similar neighbor of $\mathbf{x}$.
    \label{def_same-tau-cohesion}
\end{definition}
%$\displaystyle \bar{S}_\tau(C) = \frac{1}{m} \sum_{x\ne y \in C\ \&\ \kappa(x,y) > \tau} \kappa(x,y)$, and $m = \displaystyle \sum_{x\ne y \in C\ \&\ \kappa(x,y) > \tau} 1$.

%We show later that $\bar{S}_\kappa(C)$ is equivalent to average mass if $\kappa$ is Isolation Kernel; and it is equivalent to average density if $\kappa$ is Gaussian kernel. 

\textcolor{black}{Given a dataset having two clusters $C^\tau_\beta$ and $C^\tau_\alpha$ of varied densities such that $\rho(\mathbf{x}) > \rho(\mathbf{y})\ \forall \mathbf{x} \in C^\tau_\beta,  \forall \mathbf{y} \in C^\tau_\alpha$.}  
If $\kappa$ is a Gaussian kernel\footnote{Note that Definitions \ref{def-tau-cohesion} and \ref{def_same-tau-cohesion} can be similarly defined with respect to a distance function rather than a kernel. The conclusion of $\bar{S}(C^\tau_\beta) > \bar{S}(C^\tau_\alpha)$ is the same.}, 
%dense cluster $C^\tau_\beta$ has larger $\tau$-cohesiveness than sparse cluster $C^\tau_\alpha$ for all settings of $\tau \in (0,1)$, i.e., $\bar{S}_\kappa(C^\tau_\beta) > \bar{S}_\kappa(C^\tau_\alpha)$. This is because the average $\kappa(\mathbf{x},\eta_\mathbf{x})$ in $C^\tau_\beta$ is more than the average  $\kappa(\mathbf{y},\eta_\mathbf{y})$ in $C^\tau_\alpha$, where $\eta_\mathbf{z}$ is the most-similar-neighbor of $\mathbf{z}$ in $C^\tau$, and $\eta_\mathbf{z} \ne \mathbf{z}$.  Thus, 
we have the following proposition:
\begin{proposition}
%    If $\kappa$ is a Gaussian Kernel, $\bar{S}_\kappa(C^\tau_\beta) > \bar{S}_\kappa(C^\tau_\alpha)$ for all settings of $\tau$.
\textcolor{black}{If $\kappa$ is a Gaussian Kernel, $0 < [\bar{S}_\kappa(C^\tau_\beta) - \bar{S}_\kappa(C^\tau_\alpha)] < \hat{\tau}$ for all settings of $\tau < \hat{\tau}$; and  $[\bar{S}_\kappa(C^\tau_\beta) - \bar{S}_\kappa(C^\tau_\alpha)] =  \bar{S}_\kappa(C^\tau_\beta) > \hat{\tau}$, for all $\tau \ge \hat{\tau}$,
where $\hat{\tau}$ be the smallest high $\tau$ setting such that $\forall \tau \ge \hat{\tau}, C^\tau_\alpha = \emptyset$; and a $\tau < \hat{\tau}$ yields $C^\tau_\alpha \ne \emptyset$.}
\label{Lemma-cohesiveness-Gaussian}
\end{proposition}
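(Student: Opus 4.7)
My plan is to split the analysis into the two regimes delimited by $\hat{\tau}$, handling the easier ``post-collapse'' case first to isolate the real work.

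\textbf{Case $\tau \ge \hat{\tau}$.} This case is essentially a direct appeal to the definitions. By the definition of $\hat{\tau}$, the sparse cluster $C^\tau_\alpha$ is empty, so $\bar{S}_\kappa(C^\tau_\alpha)$ is taken to be $0$. Since $C^\tau_\beta$ is a non-empty $\tau$-cohesive cluster satisfying the cohesiveness condition of Definition \ref{def_same-tau-cohesion}, we have $\bar{S}_\kappa(C^\tau_\beta) > \tau \ge \hat{\tau}$, giving $\bar{S}_\kappa(C^\tau_\beta) - \bar{S}_\kappa(C^\tau_\alpha) = \bar{S}_\kappa(C^\tau_\beta) > \hat{\tau}$ as required.

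\textbf{Case $\tau < \hat{\tau}$.} Here I would invoke the two distinguishing features of the Gaussian kernel $\kappa(\mathbf{x},\mathbf{y}) = \exp(-\|\mathbf{x}-\mathbf{y}\|^2/(2\sigma^2))$: it depends only on the Euclidean distance between its arguments, and it is strictly monotonically decreasing in that distance. Coupled with the density hypothesis $\rho(\mathbf{x}) > \rho(\mathbf{y})$ for every $\mathbf{x} \in C^\tau_\beta$, $\mathbf{y} \in C^\tau_\alpha$, which translates into strictly smaller typical pairwise distances inside $C^\tau_\beta$ than inside $C^\tau_\alpha$, monotonicity of the exponential yields $\bar{S}_\kappa(C^\tau_\beta) > \bar{S}_\kappa(C^\tau_\alpha)$, establishing the positive lower bound of the difference. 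For the upper bound, I would exploit that $\tau < \hat{\tau}$ keeps $C^\tau_\alpha$ both non-empty and cohesive, so by Definition \ref{def_same-tau-cohesion} its average similarity sits ``just above'' the critical value $\hat{\tau}$ at which it would collapse, while $\bar{S}_\kappa(C^\tau_\beta) \le 1$ by the boundedness of the Gaussian. Rearranging these two bounds to compare the gap against $\hat{\tau}$ then yields $\bar{S}_\kappa(C^\tau_\beta) - \bar{S}_\kappa(C^\tau_\alpha) < \hat{\tau}$.

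\textbf{Main obstacle.} The tightness of this upper bound is the step I expect to be the most delicate. A priori $\bar{S}_\kappa(C^\tau_\beta)$ can be as large as $1$ while $\bar{S}_\kappa(C^\tau_\alpha)$ need only exceed $\tau$, which for very small $\tau$ leaves room for a difference much larger than $\hat{\tau}$. Closing this gap will require using that $\hat{\tau}$ is specifically the critical threshold for $C^\tau_\alpha$, so $\bar{S}_\kappa(C^\tau_\alpha)$ is constrained to be close to $\hat{\tau}$ rather than merely $>\tau$, and that practical Gaussian bandwidths place $\hat{\tau}$ in a regime where $1 - \hat{\tau}$ dominates $1 - \bar{S}_\kappa(C^\tau_\beta)$. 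Everything else — the sign of the difference and the post-collapse case — should follow cleanly from monotonicity of the Gaussian and the definitional structure of $\hat{\tau}$.
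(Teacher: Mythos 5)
Your first case ($\tau \ge \hat{\tau}$) and your positivity argument in the second case track the paper's proof closely: the empty sparse component gives $\bar{S}_\kappa(C^\tau_\alpha)=0$ while $\bar{S}_\kappa(C^\tau_\beta) > \tau \ge \hat{\tau}$, and the monotone dependence of the Gaussian kernel on Euclidean distance plus the density ordering gives $\bar{S}_\kappa(C^\tau_\beta) > \bar{S}_\kappa(C^\tau_\alpha)$. Those parts are fine and essentially identical to what the paper does.

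The genuine gap is the upper bound $\bar{S}_\kappa(C^\tau_\beta) - \bar{S}_\kappa(C^\tau_\alpha) < \hat{\tau}$ for $\tau < \hat{\tau}$. You flag it honestly, but the route you propose to close it would fail. You want to pin $\bar{S}_\kappa(C^\tau_\alpha)$ from below near $\hat{\tau}$ and bound $\bar{S}_\kappa(C^\tau_\beta)$ above by $1$; that only yields a gap $<\hat{\tau}$ if $\hat{\tau} \ge 1/2$, an assumption appearing nowhere in the proposition, and moreover the premise is false: $\hat{\tau}$ is defined by when $C^\tau_\alpha$ becomes \emph{empty}, so for $\tau$ well below $\hat{\tau}$ the sparse component keeps absorbing ever more distant point pairs and $\bar{S}_\kappa(C^\tau_\alpha)$ drifts far below $\hat{\tau}$ --- exactly in the small-$\tau$ regime you identify as problematic. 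The paper closes the bound from the other side: it asserts that for every $\tau < \hat{\tau}$ the \emph{dense} cluster itself already satisfies $\bar{S}_\kappa(C^\tau_\beta) < \hat{\tau}$ (intuitively, since $\bar{S}_\kappa(C^\tau_\beta) > \bar{S}_\kappa(C^\tau_\alpha)$ always holds under the Gaussian kernel, the sparse component can only become non-empty once the threshold, and with it both cluster averages, has dropped below $\hat{\tau}$), and then uses $\bar{S}_\kappa(C^\tau_\alpha) \ge 0$ to conclude that the difference is $< \hat{\tau}$. So the missing ingredient in your plan is an upper bound on $\bar{S}_\kappa(C^\tau_\beta)$ by $\hat{\tau}$, not a lower bound on $\bar{S}_\kappa(C^\tau_\alpha)$ near $\hat{\tau}$; without it your argument cannot be completed at the stated level of generality.
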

\begin{proof}
\textcolor{black}{
Gaussian kernel is defined as: $\kappa(\mathbf{x}, \mathbf{y}) = \exp\left(-\frac{\|\mathbf{x} - \mathbf{y}\|^2}{2\sigma^2}\right)$, where $\sigma$ is the bandwidth.
Assume that each of the two clusters is isotropic with a density maximum. 
%As a result, as $\tau$ decreases, $|E^\tau|$ is monotonically non-decreasing; and the point-pairs $\wideparen{\mathbf{x}\ \mathbf{y}}$ added to $E^\tau$ due to the decrease of $\tau$ have longer $\|\mathbf{x} - \mathbf{y}\|$ than those before the decrease. Therefore, $\bar{S}_\kappa(C^\tau)$ is monotonically non-increasing, as $\tau$ decreases. Further, as the point-pairs added in the sparse cluster have longer $\|\mathbf{x} - \mathbf{y}\|$ than those in the dense cluster, $\bar{S}_\kappa(C^\tau_\alpha)$ decreases at a faster rate than $\bar{S}_\kappa(C^\tau_\beta)$, as $\tau$ decreases.\\ 
For a range of high $\tau$ values, $C^\tau_\alpha = \emptyset$ and $C^\tau_\beta \ne \emptyset$ due to the fact that there is a huge difference in densities between the two modes of the clusters. Thus, $\bar{S}_\kappa(C^\tau_\beta) > \tau$ and $\bar{S}_\kappa(C^\tau_\alpha) = 0$. Given that $\hat{\tau}$ is the smallest high $\tau$ setting such that $C^\tau_\alpha = \emptyset$. Then, for all $\tau \ge \hat{\tau}$, $[\bar{S}_\kappa(C^\tau_\beta) - \bar{S}_\kappa(C^\tau_\alpha)] =  \bar{S}_\kappa(C^\tau_\beta) > \hat{\tau}$. \\
For all $\tau < \hat{\tau}$ where $C^\tau_\alpha \ne \emptyset$,\\
(i) $[\bar{S}_\kappa(C^\tau_\beta) - \bar{S}_\kappa(C^\tau_\alpha)] < \hat{\tau}$ because $\bar{S}_\kappa(C^\tau_\beta) < \hat{\tau}$ and $\bar{S}_\kappa(C^\tau_\alpha) < \hat{\tau}$; and\\ (ii) $[\bar{S}_\kappa(C^\tau_\beta) - \bar{S}_\kappa(C^\tau_\alpha)] > 0$ because there exists at least a point-pair $\mathbf{y}, \mathbf{y}' \in C^\tau_\alpha$ for every point-pair $\mathbf{x}, \mathbf{x}' \in C^\tau_\beta$  such that $\|\mathbf{x} - \mathbf{x}'\| < \|\mathbf{y} - \mathbf{y}'\|$ and $\kappa(\mathbf{x}, \mathbf{x}') > \kappa(\mathbf{y}, \mathbf{y}')$, then the average similarity $\bar{S}(C^\tau_\beta)$ in the dense region is higher than the average similarity $\bar{S}(C^\tau_\alpha)$ in the sparse region.
}
\end{proof}

\vspace{3mm}
In contrast, the proposition for the Isolation Kernel is: 
\begin{proposition}
If $\kappa$ is the Isolation Kernel implemented using Voronoi Diagrams, there exists some $\psi$ setting such that $\bar{S}_\kappa(C^\tau_\beta) \approx \bar{S}_\kappa(C^\tau_\alpha)$ for a range of $\tau$ settings, irrespective of densities, shapes and sizes of the clusters.
\label{Lemma-cohesiveness}
\end{proposition}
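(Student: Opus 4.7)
The plan is to exploit the data-dependent scaling of Voronoi cells in IK, in combination with Lemma \ref{lem_characteristic}, to show that the expected in-cluster similarity is governed by the local density structure relative to itself rather than by the absolute density. Because the isolating subsample $\mathcal{D}$ of size $\psi$ is drawn uniformly from $D$, the empirical density of centroids in any region tracks $\rho$. Consequently, the typical diameter of a Voronoi cell whose seed lies in a region of density $\rho$ scales as $(\psi\rho)^{-1/d}$ in expectation. The key observation is that the in-cluster pairwise distances in a cluster of density $\rho$ also scale as $\rho^{-1/d}$, so the cell size and the intra-cluster spacing co-scale.

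Next, I would rewrite $\bar{S}_\kappa(C^\tau)$ using the probabilistic form of IK. For any pair $\mathbf{x},\mathbf{y}\in C^\tau$, $\kappa_I(\mathbf{x},\mathbf{y}\mid D)$ is the probability, over a random $\mathcal{D}$, that $\mathbf{x}$ and $\mathbf{y}$ fall in a common Voronoi cell. Conditioning on the seed whose cell contains $\mathbf{x}$, this probability depends only on the ratio of $\|\mathbf{x}-\mathbf{y}\|$ to the local cell diameter. By the co-scaling above, this ratio is the same, in expectation, for a typical pair drawn from $C^\tau_\beta$ as for a typical pair drawn from $C^\tau_\alpha$. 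This is the cluster-wise version of the unique IK characteristic in Lemma \ref{lem_characteristic}: larger partitions in sparse regions and smaller partitions in dense regions compensate exactly for the different inter-point distances, so the intra-cell co-occurrence probability becomes scale-invariant.

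I would then average this scale-invariant ratio over all pairs in $C^\tau_\alpha$ and $C^\tau_\beta$ respectively to obtain $\bar{S}_\kappa(C^\tau_\beta) \approx \bar{S}_\kappa(C^\tau_\alpha)$, provided $\psi$ is chosen in an intermediate range: large enough that every cluster contains at least a few seeds (so local cell diameters are well defined inside each cluster), and small enough that cells do not collapse below the finest in-cluster spacing. For any such $\psi$, the shared common value $\bar{S}_\kappa(\cdot)$ is strictly positive and varies continuously with $\psi$, so an interval of thresholds $\tau$ below this value yields non-empty $\tau$-cohesive clusters with matched cohesiveness in the sense of Definition \ref{def_same-tau-cohesion}.

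The main obstacle is the boundary-effect analysis: near cluster borders Voronoi cells can straddle regions of different densities, so the clean $\rho^{-1/d}$ co-scaling degrades and some pairs contribute bias of order proportional to the boundary-to-bulk ratio. Bounding this bias uniformly, and identifying an explicit admissible window of $\psi$ (and correspondingly of $\tau$) in which the approximation holds irrespective of cluster shape and size, is the quantitatively delicate step; the qualitative conclusion of the proposition, however, follows directly from the scale-invariant co-occurrence argument sketched above.
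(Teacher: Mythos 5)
Your argument reaches the right conclusion but by a genuinely different route from the paper. The paper's proof is measure-theoretic: it cites the result of Devroye et al.\ that the $P$-measure of a Voronoi cell converges to $\frac{1}{\psi}$ with vanishing variance as $\psi\rightarrow\infty$, \emph{independently of the point and of $P$}, and concludes directly that the co-occurrence probability of a pair is insensitive to whether the pair lies in $C^\tau_\beta$ or $C^\tau_\alpha$. You instead derive the same phenomenon from first principles by dimensional analysis: uniform subsampling makes the seed density track $\rho$, so cell diameters scale as $(\psi\rho)^{-1/d}$ and co-scale with the local inter-point spacing $\rho^{-1/d}$, rendering the intra-cell co-occurrence probability scale-invariant. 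Your scaling law is essentially the geometric shadow of the equal-mass property (cell volume $\approx \frac{1}{\psi\rho}$ is equivalent to cell mass $\approx \frac{1}{\psi}$ for locally near-uniform $\rho$), so the two proofs rest on the same fact; what the paper's route buys is an off-the-shelf theorem with a variance bound and no explicit dependence on $d$ or on local uniformity, while your route buys an explanation of \emph{why} the cells equidistribute and makes the compensation mechanism of Lemma \ref{lem_characteristic} explicit. One soft spot in your version: you assert that "in-cluster pairwise distances" scale as $\rho^{-1/d}$, but $\bar{S}_\kappa$ in Definition \ref{def_same-tau-cohesion} averages over \emph{all} pairs, and the typical distance between an arbitrary pair scales with the cluster's spatial extent, not with $\rho^{-1/d}$; only nearest-neighbour spacings obey your scaling. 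The argument survives because the all-pairs average is dominated by the near pairs that have non-negligible co-occurrence probability, and the equal-mass property guarantees each point has roughly $\frac{n}{\psi}$ same-cell companions regardless of density---but that is exactly the step where the paper's formulation is cleaner, and you should route the averaging through it rather than through a distance-scaling claim that is false for generic pairs. Your explicit flagging of the boundary effect is a point in your favour; the paper's proof is silent on it.
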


\begin{proof}
\textcolor{black}{A recent theoretical result on Voronoi cells $\theta$ in Voronoi Diagram $H$ provides the basis for the proof. Let $\mathbf{x}$ be i.i.d. drawn from any probability distribution $P$ on its support $\Omega$. Devroye et al. \cite{devroye2017measure} have shown that the $P$-measure of the Voronoi cell $\theta(\mathbf{x})$, which is equivalent to probability ${P}(\mathbf{x} \in \theta)$ under $\mathbf{x}\sim P$ on $\Omega$, asymptotically converges to $\frac{1}{\psi}$ as $\psi \rightarrow \infty$, and its variance
asymptotically converges to 0. This property
is independent of $\mathbf{x}$ and $P$. In other words, as the  Voronoi cells have approximately equal probability for some $\psi$, independent of $\mathbf{x}$ and $P$, then the similarity computed by Isolation Kernel $\kappa$ of two points in either the dense cluster $C^\tau_\beta$ or the sparse cluster $C^\tau_\alpha$ has approximately the same similarity. This provides the proof.}
%: for some $\psi$ setting, $\bar{S}_\kappa(C^\tau_\beta) \approx \bar{S}_\kappa(C^\tau_\alpha)$ for a range of $\tau$ settings, irrespective of densities, shapes and sizes of the clusters.}
\end{proof}

The intuition is that the Voronoi Diagrams used to construct the Isolation Kernel are built based on nearest neighbors in a sample of $\psi$ points. Assuming that these points are representative samples of the given dataset $D$, then a point $\mathbf{x}$ and its nearest neighbor $\eta_\mathbf{x}$ in $D$ are expected to fall into a same Voronoi Cell with approximately the same probability for all pairs of ($\mathbf{x}$,$\eta_\mathbf{x}$) in $D$, irrespective of the densities, shapes and sizes in different regions.

The property that $\kappa(\mathbf{x},\eta_\mathbf{x}) \approx \kappa(\mathbf{y},\eta_\mathbf{y})\ \forall \mathbf{x} \in C^\tau_\beta,\  \forall \mathbf{y} \in C^\tau_\alpha$ is due to the space partitioning mechanism employed in the Isolation Kernel which produces large partitions in a sparse region and small partitions in a dense region \cite{ting2018IsolationKernel}, i.e., a point and its most similar neighbor in any $\tau$-cohesive cluster have approximately the same probability of falling into a same partition, independent of the densities of the clusters. As a result, there is no bias towards dense cluster(s).

\subsection{IK yields $\tau$-cohesive clusters that are representative samples of all clusters}
\label{sec-representative-sample}

On a dataset of sparse cluster $C_\alpha$ and dense cluster $C_\beta$ that are in close proximity (the actual condition is provided in Section \ref{sec-second-condition}), Propositions \ref{Lemma-cohesiveness-Gaussian} and \ref{Lemma-cohesiveness} can be interpreted as:
\begin{itemize}
    \item  $\tau$-cohesive cluster $C^\tau$  is a representative sample of cluster $C$ if there exists some $\tau$  such that $P_{C^\tau} \approx P_{C}$;
    \item the property $\bar{S}_\kappa(C^\tau_\beta) > \bar{S}_\kappa(C^\tau_\alpha)$ implies that no $\tau$ exists such that $P_{C^\tau} \approx P_{C}$ for both clusters $C_\beta$ and $C_\alpha$;
    \item the property $\bar{S}_\kappa(C^\tau_\beta) \approx \bar{S}_\kappa(C^\tau_\alpha)$ implies that there exists some $\tau$ such that $P_{C^\tau} \approx P_{C}$ for both clusters $C_\beta$ and $C_\alpha$.
\end{itemize}

\begin{figure}[b]   
    \centering
    \subfloat[GK: $C_\beta^\tau$ \& $C_\alpha^\tau$ at $\tau=0.99$]{\includegraphics[width=.3\textwidth]{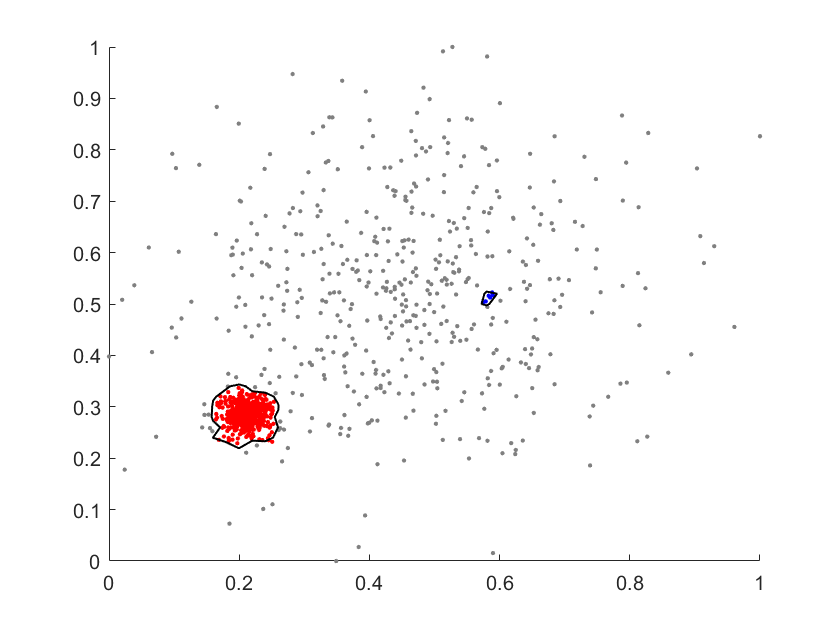} }
    \hspace{0.5cm}
    \subfloat[GK: $C_\beta^\tau$ \& $C_\alpha^\tau$ merge into one at $\tau=0.81$]{\includegraphics[width=0.3\textwidth]{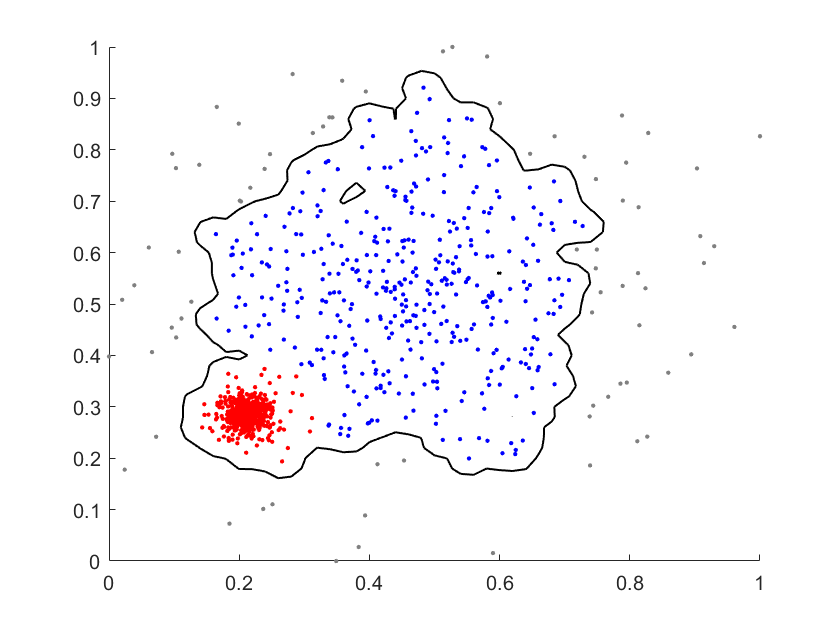}}
    % \subfloat[GK(ii) $\bar{S}_\kappa(C^{\tau}_\beta)$=0.993, $\bar{S}_\kappa(C^{\tau}_\alpha)$=0.963]{\includegraphics[width=0.3\textwidth]{figures/sec3/2.png}}
     \vspace{-3mm}
    \hspace{0.5cm}
    % \subfloat[IK: $C_\beta^\tau$ \& $C_\alpha^\tau$ at $\tau=0.2$ ]{\includegraphics[width=0.3\textwidth]{figures/sec3/3.png}}

    \subfloat[IK: $C_\beta^\tau$ \& $C_\alpha^\tau$ at $\tau=0.7$]{\includegraphics[width=0.28\textwidth]{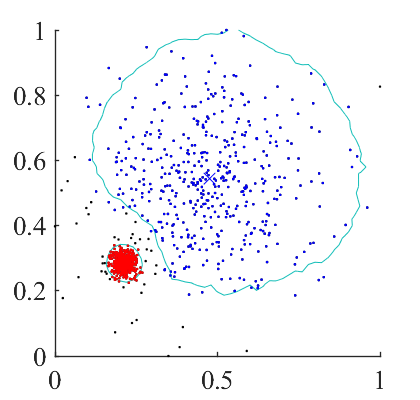}}
%    \subfloat[IK: $C_\beta^\tau$ \& $C_\alpha^\tau$  at $\tau=0.45$]{\includegraphics[width=0.3\textwidth]{graphs/2g_mmc_contour_0.45.png}}
    \hspace{0.8cm}
    \subfloat[IK: $C_\beta^\tau$ \& $C_\alpha^\tau$ merge into one at $\tau=0.35$]{\includegraphics[width=0.28\textwidth]{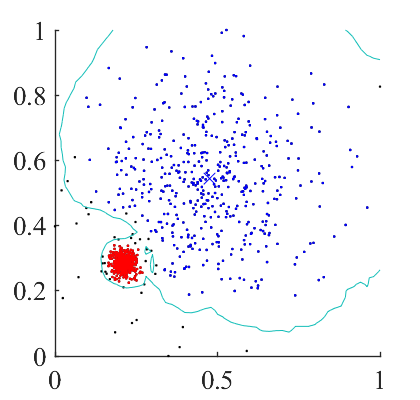}}\\ 
    \subfloat[GK]{\includegraphics[width=0.3\textwidth]{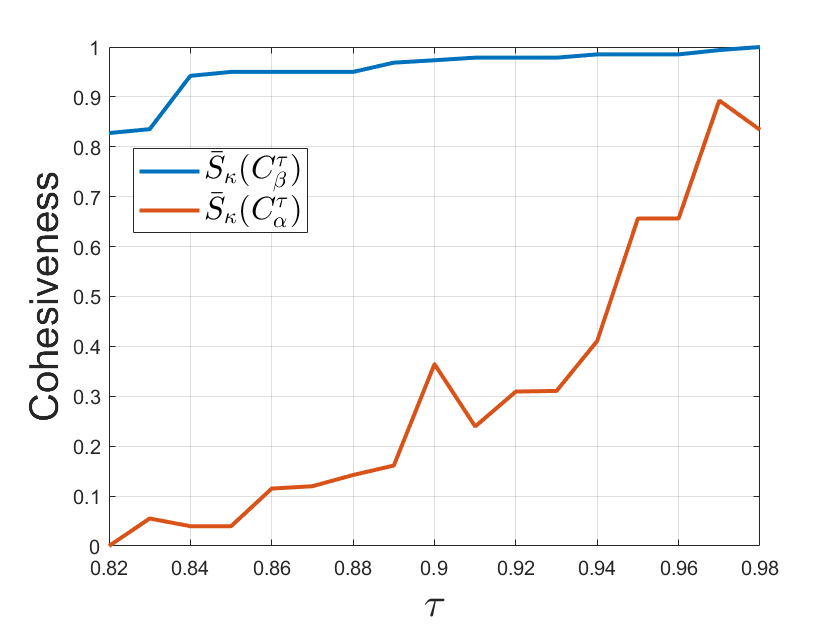}}
     \hspace{0.5cm}
   \subfloat[IK]{\includegraphics[width=0.3\textwidth]{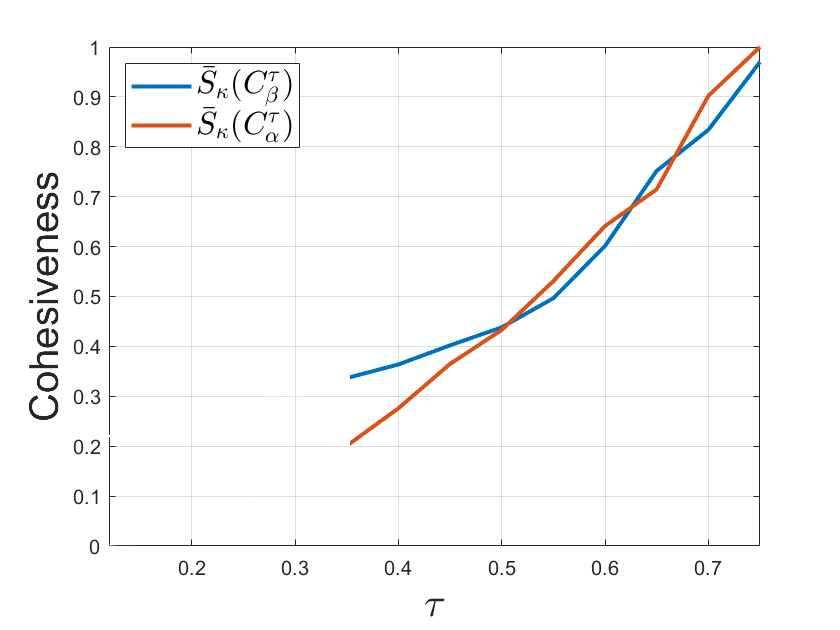}}
   %  \subfloat[GK]{\includegraphics[width=0.3\textwidth]{graphs/new_S_g.png}}
   %   \hspace{0.5cm}
   % \subfloat[IK]{\includegraphics[width=0.3\textwidth]{graphs/new_S_i.png}}
    
   % \subfloat[GK]{\includegraphics[width=0.3\textwidth]{figures/sec3/dmc-gdm-un.png}}
   %   \hspace{0.5cm}
   % \subfloat[IK]{\includegraphics[width=0.3\textwidth]{figures/sec3/mmc-dgm-un.png}}
\caption{Illustrations of the boundaries of $C^\tau_\beta$ (dense) and $C^\tau_\alpha$ (sparse) clusters due to different $\tau$ settings.  \textcolor{black}{Note that clusters $C^\tau$ with  high $\tau$ cover smaller regions than those with low $\tau$: (a) The entire dense cluster is covered when using the Gaussian Kernel with $\tau=0.99$; and (b) both the dense and sparse clusters are merged when $\tau=0.81$. (c) This example denotes the lowest $\tau=0.7$ using the Isolation Kernel to cover the entire dense cluster, while lowering $\tau=0.35$ merges the two clusters is shown in (d).}  The plots of cohesiveness $\bar{S}_\kappa(C^\tau)$ versus $\tau$ for dense cluster $C_\beta$ \& sparse cluster $C_\alpha$ are shown in (e) \& (f). The cohesiveness is min-max normalized over values derived from all $\tau$ values.}   

  \label{fig:section3}
\end{figure}

The property $\bar{S}_\kappa(C^\tau_\beta) > \bar{S}_\kappa(C^\tau_\alpha)$, due to the Gaussian Kernel, could not produce representative samples with any $\tau$ for both clusters, and it can be seen in the following two scenarios: (i) a $\tau$, which produces $C^\tau_\beta$ that is a representative sample, yields $C^\tau_\alpha$ which is far short of a representative sample of the sparse cluster. An example is shown in Figure \ref{fig:section3}(a). This is because, for $C^\tau_\alpha$ to be the representative sample, it requires a much lower $\tau$; and (ii) a $\tau$, that yields $C^\tau_\alpha$ to be the representative sample for the sparse cluster, produces an over-sampled $C^\tau_\beta$ which encroaches into the region of the sparse cluster bordering the dense cluster. An example is shown in Figure \ref{fig:section3}(b).

As the Isolation Kernel has the property $\bar{S}_\kappa(C^\tau_i) \approx \bar{S}_\kappa(C^\tau_j)\ \forall i \ne j$, irrespective of cluster densities, shapes and sizes of clusters, it ensures that all $C^\tau$ produced are representative samples for all clusters. The example in Figure \ref{fig:section3}(c) shows that, using IK, there exists some setting of $\tau$ such that $P_{C^\tau} \approx P_{C}$ for both clusters. Notice the huge difference between the $C^\tau_\alpha$'s produced from the GK and IK in Figure \ref{fig:section3}(a) and Figure~\ref{fig:section3}(c), even though the $C^\tau_\beta$'s produced are approximately the same.

The properties $\bar{S}_\kappa(C^\tau_\beta) > \bar{S}_\kappa(C^\tau_\alpha)$ and $\bar{S}_\kappa(C^\tau_\beta) \approx \bar{S}_\kappa(C^\tau_\alpha)$, due to the Gaussian Kernel and Isolation Kernel, are shown in Figure \ref{fig:section3}(e) and Figure \ref{fig:section3}(f), respectively.

\textcolor{black}{In summary, using mass distribution allows large proportions of both the dense and sparse clusters to be discovered at some $\tau$ setting; yet, no such setting of $\tau$ exists to find the two equally-good clusters when density distribution is used, without the dense cluster encroaching into the region of the sparse cluster---a manifestation of the high-density bias.}

%\newpage
\section{Definition of Isolation-induced Mass}
\label{sec_Mass}

We now introduce the concept of mass, derived from space partitioning via an isolating mechanism, as used in the Isolation Kernel.

%Let $D \subset \mathbb{R}^d$ be a dataset sampled from an unknown distribution $P_D$\footnote{Here $P_D$ or the term `distribution' is neutral and not associated with any special type of distribution such as the common notion of probability density distribution.}; and $\mathbb{H}_\psi(D)$ denote the set of all partitionings $H$ that are admissible from $\mathcal{D} \subset D$, which is a random subset of $\psi$ points. Each partition $\theta[z] \in H$ isolates a point $z \in \mathcal{D}$ from the rest of the points in $\mathcal{D}$.
%Let $\mathds{1}(\cdot)$ be an indicator function.
\textcolor{black}{
\begin{definition} Isolation-induced Mass of a point $\mathbf{x} \in \mathbb{R}^d$ with respect to a given dataset $D$ is defined as
	the expected probability mass $\mathcal{P}(\theta)$ of the isolating partition $\theta \in H$ in which $\mathbf{x}$ falls:
	\begin{eqnarray}
\mathbbm{m}(\mathbf{x}|D) = {\mathbb E}_{\theta \sim \mathcal{H}(\mathbf{x}|D)} [\mathcal{P}(\theta)],
		\label{eqn_mass}
	\end{eqnarray}
\end{definition}
\noindent
where the expectation is taken over all partitions $\theta$ of all partitionings $H \in \mathbb{H}_\psi(D)$ which cover $\mathbf{x}$, i.e., $\mathcal{H}(\mathbf{x}|D) := \{\theta\ |\ \mathbf{x} \in \theta \in H,\ \forall H \in \mathbb{H}_\psi(D) \}$; and $\theta$ is a shorthand for $\theta[\mathbf{z}]$, where $\mathbf{z} \in \mathcal{D} \subset D$, $\psi=|\mathcal{D}|$.\\
The probability mass $\mathcal{P}(\theta)$ can be estimated from the partitionings and $D$ as:\\ $\mathcal{P}(\theta) = \frac{1}{|D|}\sum_{\mathbf{y} \in D} \mathds{1}(\mathbf{y}\in \theta)$.\\
In practice, $\mathbbm{m}(\mathbf{x}|D)$ is estimated using a finite number of partitionings $H_i, i=1,\dots,t$, where each $H_i$ is created using randomly subsampled $\mathcal{D}_i \subset D$:
\begin{eqnarray}
\mathbbm{m}(\mathbf{x}|D) =   \frac{1}{t} \sum_{i=1}^t  \mathcal{P}(\theta(\mathbf{x}) \in H_i) =  \frac{1}{t|D|} \sum_{i=1}^t \sum_{\mathbf{y} \in D} \mathds{1}(\mathbf{y}\in \theta(\mathbf{x}) \in H_i),  %\nonumber\\
 \label{eqn_mass2}
\end{eqnarray}
\noindent
where $\theta(\mathbf{x})$ is one of the $\psi$ partitions in $H_i$ which covers $\mathbf{x}$.\\
Recall from the feature map $\phi$ of the Isolation Kernel stated in Section \ref{sec_prelim-IK} is:\\ ${\phi}(\mathbf{y}) = [\phi_1(\mathbf{y}),\dots,\phi_t(\mathbf{y})]$; and the $j$-component of $\phi_i(\mathbf{y})$ is:
$\phi_{ij}(\mathbf{y})=\mathds{1}(\mathbf{y}\in \theta_{ij}\ |\ \theta_{ij}\in H_i)$ for $j \in [1,\psi]$.  Let $ {\Phi}(D)= \frac{1}{|D|}\sum_{\mathbf{y} \in D} {\phi}(\mathbf{y}) = \frac{1}{|D|}\sum_{\mathbf{y} \in D} [\phi_1(\mathbf{y}),\dots,\phi_t(\mathbf{y})]$. Then, ${\Phi}(D)$ can be expressed in terms of $\mathcal{P}(\theta_{ij})$ as follows:
\[
{\Phi}(D)= [ \mathcal{P}(\theta_{11}),\dots,\mathcal{P}(\theta_{1\psi}),
\mbox{\textasciidieresis \textasciidieresis \textasciidieresis}, \mathcal{P}(\theta_{t1}),\dots,\mathcal{P}(\theta_{t\psi}) ].
\]
Now, $\mathbbm{m}_{\kappa}(\mathbf{x}|D)$  in Equation \ref{eqn_mass2} can be re-expressed in terms of ${\Phi}(D)$ or $\kappa$ as follows: 
\begin{eqnarray} 
\mathbbm{m}_{\kappa}(\mathbf{x}|D) & = & \frac{1}{t} \left< {\phi}(\mathbf{x}), {\Phi}(D) \right> \label{eqn_mass-estimator} \\
 & = & \frac{1}{t|D|}\sum_{\mathbf{y} \in D} \left< {\phi}(\mathbf{x}), {\phi}(\mathbf{y}) \right> = \frac{1}{|D|}\sum_{\mathbf{y} \in D} \kappa(\mathbf{x},\mathbf{y}\ |\ D). 
\label{eqn_mass-estimator2}
\end{eqnarray}
Note that mass estimator $\mathbbm{m}$, in Equation \ref{eqn_mass} or \ref{eqn_mass2}, does not need to be defined in terms of Isolation Kernel. The definition based on the Isolation Kernel not only provides a richer interpretation, but also significantly reduces its time complexity from $\mathcal{O}(n)$ by using $\kappa_I$ to $\mathcal{O}(1)$ by using its feature map via Eq (\ref{eqn_mass-estimator}) for each point estimation after an one-off $\mathcal{O}(n)$ to compute $\Phi(D)$. Note that the time cost is linear to $t\psi$. These terms are dropped from the time complexity since they are constant.\\
Mass estimation of a point $\mathbf{x}$ with respect to a distribution $P_D$ from which a data sample $D$ is generated can be summarized as follows: (i) It describes the expected probability mass of isolating partitions in which $\mathbf{x}$ falls. (ii) It measures the similarity of $\mathbf{x}$ with respect to $P_D$ via a dot-product of $\phi(\mathbf{x})$ and $\Phi(D)$ in RKHS. %(iii) The mass distribution is independent of density, i.e., the modes have approximately the same estimated mass value independent of the densities of the modes. 
\\
${\Phi}(D)$ can be viewed as a kernel mean feature mapped point which represents an unknown (mass) distribution $P_D$ from which $D$ is a sample. The dot product of  ${\phi}(\mathbf{x})$ and ${\Phi}(D)$ can be interpreted as the similarity between the feature mapped points of $\mathbf{x}$ and the unknown distribution represented by $D$. 
This interpretation gives a mass distribution of a given dataset, estimated by $\mathbbm{m}_{\kappa}$, having characteristics which are more useful for data-driven analyses than density distribution. This is presented in the next two subsections; and the corresponding density counterparts are given in the third subsection. The advantage in terms of runtime is presented in Section \ref{sec-MMC}.
}

\subsection{Mass estimation with respect to a cluster in $D$}
\label{sec-mass-cluster}
The above mass estimator is derived with respect to the given dataset $D$. In some applications, one would like to estimate the mass of a point with respect to a cluster $C$ in $D$. This mass estimator is defined as:
\begin{equation}
\mathbbm{m}_{\kappa}(\mathbf{x}|C \subset D) = \frac{1}{|C|}\sum_{\mathbf{y} \in C} \kappa(\mathbf{x},\mathbf{y}|D) =  \frac{1}{t} \left< {\phi}(\mathbf{x}|D), {\Phi}(C|D) \right> 
\label{eqn-mass-wrt-cluster}
\end{equation}
where $ {\Phi}(C|D)= \frac{1}{|C|}\sum_{\mathbf{y} \in C} {\phi}(\mathbf{y}|D)$.

\subsection{Mass distribution with respect to all clusters in $D$}

\begin{definition}
Given clusters $C_j, j=1,\dots,k$ in a dataset $D$ and $\mathbbm{m}_{\kappa}(\cdot|D)$ derived from $D$, the estimated  mass distribution $\forall \mathbf{x} \in \mathbb{R}^d$ is defined as: 
\[
\widetilde{\mathbbm{m}}_{\kappa}(C_j \subset D, j=1,\dots,k) = \max_j \mathbbm{m}_{\kappa}(\mathbf{x}|C_j \subset D).
\]

\label{def:mass_dist}
\end{definition}

The distribution is analogous to the density distribution estimated by, e.g.,  a kernel density estimator (KDE); except that the clusters in a dataset must be provided. In other words, the  $\widetilde{\mathbbm{m}}_{\kappa}$ mass distribution describes the data distribution in terms of the given clusters in the dataset.

To simplify the notations, $D$ is dropped in $\mathbbm{m}_{\kappa}$ hereafter when the context is clear.

\subsection{Density distribution with respect to all clusters in $D$}

The corresponding density distribution and density estimator are given as follows:
\[
\widetilde{\mathbbm{f}}_{\kappa}(C_j, j=1,\dots,k) = \max_j \mathbbm{f}_{\kappa}(\mathbf{x}|C_j),
\]
and
\[
\mathbbm{f}_{\kappa}(\mathbf{x}|C) = \frac{1}{|C|}\sum_{\mathbf{y} \in C} \kappa(\mathbf{x},\mathbf{y}).
\]
\noindent
where $\kappa(\cdot,\cdot)$ is a data independent kernel such as the Gaussian Kernel.

To simplify the notation, we use $P_C$ hereafter to denote either the mass distribution  or the density distribution of cluster $C$  derived from $\mathbbm{m}_{\kappa}(\cdot|C)$ or $\mathbbm{f}_{\kappa}(\cdot|C)$, respectively, when the context is clear.

% \begin{definition}
% To represent the data distribution of a given dataset with clusters $C_j, j=1,\dots,k$, there exists a $\psi$ setting for kernel $\kappa$ such that the mass distribution estimated by kernel mass estimator $\mathbbm{m}_{\kappa}$ has the following characteristic: The average mass of a cohesive cluster $C$ is approximately the same for all cohesive clusters, i.e., $\bar{M}(C_i) \approx \bar{M}(C_j), \forall i \ne j$, where $\bar{M}(C) = \frac{1}{|C|} \sum_{\mathbf{x} \in C} \mathbbm{m}_{\kappa}(\mathbf{x}|C)$. 
% \label{def_same-average-mass}
% %In addition,  $\bar{M}(C_i|D) \gg \frac{1}{|C_i|} \sum_{x \in C_i} \mathbbm{m}_{\kappa}(x|C_j \subset D)\ \forall j \ne i$.  
% \end{definition}

% The above characteristic ensures that all points are cohesive in a cluster, i.e., every point $\mathbf{x} \in C$ has the highest mass wrt $C$: $\forall \mathbf{x} \in C_i, \argmax_j \mathbbm{m}_{\kappa}(\mathbf{x}|C_j) = i$. In addition, every cluster, which has approximately the same degree of cohesiveness, ensures that no bias towards high-cohesive clusters in the clustering process. 

% In other words, a group of randomly selected points, or incohesive points, which  forms a `cluster' ostensibly, would not have this characteristic.

% \textcolor{red}{KDE do not have the same characteristic of KME because the average density in the sparse cluster would stay low; and that in the dense cluster would stay high. This is so even though every point $\mathbf{x} \in C$ has the highest density wrt $C$ based on KDE $f$: $\forall \mathbf{x} \in C_i, \argmax_j f(\mathbf{x}|C_j ) = i$}

\section{A natural way to define clusters in a dataset}

The above discussion leads to a simple and natural way to define a cluster in a given dataset, independent of the clustering procedure. A cluster having \emph{highest-mass points} is cohesive, and it adheres to the shape of the cluster as it  appears in the data space.

\begin{definition}
    The set of highest-mass clusters $\mathbb{C} = \{C_i, i=1,\dots,k\}$, discovered in a given dataset $D$, where \textcolor{black}{$k \ll |D|$}, is defined such that  
    every point $\mathbf{x} \in C$ has the highest mass with respect to the mass distribution of $C$: \[\forall \mathbf{x} \in C_i, \argmax_j \mathbbm{m}_{\kappa}(\mathbf{x}|C_j ) = i.
    \]
    \label{def-highest-mass-cluster}
\end{definition}

\begin{definition}
To produce the highest-mass clusters from a given dataset $D$, the objective function is to maximize the total mass of \textcolor{black}{all $k$ clusters in $\mathbb{C}$}
%\footnote{See Section \ref{sec-Variants-MMC} for a discussion of variants of the proposed Mass-Maximization Clustering clustering which have the same cluster definition and objective function.} 
as follows:
\begin{equation}
    M(D) = \max_\mathbb{C} \sum_{C \in \mathbb{C}} \sum_{\mathbf{x} \in C} \mathbbm{m}_{\kappa}(\mathbf{x}|C ) 
    %= \max_\mathbb{C} \sum_{C \in \mathbb{C}} \bar{M}(C) \times |C|.
    \label{eqn-objective-function}
\end{equation}
%\noindent
%where $\bar{M}(C) = \frac{1}{|C|} \sum_{\mathbf{x} \in C} \mathbbm{m}_{\kappa}(\mathbf{x}|C)$. 
\label{def-objective}
\end{definition}

%It is interesting to note that $\displaystyle \frac{1}{|C|}\sum_{\mathbf{x} \in C,\ \kappa(\mathbf{x},\eta_\mathbf{x}) > \tau} \kappa(\mathbf{x},\eta_\mathbf{x}) = \sum_{\mathbf{x} \in C} \mathbbm{m}_{\kappa}(\mathbf{x}|C )$ if there exists some $\tau$ such that every pair of $\mathbf{x} \ne \eta_\mathbf{x}$ is in $C$. Therefore, 
% The above objective function can be re-expressed in terms of $\tau$-cohesive clusters having $\bar{S}_\tau(C)$:
% \begin{equation}
%     \max_\mathbb{C} \sum_{C \in \mathbb{C}} \frac{1}{|C|} \sum_{\mathbf{x} \in C,\ \kappa(\mathbf{x},\eta_\mathbf{x}) > \tau} \kappa(\mathbf{x},\eta_\mathbf{x}) = \max_\mathbb{C} \sum_{C \in \mathbb{C}} \bar{S}_\tau(C).
%     \label{eqn-objective-function-cohesive}
% \end{equation}

% In other words, each highest-$\tau$-cohesive cluster has 
% points which have the same $\tau$-cohesiveness with every other cluster, i.e., $\bar{S}_\tau(C_i) \approx \bar{S}_\tau(C_j)\ \forall i \ne j$, as stated in Definition~\ref{def_same-tau-cohesion}, if such $\tau$ exists.  
%The highest-mass clusters, based on Definition \ref{def_same-average-mass}: $\bar{M}(C_i) \approx \bar{M}(C_j), \forall i \ne j$,  are more general because they are not conditioned upon a single $\tau$ setting for all clusters. 

%By reassigning a point from its highest-mass cluster to another cluster would reduce the cohesiveness of this cluster as well as that of the other cluster, i.e., reducing the total mass of all clusters and increasing $| \max_i \bar{M}(C_i) - \min_j \bar{M}(C_j)|$.

It is interesting to note that, though highest-density clusters could be defined similarly, the dense clusters would tend to encroach into the regions of sparse clusters because the density distribution has a natural bias towards the dense clusters, i.e., the dense cluster's $\tau$-cohesiveness is significantly larger than that of sparse cluster, as stated in Section \ref{sec-cohesiveness}.

The current cluster definitions in density-based clustering algorithms depend on (i) a (data independent) distance function used in a density estimator, and (ii) a point-to-point linking procedure to form the final clusters, as in DBSCAN and DP.  %(ii) a distance function and the EM algorithm as in k-means like algorithms which can discover globular-shaped clusters only. In other clustering algorithms, such as Spectral Clustering, no cluster definition is available.
In contrast, the definition of the proposed highest-mass clusters depends on a data dependent kernel and its kernel mass estimator only, where a point-to-point linking procedure plays a role  to form the initial clusters only, but not the final clusters. Here the point-to-point linking procedure applies to a subset of $D$, not the entire dataset (see the details in the next section).

Another key difference is the objective function. The highest-mass clusters lead directly to the objective function, stated in Definition \ref{def-objective}. Yet, none of the  existing density-based clustering algorithms (e.g., \cite{ester1996density,DENCLUE,rodriguez2014clustering,
%zhu2016density,
DP_jain,chen2018local}),  that we are aware, have an objective function\footnote{\textcolor{black}{A recent paper \cite{DBSCAN-unify-KDD2023} claims that DBSCAN achieves an objective function in terms of the density-connectivity distance (dc-dist), i.e., it aims to find the minimum number of clusters such that the maximum dc-dist within any cluster is $\epsilon$. However, it is not the objective of the original DBSCAN algorithm. One must perform an external parameter search over different values of $\epsilon$, running DBSCAN multiple times, to achieve the stated objective. In other words, the original DBSCAN does not function as an optimization algorithm meant to do in order to achieve the stated objective. It is misleading to claim that the original DBSCAN achieves this objective.}}.

Using the concepts of $\tau$-cohesive clusters, highest-mass clusters and its associated objective function (Definitions \ref{def-tau-cohesion}, \ref{def-highest-mass-cluster} \& \ref{def-objective}, respectively), we create a new clustering algorithm called Mass-Maximization Clustering (MMC), which is described in the next section.

% \begin{table}[h]
%     \centering
%     \resizebox{1.3\linewidth}{!}{
%     \begin{tabular}{l|cc}
%      &  Mass(KME with IK)  & Density(KDE with GK) \\ 
%      \toprule
%  % Gradient of distribution:\\
%  % $x \in C_S \subset \mathcal{X}_S$\ \&\ $y \in C_T \subset \mathcal{X}_T,\ \ell(x,C_S)=\ell(y,C_T)$ & $\frac{d \mathbbm{m}_{\kappa}(x|C_S \subset D)}{dx} < \frac{d \mathbbm{m}_{\kappa}(y|C_T \subset D)}{dy}$ & $\frac{d \rho(x|C_S \subset D)}{dx} < \frac{d \rho(y|C_T \subset D)}{dy}$  \\ \midrule
%  Average mass/density within a cluster:\\
%  $C_S \subset \mathcal{X}_S$\ \&\ $C_T \subset \mathcal{X}_T$ &  $\bar{M}(C_S|D) \approx \bar{M}(C_T|D)$  & $\bar{F}(C_S|D) < \bar{F}(C_T|D)$\\ \midrule
%  Grow-a-cluster-from-peak bias towards  & No bias & dense region\\
%    \bottomrule
%     \end{tabular}
%     }\caption{Kernel Mass estimation ($\mathbbm{m}_{\kappa}$) versus Kernel Density estimation ($\rho$), where $\mathcal{X}_S$ and $\mathcal{X}_T$ are in sparse and dense regions, respectively. $\bar{M}(C|D) = \frac{1}{|C|} \sum_{x \in C} \mathbbm{m}_{\kappa}(x|C \subset D)$ and 
%     $\bar{F}(C|D) = \frac{1}{|C|} \sum_{x \in C} \rho(x|D)$. 
%     %$\ell(x,C) \equiv \ell(c,\bar{x}_C)$, where $\bar{x}_C$ is the preimage of $\Phi(C|D)$. 
%     }
%     \label{tab:Mass-vs-density}
% \end{table}

\section{Mass-Maximization Clustering}
\label{sec-MMC}

The proposed MMC has three steps as shown in Algorithm \ref{alg:MMC}, where $\kappa$ is the Isolation Kernel. The first step employs the more stringent $\tau$-cohesive clusters to derive the initial clusters from a subset $D_s$ of the given dataset $D$. In our implementation, each initial cluster is obtained as $\kappa_\tau$-connected component by using a standard  function in Matlab called `conncomp' \footnote{\textcolor{black}{This function outputs the number of connected components in a graph, where the set of input data points is treated as a graph with $\kappa_\tau$-connected edges.}}.  The second step assigns each point in $D$ to its most similar initial clusters 
%without the constraint of a single $\tau$ 
to maximize the total mass of all clusters. 
%Note that the point assignment is completed in one iteration, unlike many other clustering algorithms which require multiple iterations of point re-assignments.

 \begin{algorithm}[h]
  \caption{Mass-Maximization Clustering (MMC) }
   \label{alg:MMC}
\begin{algorithmic}[1]
\Require $D$ - dataset, $k$ - number of clusters,  $s$ - sample size,\ \
  $\tau$ - similarity threshold
\Ensure $\mathbb{C} = \{ C_1, \dots, C_k \}$
\State Produce largest $k$  $\tau$-cohesive clusters $Q^\tau_i$ (Definition \ref{def-tau-cohesion}) from a subset $D_s \subset D$:
\Statex $\ \ \ \ \ \ \forall \mathbf{x}, \mathbf{y} \in Q^\tau_i \subset D_s,\ \mathbf{x} \mbox{ and } \mathbf{y} \mbox{ are } \kappa_\tau\mbox{-connected}$,
%\Statex $\ \ \ \ \ \ Q^\tau_i = \{ \mathbf{x}, \mathbf{y} \in D_s \ |\ \forall \mathbf{x} \ne \mathbf{y},\ \kappa_\tau\mbox{-connected}(\mathbf{x},\mathbf{y})  \}$,
%Q^\tau_i = \{ \mathbf{x}, \mathbf{y} \in D_s \subset D\ |\    \mbox{any point } \mathbf{x}$ is $\kappa_\tau$-connected to any other point $\mathbf{y} \}$,
%G^\tau_i= \{ \mathbf{x}, \mathbf{y} \in D_s\ |\ \exists \mathbf{x} \ne \mathbf{y}, \kappa(\mathbf{x},\mathbf{y}) > \tau \}, 
$\forall_{i\in [1,k]}$.\;
%$\ \ \ \ \ \  \kappa(\mathbf{x},\eta_\mathbf{x}) \ge \tau$ and $\kappa(\mathbf{x},\mathbf{z}) < \tau,\ \forall \mathbf{x} \ne \eta_\mathbf{x} \in G_i \subset D_s,\ \forall \mathbf{z} \notin G_i$\; 
%where $\eta_\mathbf{x}$ is the most-similar-neighbor of $\mathbf{x}$ in $C$, and $\eta_\mathbf{x} \ne \mathbf{x}$\;
\State Assign points in $D$ based on mass-maximization with respect to $Q^\tau_i$ (Definitions \ref{def-highest-mass-cluster} \& \ref{def-objective}):
\Statex $\ \ \ \ \ \ C'_j=\{\mathbf{x}\in D\ |\ \mathop{\argmax}\limits_{i \in [1,k]} \mathbbm{m}_{\kappa}(\mathbf{x}|Q^\tau_i )=j \},
%K(\delta (\mathbf{x}), \mathcal{P}_{G_i}) =j\}, 
\forall_{j\in [1,k]}.$\;
\State Post-processing to refine $\mathbb{C} = \{ C'_1, \dots, C'_k \}$ to improve the objective:
\Statex $\ \ \ \ \ \ \displaystyle M(D) = \max_\mathbb{C} \sum_{C \in \mathbb{C}} \sum_{\mathbf{x} \in C} \mathbbm{m}_{\kappa}(\mathbf{x}|C )$.
\State \Return $\mathbb{C} = \{ C_1, \dots, C_k \}$\;

\end{algorithmic} 
 \end{algorithm}

The set of  clusters $\mathbb{C} = \{ C'_1, \dots, C'_k \}$ at the end of the second step consists of highest-mass clusters, as a result of maximizing the total mass of all clusters, as stated in Equation (\ref{eqn-objective-function}), based on $\tau$-cohesive clusters $Q^\tau_i$.

The final third step is simply to tweak the clusters found in the second step to further improve the point assignments, if there is still room for improvement. This is conducted based on clusters $C'_j$ obtained in the second step. 

Propositions \ref{Lemma-cohesiveness-Gaussian} \& \ref{Lemma-cohesiveness} and
Section \ref{sec-representative-sample} ensure that step 1 of MMC produces $\tau$-cohesive clusters $Q^\tau$ which are representative samples for all clusters.   
Note that  $\tau$ is a parameter, which determines the sample size of $Q^\tau$, and it must be tuned for a given dataset, as described in Section \ref{sec-representative-sample}.

Table \ref{tab:MMC-vs-DMC} provides a summary of the comparison between MMC and its density counterpart Density-Maximization Clustering (DMC), where the only difference is the use of the Gaussian Kernel instead of Isolation Kernel for $\kappa$ (yielding density estimator $\mathbbm{f}_{\kappa}$ in place of mass estimator $\mathbbm{m}_{\kappa}$).

From Propositions \ref{Lemma-cohesiveness-Gaussian} \& \ref{Lemma-cohesiveness} and
Section \ref{sec-representative-sample}, we know that step 1 of DMC could not produce  $\tau$-cohesive clusters $Q^\tau$ which are representative samples for all clusters with any $\tau$, when there are clusters of varied densities. This often yields a bias towards dense clusters in the clustering outcome. We provide the conditions under which DMC fails to identify all clusters in a dataset in the next section. 

\begin{table}[h]
\vspace{-4mm}
    \centering
     \caption{MMC versus DMC. 
%     $\bar{F}(C) = \frac{1}{|C|} \sum_{\mathbf{x} \in C} \mathbbm{f}_{\kappa}(\mathbf{x}|C)$ is the average density of cluster $C$; and $\mathbbm{f}_{\kappa}$ is a kernel density estimator. 
$Q_\alpha$ and $Q_\beta$ denote sparse and dense clusters, respectively.}
    \begin{tabular}{c|cc}
    \toprule
      & MMC  & DMC\\ \midrule
 Kernel used  & Isolation & Gaussian\\
 Estimator used  & Mass & Density\\
The type of clusters discovered  & Highest Mass & Highest Density\\ 
 Average similarity of $\tau$-cohesive clusters (Definition \ref{def_same-tau-cohesion})  & $\bar{S}(Q^\tau_\alpha) \approx  \bar{S}(Q^\tau_\beta)$ & $\bar{S}(Q^\tau_\alpha) <  \bar{S}(Q^\tau_\beta)$\\ 
% Average mass/density per cluster &  $\bar{M}(Q^\tau_\alpha) \approx \bar{M}(Q^\tau_\beta)$  & $\bar{F}(Q^\tau_\alpha) < \bar{F}(Q^\tau_\beta)$\\ \midrule
% Average $\tau$-most-similar-neighbors per cluster &  $\bar{N}_\tau(C_S) \approx \bar{N}_\tau(C_T)$ & $\bar{N}_\tau(C_S) < \bar{N}_\tau(C_T)$\\ \midrule
 high-density bias  & No & Yes\\ 
 Difficulty in finding all clusters of varied densities & No & Yes$^\dagger$ \\
 \bottomrule
\multicolumn{3}{l}{$^\dagger$ See some example conditions in Section \ref{sec-fail-condition}.}
    \end{tabular}
   
    \label{tab:MMC-vs-DMC}
\end{table}

\begin{table}[h]
    \vspace{-10mm}
    \centering
\caption{Time complexities. $n$: dataset size; $s$: sample size in MMC/DMC; $a$ anchor size in SGL and GLSHC. $\mathcal{O}(\cdot)$ denotes the worst-case time complexity}

\begin{tabular}{c|c|c}
    \hline
MMC \& DMC & DP, DBSCAN \& LGD & SGL \& GLSHC\\ \hline
$\mathcal{O}( n+s^2)$ &  $\mathcal{O}( n^2)$ & $\mathcal{O}( na^3)$ \\
%DMC & $\mathcal{O}( ns^2)$ \\
%DBSCAN & $\mathcal{O}( n^2)$ \\
%LGD & $\mathcal{O}( n^2)$ \\
%GLSHC & $\mathcal{O}( na^3)$ \\
 \hline
    \end{tabular}    
    \label{tab:time-complexities}
\end{table}

\textcolor{black}{Table \ref{tab:time-complexities} shows the time complexities of MMC/DMC in comparison with those of density-based and spectral clustering algorithms. \\
The MMC procedure has the following time complexities: Building the feature map of IK (with parameters $\psi$ \& $t$) and mapping $n$ points in $\mathbb{R}^d$ to RKHS take $\mathcal{O}(ndt\psi)$. It is linear with respect to $n$ because all the other parameters are constant. $\mathcal{O}(s^2)$ is required to produce the $\tau$-cohesive initial clusters in step 1 of MMC/DMC; step 2 has $\mathcal{O}(nk\psi t)$ to assign all points  in $D$ to $k$ clusters, so as the post-processing. Thus, the total worst-case time complexity is  $\mathcal{O}(n t\psi(k+d) + s^2)$. As all other parameters, from apart $n$, are constant, MMC/DMC has $\mathcal{O}(n)$. \\
DMC requires a preprocessing of applying the Nystr\"{o}m method \cite{Nystrom_NIPS2000} to produce the approximate feature map of Gaussian kernel before performing the conversion of the data points in the input space to RKHS. This step replaces the IK building process. The rest of the procedure is the same as MMC.
}

\section{When DMC fails to discover all clusters correctly}
\label{sec-fail-condition}
%Many distance/density-based clustering algorithms assume that all clusters have similar densities. As a result, they have a bias towards dense clusters and produce poor performance when there are clusters with varied densities \cite{zhu2016density}.  The conditions under which DBSCAN and DP fail to discover all clusters in a dataset have been stipulated in \cite{zhu2016density} and \cite{zhu2022hierarchical}, respectively.

\textcolor{black}{Here we show that DMC which employs the density distributions has the high-density bias but MMC which employs the mass distributions does not, while both have the exactly the same algorithm.}

%The conditions under which DMC fails to discover sparse clusters in a dataset. 

Given a dataset with $k$ clusters, in order to correctly identify all clusters based on DMC/MMC, the following criteria must be satisfied: 
%\newpage
\begin{enumerate} 
\item  Each cluster contains only one $\tau$-cohesive cluster $Q^\tau$ in the detected largest $k$ $\tau$-cohesive clusters.
\item All $\tau$-cohesive clusters $Q^\tau$ are representative samples for all clusters.
%Points from cluster $C_i$ are assigned correctly wrt its $\tau$-cohesive cluster $Q^\tau_i$ via     \[\forall \mathbf{x} \in C_i, \argmax_{j \in [1,k]} \mathbbm{m}_{\kappa}(\mathbf{x}|Q^\tau_j ) = i.    \] 
\end{enumerate}

%The density-bias issue has a potential impact on the clustering outcome in each of the first two steps in DMC.

%Let $G_S$ and $G_T$ are the two initial clusters identified in step 1 in DMC, which are subsets of (ground-truth)  sparse and dense clusters $C_S$ and $C_T$, respectively, that are in close proximity to each other. Let $B$ be part of the  $C_S \setminus G_S$ sparse region bordering $C_T$.

%\textcolor{red}{Many points in $B$ will be assigned to $C_T$ in step 2 because the density maximization favors dense cluster more than sparse cluster in point assignment, i.e., the bias towards dense cluster.}

% \textcolor{red}{Provide an example to demonstrate the bias: Two Gaussian clusters of different variances, and the same number of points are randomly selected in each of the initial clusters. Then, many sparse points bordering the dense cluster will be assigned to the dense cluster.\\
% Provide another example of two clusters of uniform distribution of the same area size but different data sizes.\\
% Using the same two examples, show that MMC has no bias and produces a better clustering outcome in each example.\\
% Examine whether the increase in density ratio increases the bias, leading to increase in AUC difference. Plot a graph to demonstrate this.} 

%Some conditions DMC fails: (a) DMC is likely to split a large high-density cluster into multiple clusters, when it has other small low-density clusters close by. (b) A small density cluster encroaches into neighboring large sparse clusters \textcolor{red}{Please provide 1-2 additional conditions. Formalize these conditions.}

Two conditions of the data distribution in which the density-cluster bias has a negative impact on the clustering outcomes of DMC are given in the next two subsections.

\subsection{First condition}
\label{sec-first-condition}

Consider a dataset which consists of two dense clusters in close proximity and a distant sparse cluster, as shown in Figure \ref{fig:chain-demo}. 

%called \textbf{global-threshold-on-varied-densities}, is the use of a global threshold $\tau$ in step 1 of DMC   on a data distribution such as 3G (with two dense clusters and one sparse cluster) is shown in Figure \ref{fig:demonstration-step1}. Here the identification of three $\tau$-cohesive clusters must set a low $\tau$ in order to identify the sparse cluster. By doing so, the two dense clusters are `merged' into a single $\tau$-cohesive dense cluster only because of low $\tau$. To find the third cluster, a few points at the edge of the sparse cluster are spuriously taken as the third $\tau$-cohesive cluster. 

 \begin{figure}[h]
    \subfloat[Gaussian Kernel as $\kappa$]{\includegraphics[width=0.45\textwidth]{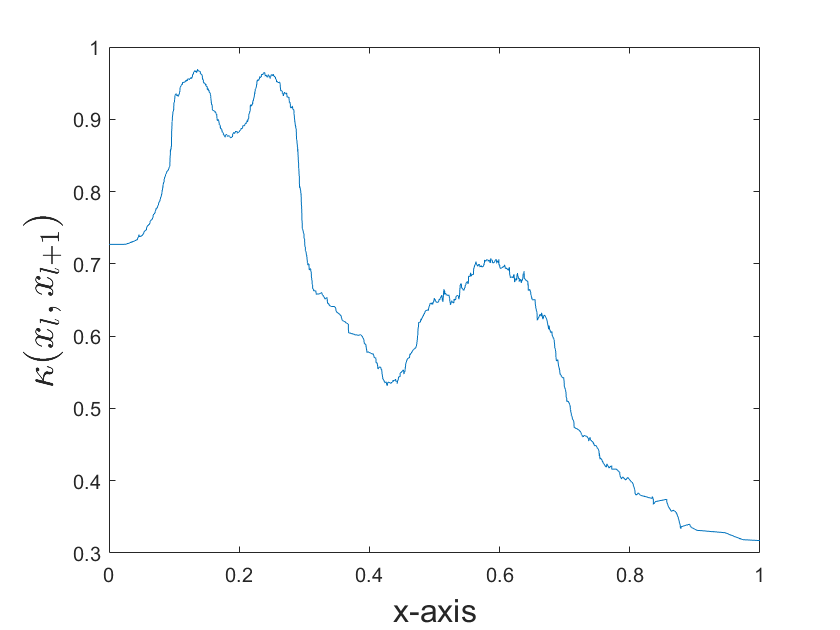}}    \subfloat[Isolation Kernel as $\kappa$]{\includegraphics[width=0.45\textwidth]{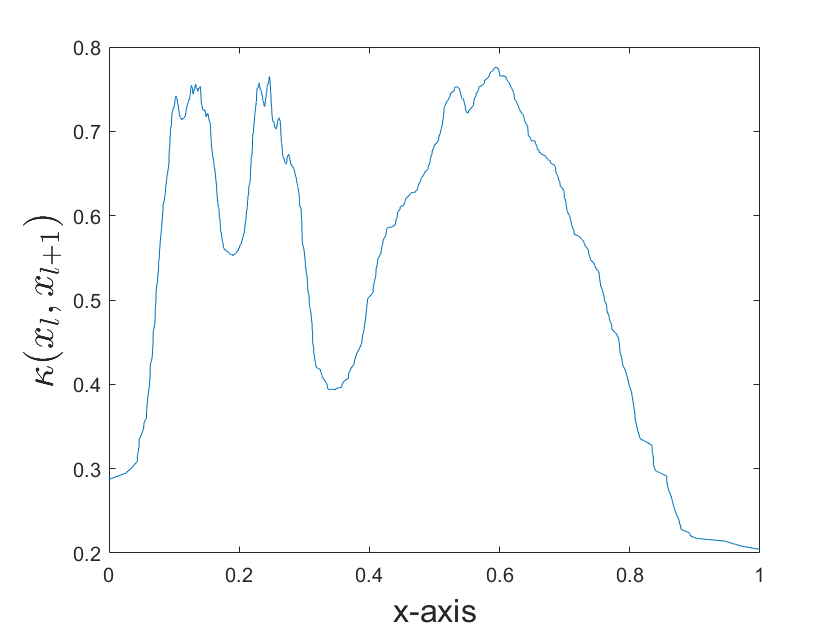}}
     \caption{An illustration of the first condition in terms of the distribution of $\kappa(\mathbf{x}_{\imath},\mathbf{x}_{\imath+1})$. Moving averages with window sizes of 150 and 25 are used to produce the distributions of GK and IK, respectively. }
     \label{fig:chain-demo}
    \end{figure}

Let ${c}_{i}$ be the highest-similarity peak of cluster $C_i$, i.e., ${c}_{i}=\argmax_{\mathbf{x}\in C_i} \sum_{\mathbf{y} \in C_i} \kappa(\mathbf{x},\mathbf{y})$; and the maximum similarity with respect to the peak of the sparse cluster $C_\alpha$ be $\hat{s}_{\alpha}= \max_{\mathbf{x}\in C_\alpha} \kappa(\mathbf{x},c_\alpha)$. Further let $E$ be the set of all $\kappa_\tau$-connected chains linking the peaks of the two dense clusters $C_{\beta_1}$ and $C_{\beta_2}$; and $\imath \lhd e$ denote the index $\imath$ in the chain ${e}= [\mathbf{x}_1,\cdots,\mathbf{x}_{\imath},\cdots,\mathbf{x}_w], \forall e \in E$.
%and let  $\displaystyle \hat{g} = \max_{e \in E} \min_{\imath \lhd e} \kappa(\mathbf{x}_{\imath},\mathbf{x}_{\imath+1})$, where $\imath \lhd e$ denotes the index $\imath$ in the chain ${e}$.

\begin{lem}
The condition under which DMC fails to detect all three clusters of $C_\alpha$, $C_{\beta_1}$ and $C_{\beta_2}$ is:
\[
\hat{s}_{\alpha} < \max_{e \in E} \min_{\imath \lhd e} \kappa(\mathbf{x}_{\imath},\mathbf{x}_{\imath+1}). 
\]
\end{lem}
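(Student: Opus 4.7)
The plan is to pin down the range of $\tau$ under which Step~1 of DMC can return $C_{\alpha}$, $C_{\beta_{1}}$ and $C_{\beta_{2}}$ as three separate $\tau$-cohesive clusters, and then show that the stated condition makes this range empty. Setting $\tau^{*} := \max_{e \in E} \min_{\imath \lhd e} \kappa(\mathbf{x}_{\imath},\mathbf{x}_{\imath+1})$, I would first establish the bottleneck characterisation: $c_{\beta_{1}}$ and $c_{\beta_{2}}$ are $\kappa_\tau$-connected in $D$ if and only if $\tau \leq \tau^{*}$. The forward direction exhibits the chain that attains the maximum, all of whose consecutive similarities are at least $\tau^{*} \geq \tau$; the reverse direction is immediate from the definition of the maximum, since $\tau > \tau^{*}$ forces every chain in $E$ to contain at least one link weaker than $\tau$. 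Consequently, whenever $\tau \leq \tau^{*}$, Definition~\ref{def-tau-cohesion} forces the two dense peaks into the same $\tau$-cohesive component, and DMC can never return $C_{\beta_{1}}$ and $C_{\beta_{2}}$ as separate clusters in Step~1. Hence separation of the two dense clusters requires $\tau > \tau^{*}$.

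I would next argue that an opposing constraint $\tau \leq \hat{s}_{\alpha}$ is required to recover the sparse cluster. By the definition of $\hat{s}_{\alpha}$, no point of $C_{\alpha}$ has similarity larger than $\hat{s}_{\alpha}$ with the peak $c_{\alpha}$, so as soon as $\tau > \hat{s}_{\alpha}$ the peak becomes an isolated vertex in the $\kappa_\tau$-graph restricted to $C_{\alpha}$. To extend this into a collapse of the entire sparse cluster's $\kappa_\tau$-graph, I would invoke the fact that $c_{\alpha}$ is the highest-similarity peak of $C_{\alpha}$: its neighbourhood attains the largest local density, so any other point $\mathbf{y} \in C_{\alpha}$ has nearest-neighbour distance at least that of $c_{\alpha}$. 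Coupled with the monotonicity of the Gaussian kernel in distance, this yields $\max_{\mathbf{x} \in C_{\alpha} \setminus \{\mathbf{y}\}} \kappa(\mathbf{x},\mathbf{y}) \leq \hat{s}_{\alpha}$ for every $\mathbf{y}$, so the $\kappa_\tau$-graph on $C_{\alpha}$ is edgeless and $C_{\alpha}$ yields no $\tau$-cohesive cluster that can serve as a representative sample of it---consistent with Proposition~\ref{Lemma-cohesiveness-Gaussian}. Hence recovery of $C_{\alpha}$ requires $\tau \leq \hat{s}_{\alpha}$.

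Combining the two, DMC can only succeed if some $\tau$ lies in $(\tau^{*},\,\hat{s}_{\alpha}]$, which is non-empty only when $\hat{s}_{\alpha} > \tau^{*}$. The assumed condition $\hat{s}_{\alpha} < \tau^{*}$ renders this interval empty, so every choice of $\tau$ either merges $C_{\beta_{1}}$ with $C_{\beta_{2}}$ or fragments $C_{\alpha}$, completing the proof. The main obstacle I anticipate is the second paragraph: strictly speaking, ``$c_{\alpha}$ isolated'' does not by itself imply ``no large $\tau$-cohesive component in $C_{\alpha}$'', and the bridge between the two rests on the mild assumption that $c_{\alpha}$ is also the densest point of $C_{\alpha}$. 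An alternative route, if one is uncomfortable with that assumption, is to weaken the conclusion to say that $C_{\alpha}$ is not recovered as a representative sample---which is all that Step~1 of DMC needs in order to fail.
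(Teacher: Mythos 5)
Your proposal is correct and takes essentially the same route as the paper: the paper's proof is exactly the two-case split on $\tau$ relative to $\hat{s}_{\alpha}$ (a low $\tau < \hat{s}_{\alpha}$ recovers $C_\alpha$ but merges the two dense samples into a single $Q^\tau_\beta$; a high $\tau > \hat{s}_{\alpha}$ separates $Q^\tau_{\beta_1}$ and $Q^\tau_{\beta_2}$ but leaves $Q^\tau_\alpha$ empty), which is logically your empty-interval argument on $(\tau^{*},\hat{s}_{\alpha}]$. If anything, your write-up is more explicit than the paper's, which never states the bottleneck characterisation of when the two dense peaks become $\kappa_\tau$-connected and also does not justify the collapse of $C_\alpha$ beyond the assertion you flag in your final paragraph, so the gap you identify is one the published proof shares rather than one you have introduced.
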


\begin{proof}
With a low setting of $\tau < \hat{s}_{\alpha}$, it is possible to obtain $Q^\tau_\alpha$ which is a good representative sample of the sparse cluster. However, this creates an oversampling of both dense clusters in such a way that only a single $Q^\tau_\beta$ emerges because the two samples, from the two clusters, have mingled into one in step 1 of DMC. With a single $Q^\tau_\beta$, the final clustering outcome is one cluster only, for the two dense clusters.

A high setting of $\tau > \hat{s}_{\alpha}$ exists such that it produces $Q^\tau_{\beta_1}$ and $Q^\tau_{\beta_2}$, i.e., the two representative samples of the two dense clusters. But non-empty $Q^\tau_\alpha$ could not be obtained.
%this creates $Q^\tau_\alpha$ which is an under-represented sample of the sparse cluster. This under-represented $Q^\tau_\alpha$ will recruit fewer members than its true members, when competing with one of the dense $Q^\tau_{\beta}$ samples.

In both cases, which are a direct outcome of the high-density bias shown in Figure~\ref{fig:chain-demo}(a), neither could identify all three clusters correctly in the dataset because no appropriate representative samples are created for all three clusters.
\end{proof}

In simple terms, the first condition is the data distribution of the three clusters such that the maximum similarity with respect to the mode of the sparse cluster is less than the maximum similarity of points at the valley between the two dense clusters.

Recall that MMC has exactly the same algorithmic procedure as DMC with the exception of using IK instead of the Gaussian Kernel. Yet, MMC has no high-density bias in step 1 and correctly identifies all the two dense clusters and one sparse cluster in the final clustering outcome. This is because MMC is able to produce a representative sample for each cluster, irrespective of their densities, as shown in Figure \ref{fig:chain-demo}(b), and it is stipulated in Section \ref{sec-representative-sample}. %Note that the mass distribution of the initial $\tau$-cohesive clusters (shown in Figure \ref{fig:demonstration-step1}(c)) has deep valleys between any two adjacent clusters, so as those in the final clustering outcome (shown in Figure \ref{fig:demonstration-step1}(f)). That is, each deep valley has $g_{ij}$ the lowest value for any two clusters $C_i$ and $C_j$; and $\max_{\mathbf{x}\in C_i} \kappa(\mathbf{x},c_i) > g_{ij}, \forall_{i,j}$.  
%The deep valleys enable the correct identification of the three clusters of varied densities, even though the two dense clusters are close to each other.

\subsection{Second condition}
\label{sec-second-condition}

\begin{lem}
    DMC always produces a clustering outcome having the dense cluster encroach into the region of the sparse cluster under the following condition:

The data distribution has a sparse cluster $C_\alpha$ overlapping with a dense cluster $C_\beta$ such that 
\[\min_{\mathbf{x} \in C_\beta} \kappa(\mathbf{x}, \eta_\mathbf{x}) \gg \min_{\mathbf{y} \in C_\alpha} \kappa(\mathbf{y}, \eta_\mathbf{y})
\]
 % $\mathbbm{f}_{\kappa}(\mathbf{x}|Q^\tau_\alpha) < \mathbbm{f}_{\kappa}(\mathbf{x}|Q^\tau_\beta)$ for many $\mathbf{x}$ in the region of $C_\alpha$ bordering $C_\beta$, irrespective of the setting of $\tau$.
\end{lem}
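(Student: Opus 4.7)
The plan is to reduce this to the two propositions established earlier (Propositions \ref{Lemma-cohesiveness-Gaussian} and \ref{Lemma-cohesiveness}) together with the nature of DMC's mass-maximization assignment in step 2. The high-density bias of the Gaussian kernel says that $\bar{S}_\kappa(C^\tau_\beta) > \bar{S}_\kappa(C^\tau_\alpha)$ for any $\tau$ setting at which both $\tau$-cohesive clusters are non-empty, and this disparity will be turned into the claimed encroachment by examining both the sampling (step 1) and the assignment (step 2) of DMC.

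First I would argue over the choice of $\tau$ used in step 1 of DMC. For $Q^\tau_\alpha$ to be a representative sample of the sparse cluster $C_\alpha$ one needs $\tau \le \min_{\mathbf{y}\in C_\alpha} \kappa(\mathbf{y},\eta_\mathbf{y})$; otherwise the chain-of-neighbours defining $\kappa_\tau$-connectedness breaks inside $C_\alpha$ and $Q^\tau_\alpha$ either collapses to small fragments or becomes empty. Under the hypothesis $\min_{\mathbf{x}\in C_\beta} \kappa(\mathbf{x},\eta_\mathbf{x}) \gg \min_{\mathbf{y}\in C_\alpha} \kappa(\mathbf{y},\eta_\mathbf{y})$, any such $\tau$ is well below the internal similarity scale of $C_\beta$. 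Consequently $Q^\tau_\beta$ is saturated: its $\kappa_\tau$-connected component reaches across the overlap region, because the pairwise Gaussian similarities between points of $C_\beta$ and nearby points of $C_\alpha$ lying in the overlap easily exceed $\tau$. This is precisely the oversampling phenomenon illustrated in Figure \ref{fig:section3}(b) for the Gaussian kernel, now applied to the overlapping case. The alternative, taking $\tau$ large enough to keep $Q^\tau_\beta$ within $C_\beta$, kills $Q^\tau_\alpha$ and therefore fails trivially, so the only regime of interest is the one just described.

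Next I would analyse step 2. Each point $\mathbf{x}$ in the overlap region is assigned to $\arg\max_j \mathbbm{f}_\kappa(\mathbf{x}\,|\,Q^\tau_j)$, where $\mathbbm{f}_\kappa(\mathbf{x}\,|\,Q^\tau_j) = |Q^\tau_j|^{-1}\sum_{\mathbf{y}\in Q^\tau_j}\kappa(\mathbf{x},\mathbf{y})$ is a Gaussian KDE-style average. Because the Gaussian kernel is distance-monotone and data-independent, and because $Q^\tau_\beta$ is dense while $Q^\tau_\alpha$ is sparse, the average similarity from any boundary point $\mathbf{x}$ near $C_\beta$ satisfies $\mathbbm{f}_\kappa(\mathbf{x}\,|\,Q^\tau_\beta) > \mathbbm{f}_\kappa(\mathbf{x}\,|\,Q^\tau_\alpha)$ in a neighbourhood that extends well inside $C_\alpha$. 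Hence all such $\mathbf{x} \in C_\alpha$ are relabelled to $C'_\beta$. The post-processing step 3 only increases the objective $M(D)$, which already prefers this assignment, so the encroachment is preserved in the final output $\mathbb{C}$.

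The main obstacle is making the word ``always'' precise: I would state the conclusion as a dichotomy over all $\tau$ choices in step 1 — either $Q^\tau_\alpha=\emptyset$ (DMC outputs fewer than $k$ meaningful clusters, trivially failing to recover $C_\alpha$), or $Q^\tau_\alpha \ne \emptyset$ but then the gap condition forces $Q^\tau_\beta$ to extend into the overlap and the step-2 argmax assigns the overlap to $C'_\beta$. Quantifying this threshold cleanly requires linking the Gaussian decay rate to $\min_{\mathbf{x}\in C_\beta} \kappa(\mathbf{x},\eta_\mathbf{x})$; I would do this by writing $\kappa(\mathbf{x},\mathbf{y}) = \exp(-\|\mathbf{x}-\mathbf{y}\|^2/(2\sigma^2))$ and translating the similarity inequality into a distance inequality, then using the triangle inequality to bound the reach of $Q^\tau_\beta$ into $C_\alpha$. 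The rest is bookkeeping, and contrast with MMC — where Proposition \ref{Lemma-cohesiveness} guarantees $\bar{S}_\kappa(Q^\tau_\alpha) \approx \bar{S}_\kappa(Q^\tau_\beta)$ so no such $\tau$ dichotomy arises — makes the role of the density-kernel bias explicit.
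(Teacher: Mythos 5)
Your proposal is correct and its core is the same reduction the paper makes: the failure is traced to the Gaussian kernel's cohesiveness disparity $\bar{S}_\kappa(Q^\tau_\beta) > \bar{S}_\kappa(Q^\tau_\alpha)$ (Proposition \ref{Lemma-cohesiveness-Gaussian}) and hence to the non-existence of a $\tau$ that yields representative samples for both clusters simultaneously, as interpreted in Section \ref{sec-representative-sample}. The paper's own proof stops there: it consists of two sentences asserting that ``we only need to show that $Q^\tau_\alpha$ and $Q^\tau_\beta$ produced in step 1 are not all representative samples,'' and then invoking the disparity. Where you genuinely diverge is in carrying the argument through step 2: you show that even once the (non-representative or over-extended) $Q^\tau_\beta$ is fixed, the density-maximization argmax $\mathbbm{f}_\kappa(\mathbf{x}\,|\,Q^\tau_\beta) > \mathbbm{f}_\kappa(\mathbf{x}\,|\,Q^\tau_\alpha)$ relabels overlap points of $C_\alpha$ into $C'_\beta$, and that post-processing cannot undo this since it only increases the objective. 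The paper places exactly this step-2 observation outside the lemma, in Section \ref{sec-Density-maximization-issue}, where it notes that the encroachment occurs \emph{even when} $Q_i = C_i$ is the ground truth. Your version is therefore arguably the more complete proof of the conclusion as literally stated (encroachment in the final clustering outcome), since non-representativeness of the step-1 samples does not by itself entail encroachment in $\mathbb{C}$; the paper's version buys brevity by leaning on Section \ref{sec-representative-sample} and deferring the assignment-stage effect. Your explicit $\tau$-dichotomy (either $Q^\tau_\alpha = \emptyset$, or $Q^\tau_\beta$ reaches across the overlap) is also a sharper articulation of ``always'' than the paper offers, though note that, like the paper, you still rely on the qualitative Proposition \ref{Lemma-cohesiveness-Gaussian} rather than a quantitative bound, so the promised triangle-inequality bookkeeping would be needed to make the reach of $Q^\tau_\beta$ into $C_\alpha$ fully rigorous.
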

\begin{proof} We only need to show that $Q^\tau_\alpha$ and $Q^\tau_\beta$ produced in step 1 are not all representative samples of the two clusters. When $\kappa$ is the Gaussian Kernel which leads to $\bar{S}_\kappa(Q^\tau_\alpha) < \bar{S}_\kappa(Q^\tau_\beta)$,  there exists no $\tau$ which produces representative samples for both $Q^\tau_\alpha$ and $Q^\tau_\beta$, as stated in Section \ref{sec-representative-sample}. 
\end{proof}

In contrast, due to the property $\bar{S}_\kappa(Q^\tau_\alpha) \approx \bar{S}_\kappa(Q^\tau_\beta)$, MMC produces the representative samples for both clusters, irrespective of their densities (as stipulated in Section~\ref{sec-representative-sample}).

It is interesting to note that the underlying reason of DMC's failure to detect all clusters in the above two conditions (as well as other conditions which have clusters of varied densities) is that, in step 1 of DMC, \emph{no $\tau$ exists which could produce $Q^\tau$ to be a representative sample  for every cluster in a given dataset}.

\subsection{The impact of the high-density bias in DMC}
%on the density maximization criterion used in step 2 of DMC}
\label{sec-Density-maximization-issue}

Here we show that, under certain conditions, the high-density bias has a negative impact on  the density maximization criterion used in step 2 of DMC, re-stated as follows:

\[C'_j=\{\mathbf{x}\in D\ |\ \mathop{\argmax}\limits_{i \in [1,k]} \mathbbm{f}_{\kappa}(\mathbf{x}|Q_i )=j \},\forall_{j\in [1,k]}.
\]

For the same second condition described in Section \ref{sec-second-condition},
it is interesting to note that, even if $Q_i=C_i$ (i.e., the ground-truth cluster), the dense cluster produced $C'_\beta$ encroaches on the region of sparse cluster $C_\alpha$; resulting in $C'_\alpha$ to cover an area less than it is supposed to be. An example is shown in Figure \ref{fig:demonstration-step2}. This is the effect of the high-density bias, where a dense cluster has a stronger field of attraction that `sucks in' sparse points in the border region between the dense and sparse clusters\footnote{A study in the context of k-nearest neighbor classification revealed a similar phenomenon \cite{LMN-MLJ2019}, where the points in the sparse region, bordering the dense region, are more likely to be classified as belonging to the dense class.}.

\begin{figure}[h]
    
    \centering
    % \subfloat[Ground truth]{\includegraphics[width=.3\textwidth]{figures/gaussian.png} }
    % \hspace{0.5cm}
    % \subfloat[DMC]{\includegraphics[width=0.3\textwidth]{figures/2gau_dmc2.png}}
    % \hspace{0.5cm}
    % \subfloat[MMC]{\includegraphics[width=0.3\textwidth]{figures/2gau_mmc.png}}

    \subfloat[$Q_i$ in Step 1 is replaced with the ground-truth clusters]{\label{smp-init}\includegraphics[width=0.3\textwidth]{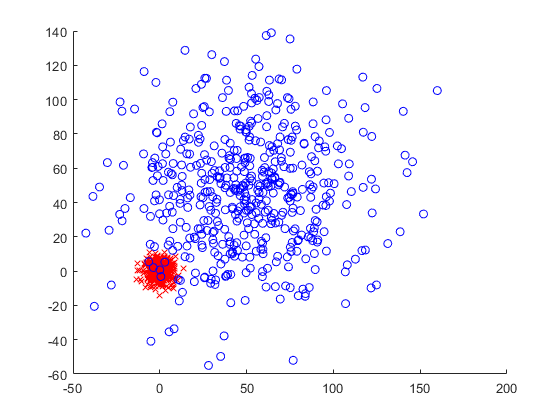}}
    \hspace{0.5cm}
    \subfloat[Clustering outcome of Step 2 based on the ground-truth clusters shown in Figure \ref{smp-init}]{\includegraphics[width=0.3\textwidth]{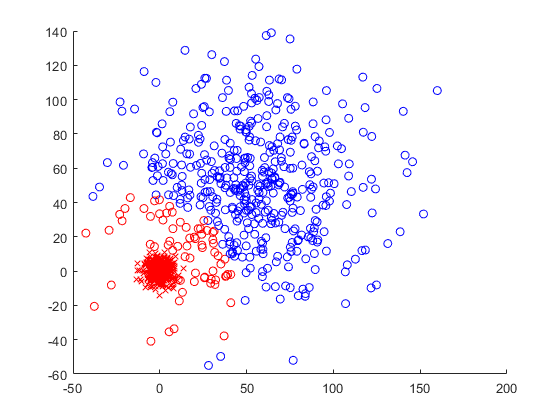}}
  \caption{An example impact of the high-density bias in step 2 of DMC (where $Q_i$ obtained in step 1 is replaced with the ground-truth clusters). }
  \label{fig:demonstration-step2}
\end{figure}

Indeed, when $Q$ is an unrepresentative sample of a cluster (as described in the last subsection), the situation becomes worse.

For the same first condition described in Section \ref{sec-first-condition}, step 2 of DMC has no problem identifying the three clusters correctly if $Q_i=C_i$, or $Q_i$ is a representative sample of $C_i$ (e.g., as discovered by using the Isolation Kernel as $\kappa$ in step 1). This is because the sparse cluster is far from the dense clusters.

%\textcolor{red}{This weakness appears to be the same for kernel k-means (probably the same for spectral clustering). Not sure that this shall be raised here. If it does, then there is a need to demonstrate that the same weakness does occur in these algorithms.}

\subsection*{Section summary}

DMC has the high-density bias, due to the use of density distributions. The bias has two impacts: (a) it could prevent step 1 of DMC from finding the representative samples of the to-be-discovered clusters, and (b) it could impede step 2 from assigning points to the correct clusters, when a dataset has clusters of varied densities of some condition. We have identified two such conditions in this section.

\section{Conceptual differences in clustering algorithms}

\subsection{\textcolor{black}{Conceptual differences in mass-based clustering and density-based clustering}}

It is interesting to contrast the \textbf{conceptual differences between the first two steps in terms of cluster formation} in MMC, shown in Algorithm \ref{alg:MMC}. From the algorithmic perspective, the first step employs a point-to-point linking process, typically used in existing density-based clustering algorithms (though the details differ), to form $\tau$-cohesive clusters (not final clusters). The second step, which is the main process for cluster formation, assign points by maximizing the similarity of each point with respect to a distribution (represented by a $\tau$-cohesive cluster). This process does not rely on a point-to-point linking process, but mass-maximization for all clusters (as mass distributions) via the Isolation Kernel. It is a point-to-distribution operation because $\mathbbm{m}_{\kappa}(\mathbf{x}|Q^\tau_i )$ is performed via Eq (\ref{eqn-mass-wrt-cluster}).

From the search perspective, the point-to-point linking is a \emph{point} search operation which involves a nearest neighbor search only. The mass-maximization process is a \emph{cluster-as-distribution} search operation which searches the most similar distribution out of the $k$ distributions, representing the $k$ clusters. MMC is unique among existing clustering algorithms because it uses point search in the first step and cluster-as-distribution search in the second step. Density-based clustering algorithms  require essentially point search only\footnote{
%An idea of clustering by combined continuity and globular bias was suggested earlier \cite{Jarvis-Patrick-1973}. But the globular bias refers to the globular structure that can be represented by a mean vector and it does not involve a cluster/distribution search as in the case of mass-maximization. See Appendix~\ref{sec-JP-localvsGlobal} for details. Also 
Note that density estimation is a point estimation problem that often involves a point search (e.g., the $\epsilon$-neighborhood estimator, employed in DBSCAN and DP, uses nearest neighbor search). This search is not for the purpose of clustering, but an essential computational expense.}; and k-means-based clustering algorithms (including spectral clustering) rely on cluster search only, and the cluster is not treated as a distribution, even in kernel k-means \cite{Scalable-kMeans-JMLR19,KNNKernel}.

Note that any point-to-point linking process has at least quadratic time complexity because of the need to compute pair-wise similarities or distances. MMC reduces the actual runtime because the first step only needs to be conducted on a small dataset which is sufficient to represent the data distribution of every cluster. The mass-maximization step makes use of the cluster representation to form the final cluster from the full dataset in linear time. This allows MMC to complete the entire clustering process in linear time, instead of at least quadratic time complexity of existing density-based clustering algorithms.

In addition, the definition of the density-connected clusters in DBSCAN and that of the $\kappa_\tau$-cohesive clusters in MMC bear some resemblance, and they both use a threshold to define clusters. But the former is based on density and thus requires a density estimator, and the latter relies on the Isolation Kernel as $\kappa$ only.

\subsection{\textcolor{black}{Existing key approaches to mitigate issues in density-based clustering algorithms}}

Density-based clustering defines a cluster as a contiguous region of  high-density points, where each cluster is separated by  contiguous regions of low-density points. DBSCAN is a classic density-based clustering algorithm. However, it has two key weaknesses since its introduction, i.e., difficulty in detecting clusters with varied densities and unable to scale to large datasets.

To overcome the first weakness, existing methods  either use an adaptive similarity/dissimilarity measure\footnote{Similar ideas have been used in spectral clustering (e.g., \cite{zelnik2005self}) and classification via distance metric learning \cite{yang2006distance,bellet2022metric} and Multiple Kernel Clustering~\cite{liu2020optimal}. But they all have all high computational cost.} (e.g., \cite{Jarvis-Patrick-1973,SNN-DBSCAN-SDM2003,pei2009decode,
%zhu2016density,
IsolationKernel-AAAI2019}) or apply a hierarchical approach to extract different density levels of clusters (e.g., \cite{ankerst1999optics,HDBSCAN-2015,neto2017efficient,zhu2022hierarchical}). But the high time complexity remains an issue for these methods because of the use of a density estimator and the point-to-point linking process to form a cluster. 

To scale to large datasets,  incremental-based methods (e.g., \cite{ntoutsi2012density,hassani2015subspace}), distributed methods (e.g., \cite{he2014mr,heidari2019big}) and approximate methods (e.g., \cite{luchi2019sampling,chen2021block,huang2023grit}) have been developed in the last two decades. However, these methods only enable execution on a large dataset up to a certain scale, without addressing the fundamental high time complexity of density estimator and point-to-point linking process\footnote{Some algorithms \cite{huang2023grit} utilise a grid-based method for approximation and merging neighboring dense grids to link cluster members, but this process is still a variation of point-to-point linking process and forming grids in high dimensions is a computationally expense process.}. 

None of the above methods resolve both weaknesses simultaneously and satisfactorily. In contrast, the proposed MMC has the superior advantage of detecting clusters with varied densities in massive data, not constrained by high time complexity. 

%\textcolor{red}{Metric learning aims to learn a new metric to reduce the distances between samples from the same class the increase the distances between the samples from different classes.\cite{yang2006distance}\cite{bellet2022metric}  It is not motivated to reduce the density bias, although in effect it may mitigate the density bias problem. }

% To identify clusters with varying shapes, sizes, and densities \textcolor{red}{Has this been stated explicitly in the paper?}, Shared Nearest neighbors (SNN) clustering~\cite{SNN-DBSCAN-SDM2003} proposes adaptive similarity measures based on the overlap between $k$-nearest neighbors lists, replacing the Euclidean distance measure used in DBSCAN~\cite{ester1996density} clustering. \textcolor{red}{Has it been shown to have achieved the stated aim/motivation?}
% SNN similarity is relatively insensitive to variations in density and dimensionality. 

% Additionally, data-dependent approaches have been employed in kernel clustering using either locally adaptive weights or locally adaptive kernels. Adaptive Gaussian Kernel~\cite{zelnik2005self} introduces a local scaling parameter (bandwidth in the Gaussian kernel) that enables self-tuning of point-to-point distances based on the local statistics of the neighborhoods. This allows spectral clustering to detect clusters of multiple scales within a cluttered background. Multiple Kernel Clustering~\cite{liu2020optimal} constructs and combines several adaptive local kernels to incorporate the local density information of individual data points, resulting in improved clustering performance.

\subsection{MMC/DMC versus GMM}

Note that mass (probability) maximization is a generic criterion which has been used commonly in probabilistic modeling, and in the clustering context,  Gaussian Mixture Model (GMM).

The algorithm optimizes the parameters of a GMM that best fit a given dataset, where the individual model components are assumed to take some specific parametric form, i.e., Gaussian distribution. The best fit is achieved via some parameter optimization methods such as MLE (Maximum Likelihood Estimation) or MAP (Maximum A Posteriori Estimation) \cite{Parameter-estimation-Book1980}, which are a form of (probability) mass/density maximization, albeit aims at parameter estimation of an assumed model.

MMC/DMC optimizes the clusters that best represent the dataset, without a parametric assumption, enabling each cluster to be arbitrary shape, size and density. The best representation is achieved via a kernel mass/density estimator for each cluster by maximizing the mass/density of each point with respect to the representative sample of a cluster.

The details of these differences are given below.

GMM assumes Gaussian distribution $p(\cdot|\theta)$ with parameter $\theta$ (which consists of a mean vector and a covariance matrix). The probability of a dataset $D$ generated from  a mixture of $k$ components of Gaussian distributions with parameter set $\Theta$ is expressed as:   
\[
p(D|\Theta) = \prod_{\mathbf{x} \in D} \sum_{i=1}^k w_i \times p(\mathbf{x}|\theta_i)
\]
\noindent where $\Theta = \{w_i, \theta_i, i=1,\dots,k \}$ is a collection of parameters of the mixture model which is to be optimized via MLE.

MLE maximizes the probability $p(D|\Theta)$ as follows:
\[
\Theta_{MLE} = \argmax_\Theta p(D|\Theta)
\]

In contrast, MMC/DMC uses a kernel estimator ($\mathbbm{m}_{\kappa}$ or $\mathbbm{f}_{\kappa}$) to represent an initial cluster, as shown in Eq (\ref{eqn-mass-wrt-cluster}).
It then assigns points based on mass-maximization (or density-maximization) with respect to the initial cluster $Q^\tau_i$ (Definitions \ref{def-highest-mass-cluster} \& \ref{def-objective}) as follows:
\[C'_j=\{\mathbf{x}\in D\ |\ \mathop{\argmax}\limits_{i \in [1,k]} \mathbbm{m}_{\kappa}(\mathbf{x}|Q^\tau_i )=j \},\forall_{j\in [1,k]}.
\]

The above reveals three key differences. First, GMM maximizes the probability of the entire given dataset. In DMC, the density of a point of the dataset is estimated from an initial cluster of which the point could be a member. The same applies to mass using MMC. Second, MMC/DMC does not attempt to estimate the parameters of a parametric model, but assign points with respect to initial cluster $Q^\tau_i$ by recruiting members of the highest mass/density, as estimated by the kernel estimator based on $Q^\tau_i$. This makes a huge difference in two aspects: (a) GMM optimizes a collection of parameters which is significantly larger than that in MMC/DMC. (b) The clusters discovered by MMC/DMC can be of arbitrary shapes, sizes and densities; but those found by GMM are constrained to Gaussian distribution  only.   Third,  in the final step of Algorithm \ref{alg:MMC}, MMC/DMC refines the point assignment to achieve the final objective:
\[
\displaystyle \max_\mathbb{C} \sum_{C \in \mathbb{C}} \sum_{\mathbf{x} \in C} \mathbbm{m}_{\kappa}(\mathbf{x}|C ) \mbox{ or } \displaystyle \max_\mathbb{C} \sum_{C \in \mathbb{C}} \sum_{\mathbf{x} \in C} \mathbbm{f}_{\kappa}(\mathbf{x}|C )
\]
\noindent by simply  tweaking at the edges of every cluster $C'$ obtained in the second step.

Interestingly, step 2 in MMC/DMC may be viewed as a generative model (though not in a conventional way) where a non-parametric `model' (i.e., $Q^\tau_i$ in step 2) is assumed to be the representative sample of an unknown distribution which generates the cluster in the given dataset $D$.

Table \ref{tab:comparison} provides a comparison of the characteristics of different clustering algorithms. The first group (k-means, spectral clustering and GMM) is based on an optimization algorithm. The second group is existing density-based algorithms which have no objective function. MMC/DMC is the proposed algorithms which have an objective function, and yet the cluster formation procedure (step 2) does not rely on an optimization algorithm.

\begin{table}[h]
    \centering
    \setlength{\tabcolsep}{1pt}
    \begin{tabular}{c|cccc}
         & Cluster Definition & Objective & Core Operation & Comments  \\ \hline
    k-means & Mean vector & Min mean-square-error & EM algorithm & Super-polynomial\\
    SC & Undefined & Minimum graph cut & Eigen-decomposition  & Feature transf.\\ 
    GMM & Gaussian distribution & MLE or MAP & EM algorithm & Parameter est. \\ \hline
    DBSCAN  & Density-connected clusters & Nil & Pt-to-pt linking & Quadratic\\
    DP      & $\eta$-linked clusters & Nil & Pt-to-pt linking & Quadratic\\
    LGD      & kNN-graph-defined clusters & Nil & Pt-to-pt linking & Quadratic\\
    \hline
    DMC & Highest-density clusters & Max total density & MD pt assignment & Linear\\
    MMC & Highest-mass clusters & Max total mass & MM pt assignment & Linear\\
    \hline
    \end{tabular}
    \caption{Characteristics of different clustering algorithms. MD \& MM denote maximum density and maximum mass, respectively; $\eta$ denotes higher density nearest neighbor \cite{zhu2022hierarchical}.}
    \label{tab:comparison}
\end{table}

\begin{figure}[h]
    \centering
    \includegraphics[width=0.45\linewidth]{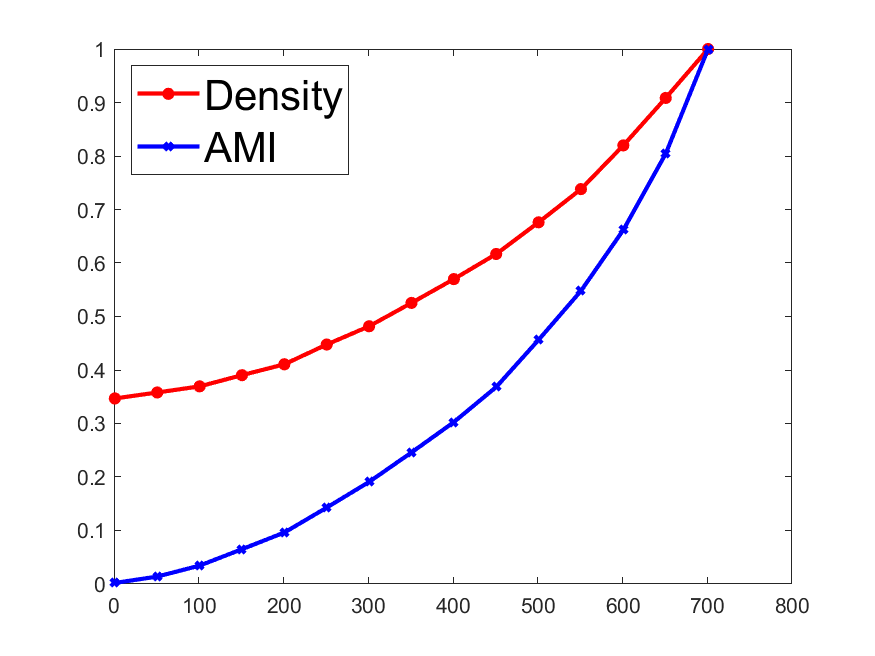}
    \includegraphics[width=0.45\linewidth]{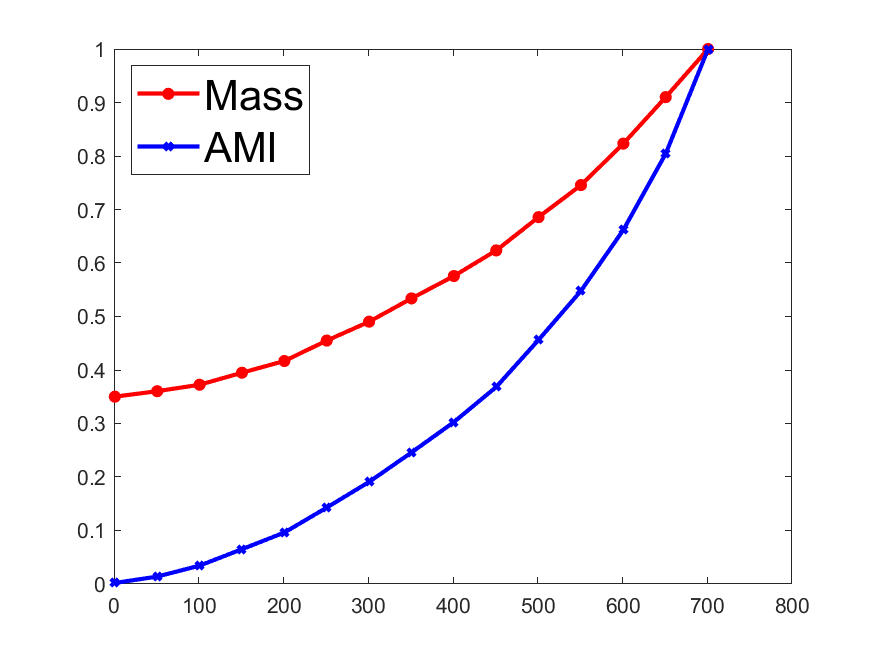}
    \caption{Total density or mass is a proxy for the goodness of a clustering outcome. The total density or mass  versus AMI on the 3G dataset as the error correction process progresses. Beginning with completely random point assignments to all clusters. Then repeatedly randomly select a subset (of 50 points) and correct its individual point assignment if the initial assignment was incorrect, i.e., the error correction rate increases as more points are selected.}
    \label{fig:enter-label}
\end{figure}

Figure \ref{fig:enter-label} shows that the total density or mass is a proxy to the goodness of a clustering outcome, i.e., the total density or mass is monotonically increasing, as the clustering errors are corrected incrementally, indicated by AMI. 

Recall the impact of the high-density bias on the density maximization criterion used in DMC, described in Section \ref{sec-Density-maximization-issue}. As this criterion is analogous to MLE, the same high-density bias applies in GMM too. We are not aware of the discussion of this impact in the GMM literature.

In summary, while MMC and DMC share a similar probability maximization criterion used in GMM at the highest level, the three algorithms differ substantially in terms of cluster definitions, clustering objectives and core operations in the algorithms.

\section{Experimental settings}

We use the Isolation Kernel which is implemented using isolating hyperspheres \cite{IDK} in MMC; and the Gaussian Kernel with the Nystr\"{o}m method \cite{Nystrom_NIPS2000}, which produces its approximate feature map, is employed in DMC.

The parameter search ranges for all algorithms under comparison are shown in Table \ref{tab:search-ranges}.

\begin{table}[!ht]
    \centering
     \caption{Parameter search range. The range $[L:Inc:H]$ denotes the range of values starting from the lowest value $L$ to the highest value $H$ with multiple increments of $Inc$. $k$ is the number of nearest neighbors for LGD and GLSHC. $s$ is the number of anchors or landmarks for SGL and GLSHC. The parameter of the Isolation Kernel used in MMC is: $t=200$. GMM needs no parameter setting.}
     \label{tab:search-ranges}
     \setlength{\tabcolsep}{1pt}
    \begin{tabular}{l|l}
    \hline
        Algorithm & parameter search range  \\ \hline
        MMC & $\psi$ $\in [2,4,6,8,16,24,32,64,128,256]$;  $\tau$ $\in [0.05:0.05:0.95]$  \\ 
        DMC & $\sigma$ $\in \{ 2^i| i \in [-5:1:5]\}$;  $\tau$ $\in [0.05:0.05:0.95]$   \\ 
        DP &  $\epsilon$ $\in [0.01:0.01:0.5]$  \\ 
        % MDP &  $\epsilon$ $\in [0.01:0.01:0.5]$; $\psi$ $\in \{ 2^i| i \in [1:1:8]\}$  \\ \hline
        DBSCAN &  $\epsilon$ $\in [0.01:0.01:0.5]$; minPts $\in [2:1:20]$   \\ 
        MBSCAN & $\psi$ $\in [2,4,8,16,32,64,128,256,512,1024]$; \& the same parameters  as used in DBSCAN \\ 
        % MBSCAN & $\epsilon$ $\in [0.01:0.01:0.5]$; minPts $\in [2:1:20]$ ; $\psi$ $\in \{ 2^i| i \in [1:1:8]\}$   \\ \hline
        % psKC &  $\psi \in [2,4,6,8,16,24,32,64,128,256]$; $\tau \in [0.1,0.5,1,5,10,50,100,800]\times 10^{-4}$ \\ \hline
%        IDKC &  $\psi$ $\in$ [2,4,6,8,16,24,32,64,128,256];$q \in$ [0.05n,0.1n,...,0.45n,0.5n];t=100\\  \hline
        HDBSCAN$^*$ & $min\_cluster\_size\in [5:5:50]$; \& the same parameters as used in DBSCAN \\ 
        LGD & $k\in$[2,4,6,10,15,20]; $\tau \in$ [0.5,0.52,0.56,0.62]\\ 
        SGL & $s\in$[30,40,50]; $\alpha \in$[0.1,1,10]; $\beta \in $[0.01,1,100] \\ 
        GLSHC & $k\in$ [3,5,10]; $s\in$ [10,20,30,50,100,200,500,1000]   \\ \hline
    \end{tabular}
   
\end{table}

Out of the ten artificial datasets shown in Table \ref{tab:f1}, the first eight datasets have different characteristics of dense and sparse clusters. For example, 3G and 2Gaussian have the first and second conditions described in Sections \ref{sec-first-condition} and \ref{sec-second-condition}, respectively. 3L has the first condition and additional conditions. G-Strip and 3G-HL are two datasets which have been used to demonstrate the fundamental problems of spectral clustering \cite{SC-Limitations-2006}. The last two datasets have two Gaussian subspace clusters of the same variance:  w50Gaussian has two 50-dimensional subspace clusters which make density estimation a difficult task on this 100-dimensional dataset; and w10Gaussian is a low dimensional version which has two 10-dimensional subspace clusters.
%and Gaussians two non-overlapping Gaussian clusters in 100-dimensional space which acts as an easy version relative to w10Gaussian and w50Gaussian.

We present the clustering outcomes of the a clustering algorithm in terms of two commonly used metrics, i.e., F1 and AMI (see the details in Appendix A). For each randomized algorithm (MMC, DMC, SGL and GLSHC), an average of 5 trials is reported for each dataset. 

The detailed descriptions of the datasets, additional experiment settings and the sources of the codes used are provided in Appendix B.

%\newpage
\section{Experiments}

The aims of the experiments are to:
\begin{enumerate}
    \item Examine the relative clustering outcomes of mass-based clustering and density-based clustering using the same proposed clustering algorithm.
    \item 
    %Investigate the fundamental limitations due to the use of density/mass,  
    Analyse the algorithmic limitations of individual clustering algorithms under the influence of the use of density in DMC,  DP \cite{rodriguez2014clustering}, DBSCAN \cite{ester1996density}, HDBSCAN$^*$ \cite{HDBSCAN-2015}\footnote{\textcolor{black}{HDBSCAN is claimed to have solved the single-threshold problem of DBSCAN in order to deal with datasets having varied densities \cite{HDBSCAN-PAKDD2013,HDBSCAN-2015}. Yet, it is a hierarchical version of DBSCAN which has the same cluster definition and core clustering procedure of DBSCAN. Therefore, they share many of the same limitations discussed in this paper. }}, LGD \cite{li2019LGD} and GMM \cite{reynolds2009gaussian}, as well as two versions of spectral clustering:  SGL \cite{kang2021structured} and GLSHC \cite{yang2021graphlshc}.
    \item Compare the scalability of the proposed clustering algorithm versus existing density-based clustering algorithms.
\end{enumerate}

\begin{table}[b]
\caption{Clustering results in terms of F1. Random denotes the outcome of random cluster assignment.  \textcolor{black}{NC} denotes that the parameter search could not be completed in five days.\\ * The average rank is computed over 14 real-world datasets, where the algorithms with \textcolor{black}{NC} are ranked last (or equal last). The average is computed on all datasets with F1 results, ignoring the ones having \textcolor{black}{NC}. `DB' and `HDB' denote DBSCAN and HDBSCAN$^\star$, respectively. MMC and MMC$^v$ are the same algorithm, except that the IK's used are implemented using Hyperspheres and Voronoi Diagrams, respectively. $\dagger$ denotes the best result that can be achieved in 5 days, without completing the entire parameter search.}
\label{tab:f1}
\setlength{\tabcolsep}{1.5pt}
\begin{tabular}{l|rrr|r|rr|rrrrrr|rr}
     \hline
        Datasets & Size & Dim & \#C & Random & MMC  & MMC$^v$ & DMC  & DP   & DB   & HDB  & LGD     & GMM   & SGL  & GLSHC \\ \hline
        Jain     & 373  & 2   & 2   & 0.49   & 1    & 1 & 1    & 0.74 & 1    & 0.98 & 1             & 0.58  & 0.91 & 1 \\
        3L       & 560  & 2   & 3   & 0.33   & 0.84 & 0.91  & 0.73 & 0.64 & 0.59 & 0.66 & 0.72      & 0.74  & 0.70 & 0.68 \\     
        3G-HL    & 900  & 2   & 3   & 0.27   & 0.97 & 0.70  & 0.97 & 0.98 & 0.87 & 0.93 & 0.98      & 0.53  & 0.55 & 0.48 \\
        3G       & 1500 & 2   & 3   & 0.36   & 0.98 & 0.99  & 0.73 & 0.98 & 0.57 & 0.67 & 0.97      & 0.51  & 0.65 & 0.98\\   
        2Gaussians	&1000	&2&	2   & 0.51	 & 0.99	& 0.98  & 0.93 & 0.97 &	0.83 & 0.89 &	0.99	& 0.99  & 0.94 &	0.99\\
        AC & 1004 & 2 & 2           & 0.50   & 1    & 1     & 1    & 1    & 1    & 1    & 0.90      & 0.85  & 0.91 & 1 \\ 
        G-Strip & 1400 & 2 & 2      & 0.50   & 0.97 & 0.94  & 0.95 & 0.93 & 0.70 & 0.95 & 0.86      & 0.98  & 0.84 & 0.85 \\
        RingG & 1536 & 2 & 4        & 0.26   & 1    & 1     & 1    & 0.96 & 0.67 & 0.96 & 1         & 0.37  & 0.64 & 1 \\ 
        % Gaussians & 2000 & 100 & 2 & 0.51 & 1 & 1 & 1 & 1 & 1 & 1 & 1 \\ 
        w10Gaussian & 2000 & 20 & 2 & 0.50   & 1    & 1  & 0.98 & 0.60 & 0.34 & 0.44 & 1            & 1     & 1 & 1 \\ 
        w50Gaussian & 2000 & 100 & 2 & 0.50 & 1     & 1 & 0.60 & 0.35 & 0.33     & 0    & 0.4       & 1     & 1 & 1 \\ \hline
        \multicolumn{5}{r|}{Average}       & 0.98  & 0.95  &	0.89    &	0.82   & 0.69 & 0.75 &	0.88	& 0.76  & 0.81 &	0.90 \\
        \multicolumn{5}{r|}{Average rank} & 3.00  & 3.60  &	4.85    &	6.10   & 7.95 & 6.90 &	4.70    & 6.20     & 6.95  & 4.75   \\ \hline
         wine & 178 & 13 & 3 & 0.38 & 0.95          & 0.96 & 0.97& 0.92& 0.72    & 0.59 & 0.90      & 0.89  & 0.99& 0.58\\
        seeds & 210 & 7 & 3 & 0.38 &  0.92          & 0.93 & 0.93& 0.91& 0.76      & 0.60 & 0.90    & 0.66  & 0.93& 0.85\\ 
        dermatology & 358 & 34 & 6 & 0.21& 0.91     & 0.95 & 0.82& 0.82& 0.52       & 0.51 & 0.92   & 0.75  & 0.76& 0.69\\ 
        Foresttype & 523 & 27 & 4& 0.27 & 0.83      & 0.85 & 0.82& 0.55& 0.25         & 0.38 & 0.80 & 0.31  & 0.80& 0.76\\
        COIL & 1440 & 1024 & 20 & 0.09 & 0.91       & 0.88 & 0.82 & 0.68 & 0.84       & 0.89 & 0.92 & 0.42  & 0.82 & 0.84 \\
        spam & 4601 & 57 & 2      & 0.50 & 0.75     & 0.80 & 0.60 & 0.68 & 0.38      & 0.59 & 0.73  & 0.48  & 0.80 & 0.81 \\ 
        gisette & 7000 & 5000 & 2 & 0.50 & 0.91     & 0.88 & 0.53 & 0.50 & 0.01     & $^\dagger$0.50 & 0.83 & 0.34 & 0.84 & 0.92 \\ 
        Pendig & 10992 & 16 & 10 & 0.11 & 0.87      & 0.83 & 0.75 & 0.78 & 0.70      & 0.75 & 0.86  & 0.52  & 0.76 & 0.88 \\
        USPS & 11000 & 256 & 10 & 0.12 & 0.73       & 0.68 & 0.51 & 0.26 & 0.27       & $^\dagger$0.26 & 0.64 & 0.38  & 0.60 & 0.73 \\ 
        imagenet-10 & 13000 & 128 & 10 & 0.11 & 0.91 & 0.91 & 0.91 & 0.85 & 0.85    & \textcolor{black}{NC} & 0.90 & 0.60   & 0.94 & 0.95 \\ 
        stl-10 & 13000 & 128 & 10 & 0.11 & 0.75     & 0.72  & 0.66 & 0.75 & 0.53    & \textcolor{black}{NC} & 0.60 & 0.51   & 0.73 & 0.74 \\ 
        letters & 20000 & 16 & 26 & 0.05 & 0.40     & 0.37  & 0.34 & 0.31 & 0.29    & \textcolor{black}{NC} & 0.37 & 0.23   & 0.35 & 0.37 \\ 
        cifar10 & 60000 & 128 & 10 & 0.10 & 0.74    & 0.78  & 0.73 & 0.76 & $^\dagger$0.01 & \textcolor{black}{NC} & \textcolor{black}{NC} & 0.54   & 0.76 & 0.76 \\ 
        mnist & 100000 & 784 & 10 & 0.10 & 0.77     & 0.77  & 0.54 & \textcolor{black}{NC} & $^\dagger$0.01 & \textcolor{black}{NC} & \textcolor{black}{NC} & 0.47 & 0.59 & 0.80 \\ 
%        Pendig\_Smp & ~ & ~ & ~ & ~ & 0.68 & ~ & ~ & ~ & 0.72 & 0.52 & 0.57 \\ 
%        mnist479\_Smp & 14654 & 784 & 3 & 0.29 & 0.79 & ~ & ~ & ~ & 0.94 & 0.53 & 0.61 \\ 
\hline
        \multicolumn{5}{r|}{Average}       & 0.81  & 0.81  &	0.71    &	0.68   & 0.44 & 0.56 &	0.78	& 0.51     & 0.76  &	0.76    \\
        \multicolumn{5}{r|}{Average rank*} & 2.79  & 2.75  &	5.18    &	5.96   & 8.29 & 8.06 &	4.38    & 8.36     & 4.04  & 3.71       \\ \hline
    \end{tabular}
\end{table}

\subsection{Analyses on artificial datasets and real-world datasets}

Here we analyze the comparison results shown in  Table \ref{tab:f1} in two parts on artificial datasets (the first ten rows) and real-world datasets (the following fourteen rows). 

On the artificial datasets, interesting differences between MMC and other algorithms
are summarized as follows:

\begin{itemize}
    \item Density-based algorithms DMC, DP, DBSCAN, \textcolor{black}{HDBSCAN$^*$, LGD and GMM: Not one of these algorithms can do well on all ten datasets. None of them could do well on 3L; and none except GMM on w50Gaussian. In addition, the highest ranked density-based method LGD did poorly on G-Strip; the second highest ranked DMC on 3G. DP on Jain \& w10Gaussian and GMM on Jain, 3G-HL \& RingG. The lowest ranked DBSCAN and HDBSCAN$^*$ did poorly on 3G, 2Gaussians, w10Gaussian, w50Gaussian \& RingG; though HDBSCAN$^*$ has higher F1 results than DBSCAN on all these datasets, except w50Gaussian (on which HDBSCAN$^*$ assigned all points to noise, yielding F1=0).}
    \item Spectral clustering algorithms SGL and GLSHC: they both did poorly on G-Strip, 3G-HL and 3L. SGL did poorly on additional Jain, 3G, AC and RingG datasets.
    \item MMC is the only algorithm which did well on all ten datasets. MMC is better than all four density-based algorithms on 3L and w50Gaussian; and it is better than both spectral clustering algorithms on the 3G-HL, 3L and G-Strip datasets.
\end{itemize}

Table \ref{tab-visualization} shows the visualization of the clustering outcomes of the best 5 algorithms on the eight 2-dimensional datasets. Only MMC performs well on all eight datasets.

\begin{table}[b]
  \centering
    \caption{Visualization for the eight 2-dimensional artificial datasets. The yellow frame indicates that the clustering outcome is perfect or near-perfect. Each outcome is based on a single trial only. }
    \label{tab-visualization}
    \setlength{\tabcolsep}{4pt}
  \begin{tabular}{c|cccccc}
    \hline
   & MMC & DMC & DP&  LGD & GLSHC\\
    \hline 
\rotatebox{90}{Jain}& \includegraphics[width=.16\textwidth,cframe=yellow 0.5mm]{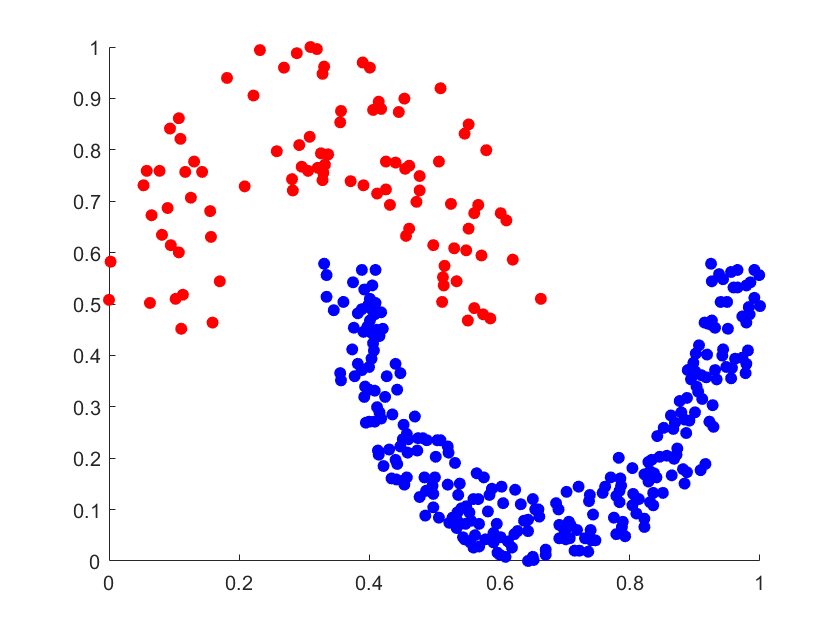} 
& \includegraphics[width=.16\textwidth,cframe=yellow 0.5mm]{figures/2dimVis/jain-dmc.png} &\includegraphics[width=.16\textwidth]{figures/2dimVis/jain-dp.png} &\includegraphics[width=.16\textwidth,cframe=yellow 0.5mm]{figures/2dimVis/jain-lgd.png} %&\includegraphics[width=.16\textwidth]{figures/2dimVis/jain-sgl.png}  
&\includegraphics[width=.16\textwidth,cframe=yellow 0.5mm]{figures/2dimVis/jain-GLSHC.png} \\ \hline
\rotatebox{90}{3L}& \includegraphics[width=.16\textwidth,cframe=yellow 0.5mm]{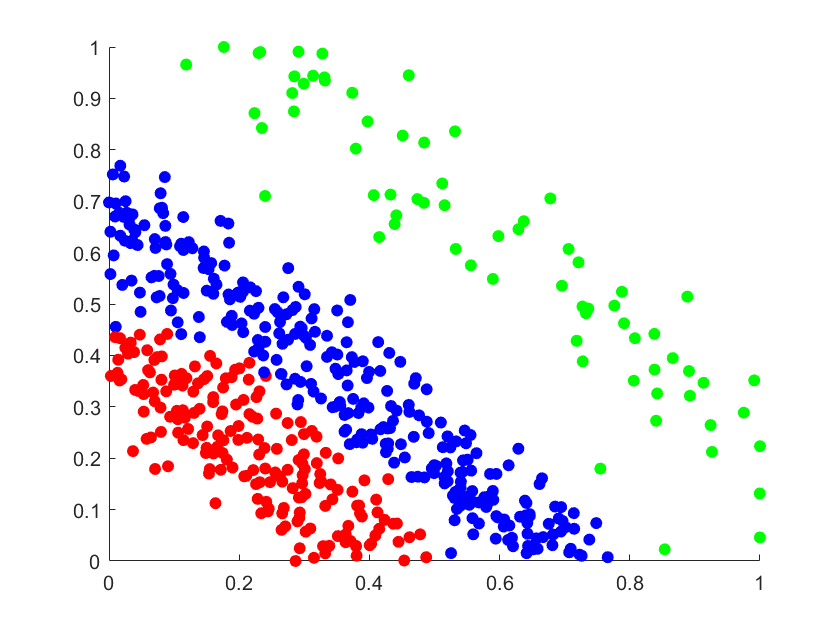} & \includegraphics[width=.16\textwidth]{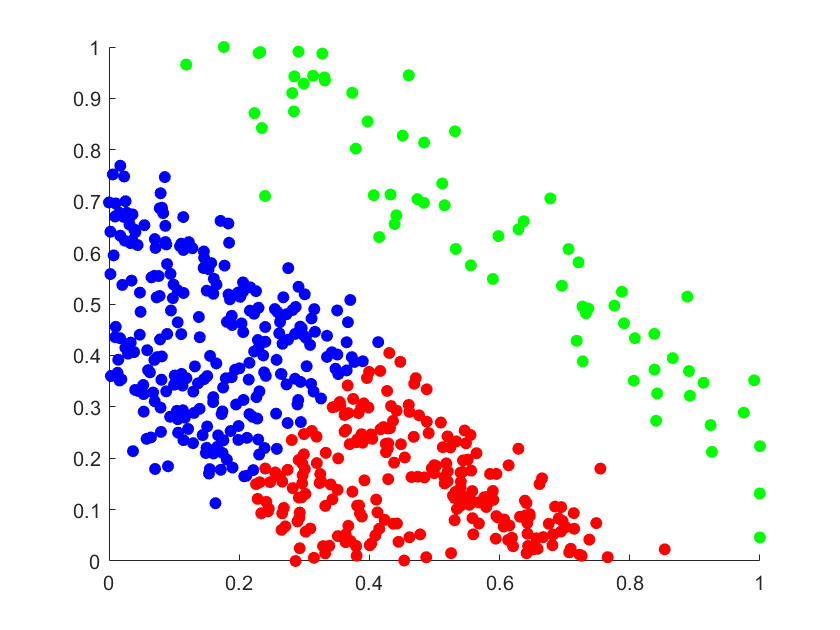} &\includegraphics[width=.16\textwidth]{figures/2dimVis/3L-dp.png} %&\includegraphics[width=.16\textwidth]{figures/2dimVis/3L-lgd.png} 
&\includegraphics[width=.16\textwidth]{figures/2dimVis/3L-sgl.png} 
&\includegraphics[width=.16\textwidth]{figures/2dimVis/3L-GLSHC.png} \\ \hline
\rotatebox{90}{3G-HL}& \includegraphics[width=.16\textwidth,cframe=yellow 0.5mm]{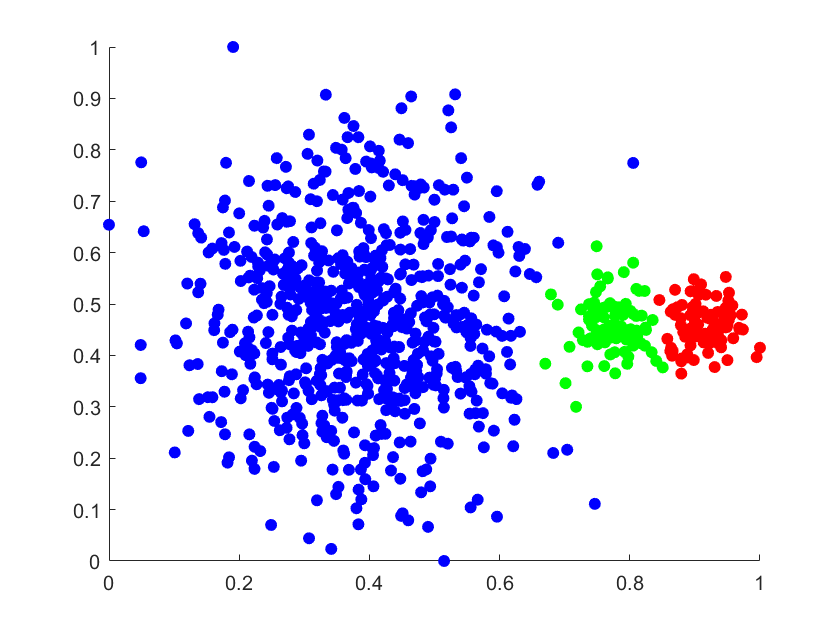} & \includegraphics[width=.16\textwidth,cframe=yellow 0.5mm]{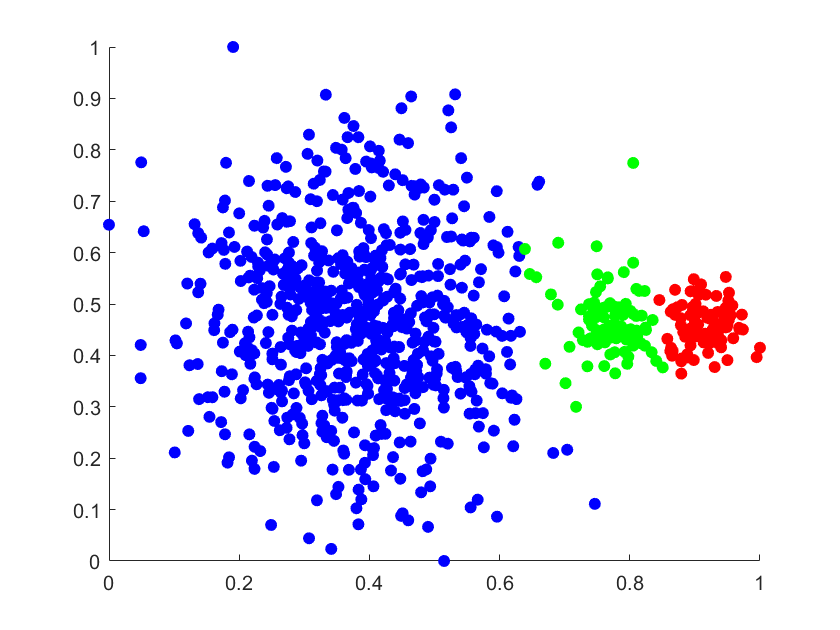} &\includegraphics[width=.16\textwidth,cframe=yellow 0.5mm]{figures/2dimVis/3ghl-dp.png} &\includegraphics[width=.16\textwidth,cframe=yellow 0.5mm]{figures/2dimVis/3ghl-lgd.png} %&\includegraphics[width=.16\textwidth]{figures/2dimVis/3ghl-sgl.png}  
&\includegraphics[width=.16\textwidth]{figures/2dimVis/3ghl-GLSHC.png} \\ \hline
\rotatebox{90}{3G}& \includegraphics[width=.16\textwidth,cframe=yellow 0.5mm]{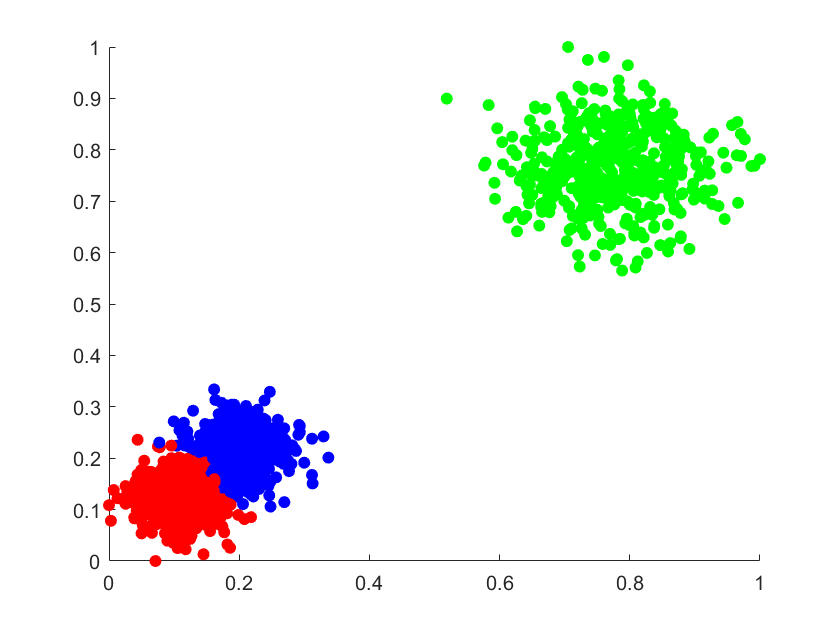} &\includegraphics[width=.16\textwidth]{figures/2dimVis/3g-dmc.png} &\includegraphics[width=.16\textwidth,cframe=yellow 0.5mm]{figures/2dimVis/3g-dp.png} &\includegraphics[width=.16\textwidth,cframe=yellow 0.5mm]{figures/2dimVis/3g-lgd.png} %&\includegraphics[width=.16\textwidth]{figures/2dimVis/3g-sgl.png}  
&\includegraphics[width=.16\textwidth,cframe=yellow 0.5mm]{figures/2dimVis/3g-GLSHC.png} \\ \hline
\rotatebox{90}{2Gaussians}& \includegraphics[width=.16\textwidth,cframe=yellow 0.5mm]{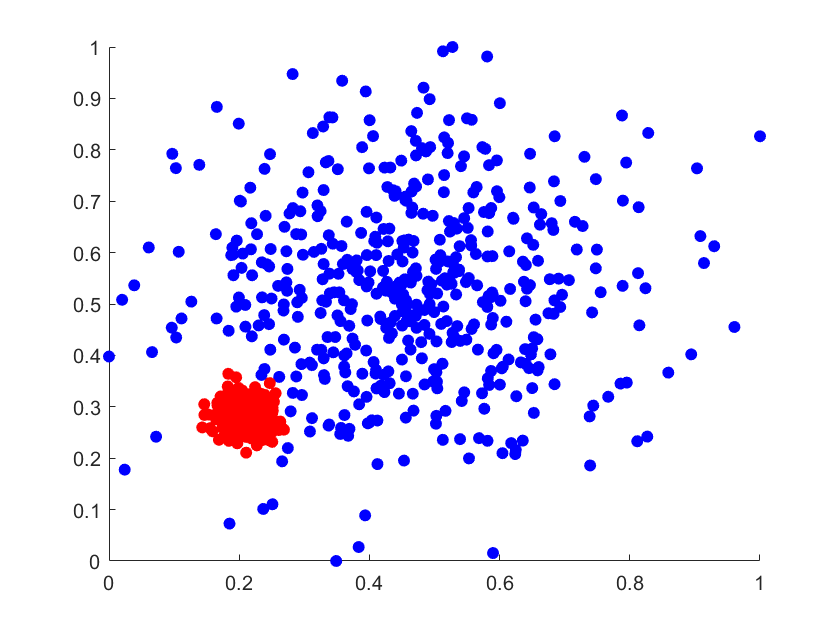} & \includegraphics[width=.16\textwidth]{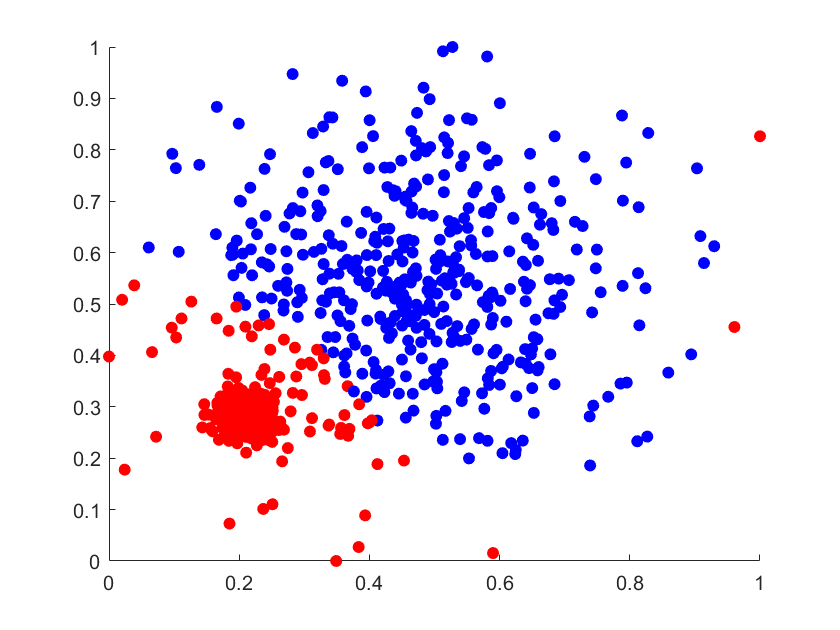} &\includegraphics[width=.16\textwidth]{figures/2dimVis/2G-dp.png} &\includegraphics[width=.16\textwidth,cframe=yellow 0.5mm]{figures/2dimVis/2G-lgd.png} %&\includegraphics[width=.16\textwidth]{figures/2dimVis/2G-sgl.png}  
&\includegraphics[width=.16\textwidth,cframe=yellow 0.5mm]{figures/2dimVis/2G-GLSHC.png} \\ \hline
\rotatebox{90}{AC}& \includegraphics[width=.16\textwidth,cframe=yellow 0.5mm]{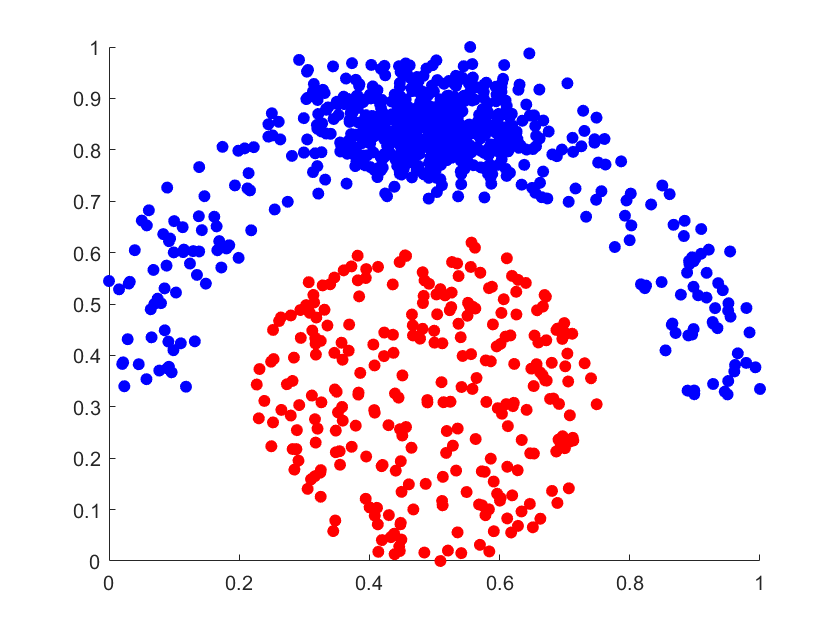} & \includegraphics[width=.16\textwidth,cframe=yellow 0.5mm]{figures/2dimVis/AC-dmc.png} &\includegraphics[width=.16\textwidth,cframe=yellow 0.5mm]{figures/2dimVis/AC-dmc.png} &\includegraphics[width=.16\textwidth]{figures/2dimVis/AC-lgd.png} 
%&\includegraphics[width=.16\textwidth]{figures/2dimVis/AC-sgl.png}  
&\includegraphics[width=.16\textwidth,cframe=yellow 0.5mm]{figures/2dimVis/AC-GLSHC.png} \\ \hline
\rotatebox{90}{G-Strip}& \includegraphics[width=.16\textwidth,cframe=yellow 0.5mm]{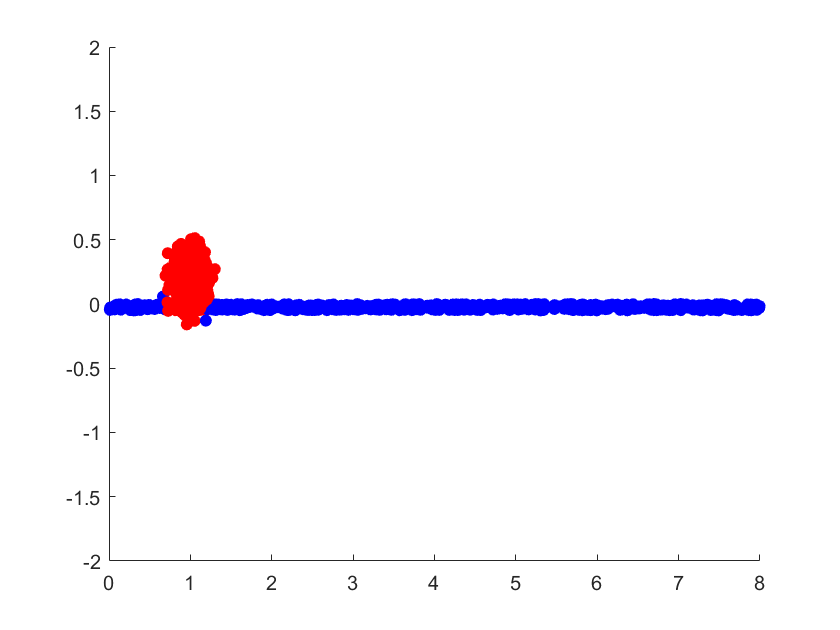} & \includegraphics[width=.16\textwidth,cframe=yellow 0.5mm]{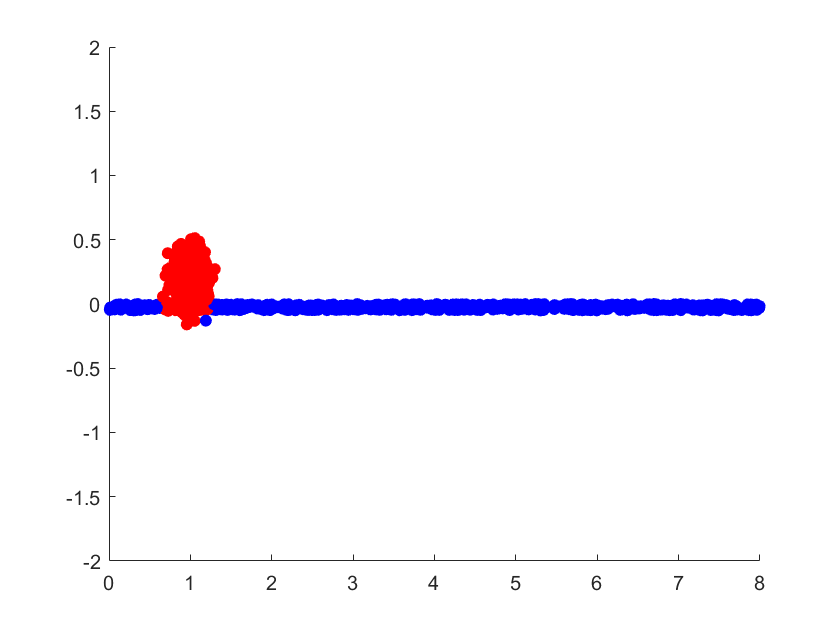} &\includegraphics[width=.16\textwidth]{figures/2dimVis/Gstrip-dp.png} &\includegraphics[width=.16\textwidth]{figures/2dimVis/Gstrip-lgd.png} %&\includegraphics[width=.16\textwidth]{figures/2dimVis/Gstrip-sgl.png}  
&\includegraphics[width=.16\textwidth]{figures/2dimVis/Gstrip-GLSHC.png} \\ \hline
\rotatebox{90}{RingG}& \includegraphics[width=.16\textwidth,cframe=yellow 0.5mm]{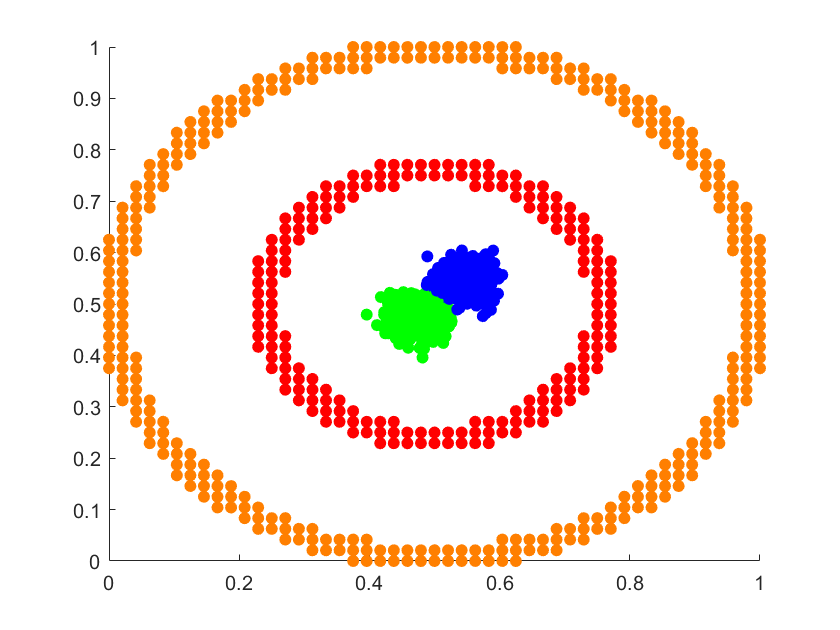} & \includegraphics[width=.16\textwidth,cframe=yellow 0.5mm]{figures/2dimVis/RingG-dmc.png} &\includegraphics[width=.16\textwidth]{figures/2dimVis/RingG-dp.png} &\includegraphics[width=.16\textwidth,cframe=yellow 0.5mm]{figures/2dimVis/RingG-dmc.png} %&\includegraphics[width=.16\textwidth]{figures/2dimVis/RingG-sgl.png}  
&\includegraphics[width=.16\textwidth,cframe=yellow 0.5mm]{figures/2dimVis/RingG-GLSHC.png} \\ \hline

    % 内容2 & \includegraphics[width=3cm]{image2.png} \\
  \end{tabular}

\end{table}

We have the following observations on the fourteen real-world datasets:

\begin{itemize}
    \item MMC is the best or close to the best performer on each of the 14 datasets, i.e., MMC does not perform poorly on any of these datasets.
    \textcolor{black}{MMC and MMC$^v$ are the same algorithm, except that the IK's used are implemented using Hypersheres and Voronoi Diagrams, respectively (as stated in Section \ref{sec_prelim-IK}). They have comparable results.}
%    \item MMC and LGD exhibited strong performance on the first four datasets, which are known to feature clusters of varied densities~\cite{zhu2021cdf}. In contrast, DBSCAN and GLSHC did not perform as well on these datasets. While DP is capable of identifying clusters with varied densities, it still struggles with the foresttype dataset. 
    \item All the other contenders perform poorly on at least 3 datasets. For example, \textcolor{black}{DBSCAN and GMM have a significant performance gap in comparison with the best performer on every dataset. HDBSCAN$^*$ performs poorly in 8 out of the 9 datasets in which it can complete the run. Note that there is no guarantee that HDBSCAN$^*$ can perform better than DBSCAN. In fact, it is worse in 4 out of 9 datasets on which it could complete the run. This result is consistent with the previous results which focused on hierarchical clustering algorithms \cite{zhu2022hierarchical,malzer2020hybrid}.} DP performs close to the best performer on 2 datasets only (wine and seeds). The best density-based method LGD has poor performance on 3 datasets (gisette, USPS, stl-10). Spectral clustering SGL and GLSHC have poor performance on 6 and 5 datasets, respectively.  
%    \item On the last eight datasets, MMC and GLSHC demonstrated comparable and superior performance over the other algorithms. DMC, DP and DBSCAN had poor results, particularly on the gisette and USPS datasets.   
    \item MMC, DMC, SGL and GLSHC are scalable to the two largest datasets. DP, DBSCAN, HDBSCAN$^*$ and LGD failed to complete the entire parameter search in five days (on at least one dataset) due to their high computational complexities.
\end{itemize}    

The overall result is summarized in the last two rows in Table~\ref{tab:f1}, where MMC and MMC$^v$ are the highest ranked performers  and have the highest average clustering results, compared with other contenders. %The closest contenders are GLSHC, LGD and DMC.

%The overall results on the 24 datasets in terms of average F1 and average rankings are shown in the last two rows in Table \ref{tab:f1}.

%MMC has the highest average F1 and the highest average ranking among all seven algorithms. 
MMC versus DMC is a head-to-head comparison where the difference is due to mass distribution versus density distribution only, as the algorithm is exactly the same. It is interesting to note that MMC is either better than or equal to DMC on all the datasets, except two. Some differences are huge, e.g., w50Gaussian, gisette, USPS and mnist. The only two exceptions are wine and seeds, where their differences are very small.

GLSHC ranks second, and it performed significantly poorer than MMC on G-Strip 3G-HL, 3L, wine and dermatology datasets. Ranked third are SGL, LGD, DMC and DP, where three of them are density-based algorithms. GMM, DBSCAN and HDBSCAN$^*$ are the weakest performers. The clustering performance differences among the five density-based algorithms, i.e.,  DMC, LGD, DP, DBSCAN and HDBSCAN$^*$, are mainly due to their algorithmic differences.

The conclusion is similar in terms of AMI, shown in Table \ref{tab:ami}.

\begin{table}[h]
\caption{Clustering results in terms of AMI. Note that AMI could not be computed for DBSCAN because it identifies noise points which could not be accounted for any of the ground-truth clusters. The average is computed without \textcolor{black}{NC}. * The average rank is computed by assigning \textcolor{black}{NC} the last rank. }
\setlength{\tabcolsep}{2pt}
\begin{tabular}{l|rrr|r|rr|rrrr|rr}
\hline
        Datasets     & Size & Dim & \#C & Random & MMC & MMC$^v$ & DMC & DP & LGD &GMM & SGL & GLSHC \\ \hline
        Jain         & 373 & 2 & 2 & 0 & 1 & 1 & 1 & 0.32 & 1   & 0.19 & 0.61 & 1 \\
        3L           & 560 & 2 & 3 & 0 & 0.59 & 0.69 & 0.38 & 0.45 & 0.47 & 0.49 & 0.45 & 0.47 \\  
        3G-HL        & 900 & 2 & 3 & 0 & 0.90  & 0.51 & 0.88 & 0.92 & 0.93 & 0.55 & 0.73 & 0.45 \\
        3G           & 1500 & 2 & 3 & 0 & 0.91 & 0.95 & 0.82 & 0.91 & 0.90	& 0.67 &0.73	&0.91\\ 
        2Gaussians	&1000	&2	&2&	0		& 0.91 & 0.88	&0.7	&0.85&	0.91	& 0.94 &0.74&	0.91\\
        AC          & 1004 & 2 & 2 & 0 & 1 & 1 & 1 & 1 & 0.56 & 0.48 & 0.58 & 1 \\ 
        G-Strip     & 1400 & 2 & 2 & 0 & 0.81 & 0.74 & 0.75 & 0.69 & 0.51 & 0.84 & 0.48 & 0.59 \\ 
        RingG       & 1536 & 2 & 4 & 0 & 0.99 & 0.99 & 0.98 & 0.92 & 0.98 & 0.55 & 0.72 & 0.98 \\ 
        w10Gaussian & 2000 & 20 & 2 & 0 & 1 & 1 & 0.88 & 0.17 & 0.99 & 0.99 & 1 & 1 \\ 
        w50Gaussian & 2000 & 100 & 2 & 0 & 1 & 1& 0.17 & 0.03 & 0.08 & 1 & 1 & 1 \\ \hline
        \multicolumn{5}{r}{Average}      & 0.91 & 0.88 & 0.76	& 0.63	& 0.73	& 0.67  & 0.70   & 0.83 \\
        \multicolumn{5}{r}{Average rank} & 2.60 & 3.10 & 5.20 	& 5.45	& 4.70  & 5.15	& 5.80 & 4.00 \\ \hline
        wine & 178 & 13 & 3 & 0& 0.83  & 0.86      & 0.88& 0.76& 0.72 & 0.69 & 0.97& 0.47      \\ 
        seeds & 210 & 7 & 3 & 0& 0.73  & 0.75      & 0.76& 0.71 & 0.70 & 0.51 & 0.77& 0.61   \\ 
        dermatology & 358 & 34 & 6 & 0 & 0.88 & 0.92  & 0.84& 0.80& 0.86    & 0.84 & 0.87& 0.81\\
        Foresttype & 523 & 27 & 4 &0&  0.59  & 0.62 & 0.56& 0.37& 0.64      & 0.24  & 0.66& 0.46\\ 
        COIL        & 1440 & 1024 & 20 & 0 & 0.95 & 0.91 & 0.89 & 0.86 & 0.98 & 0.61    & 0.81 & 0.92 \\        
        spam & 4601 & 57 & 2 & 0 & 0.21 & 0.28 & 0.14 & 0.10 & 0.23         & 0.01  & 0.29 & 0.30 \\ 
        gisette     & 7000 & 5000 & 2 & 0 & 0.59 & 0.51 & 0.08 & 0.04 & 0.39 & 0.01 & 0.36 & 0.60 \\ 
        Pendig      & 10992 & 16 & 10 & 0 & 0.84 & 0.83  & 0.74 & 0.75 & 0.80 & 0.58    & 0.73 & 0.84 \\        
        USPS        & 11000 & 256 & 10 & 0 & 0.77 & 0.62 & 0.49 & 0.38 & 0.71 & 0.33    & 0.55 & 0.72 \\        
        imagenet-10     & 13000 & 128 & 10 & 0 & 0.86 & 0.86 & 0.85 & 0.86 & 0.83 & 0.64     & 0.88 & 0.89 \\ 
        stl-10      & 13000 & 128 & 10 & 0 & 0.66 & 0.65 & 0.64 & 0.66 & 0.60 & 0.54    & 0.66 & 0.68 \\ 
         letters    & 20000 & 16 & 26 & 0 & 0.51 & 0.46 & 0.44 & 0.41 & 0.46    & 0.30  & 0.41 & 0.45 \\ 
        cifar10         & 60000 & 128 & 10 & 0 & 0.71 & 0.73 & 0.68 & 0.69 & \textcolor{black}{NC} & 0.61    & 0.74 & 0.74 \\
        mnist       & 100000 & 784 & 10 & 0 & 0.74 & 0.70 & 0.49 & \textcolor{black}{NC} & \textcolor{black}{NC} & 0.48 & 0.56 & 0.75 \\ 
%        Pendig\_Smp & ~ & 16 & 10 & 0 & 0.78 & ~ & ~ & 0.78 & 0.71 & 0.73 \\ 
%        mnist479\_Smp & 14654 & 784 & 3 & 0 & 0.67 & ~ & ~ & 0.77 & 0.33 & 0.46 \\ 
\hline
        \multicolumn{5}{r}{Average} & 0.71 & 0.69 &	0.61	&0.57	&0.66	& 0.46  & 0.66 &	0.66  \\
        \multicolumn{5}{r}{Average rank*} & 2.82 & 3.18 & 5.11	& 5.81 	& 4.21	& 7.54 & 3.50	&3.21 \\ \hline
    \end{tabular}
\label{tab:ami}
\end{table}

\subsection{Why do LGD, SGL and GLSHC fail on some datasets?}
\label{sec-why-fail}
The recent density-based clustering method LGD \cite{li2019LGD} aims to address the problem of identifying sparse clusters of density-based clustering. LGD builds a kNN-graph from a given dataset, and then removes edges in the kNN-graph based on a criterion called local gap density. The next step chooses the $k$ largest connected subgraphs as the initial clusters. The last step
%\footnote{LGD's last step and all its previous steps correspond to step 2 and step 1 of MMC, respectively.} 
is to assign all unassigned points to one of these initial clusters based on 1-nearest neighbour.

LGD\footnote{Note that all the other three density-based clustering algorithms: DMC, DP and DBSCAN also performed poorly on w50Gaussian.} performed poorly on w50Gaussian which has two subspace clusters of the same density (i.e., no issue of sparse and dense clusters on this dataset). Its proposed criterion failed in separating the two clusters in high dimensions because pair-wise distances become similar for all points. This creates difficulties in removing the edges that could effectively separate the two clusters.  
%Interestingly, if the Isolation Kernel is used, instead of Euclidean distance, to calculate the LGD measure, LGD produces a clustering result with AMI=0.99 on w50Gaussian. 

%LGD has a different problem on the AC dataset. Though it could identify the two core clusters (which consists of the dense points in the arch cluster and the center points in the circle cluster), the last step in LGD assigns each unassigned point based on 1-nearest neighbor detected by LGD based on Euclidean distance. As a result, many points at the two ends of the arch cluster are assigned to the circle clusters. In fact, if this last step is replaced with step 2 of MMC, i.e., the two core clusters discovered by LGD are regarded as $Q_i$ and then apply the mass-maximization criterion to assign all unassigned points, then the clustering outcome is AMI=1!
%Set parameters $\tau$=0.2,r=10 and LGD with the MMC criterion can do NMI=1 on the AC dataset (it can also do well with 1-NN criterion) \textcolor{black}{my opinion is that the recuitment criterion does not play the key role. Finding the proper core is more important. }

Both spectral clustering algorithms, SGL \cite{kang2021structured} and GLSHC \cite{yang2021graphlshc},  use an approximation method to reduce the time complexity, i.e., it builds a sparse graph by randomly selecting a set of objects as anchors and calculating the affinities between anchors and all data points. As random selection ignores the structural information, it is one of the framework bottlenecks \cite{yang2023reskm}. Moreover, we found that both algorithms are sensitive to the number of anchors. It is interesting to mention that when all points are used as anchors, GLSHC produces much poorer clustering results on the 3G and Jain datasets.

% \textcolor{red}{Need a clearer description as to why cause this poor outcome. Density or the specific structure used?} A kNN graph based on local gap density is built in the procedure of LGD. Local gap density is the ratio of density of a point and its neighbors. Although this can detect sparse clusters, for case of high dimension, where all the distances become similar, all the points will have high local gap density and considered as core points. Thus the many crossing cluster edges will not be deleted.  Once using the Isolation Kernel to calculate the distance matrix, it can produce results with ami=0.99.  Briefly, LGD can not do well with high dimensional data where all pairs of distances are almost the same (difficult to detect the latent crossing cluster edge).
          
%SGL \cite{kang2021structured} is a scalable version of spectral clustering, utilizes a set of representative points to build an anchor graph. It then optimizes the modified objective of spectral clustering which approximates the Laplacian matrix with the anchor graph.
SGL performed poorly on 3G and RingG. But it did well when each of these datasets is transformed by the Isolation Kernel's feature map in a preprocessing step. This can be attributed to the issue of sparse and dense clusters which exist on these datasets. The IK-mapped datasets effectively convert the clusters with varied densities into clusters of approximately the same density (recall that IK produces clusters of approximately the same cohesiveness $\bar{S}_\kappa(C^\tau_\beta) \approx \bar{S}_\kappa(C^\tau_\alpha)$, stated in Section \ref{sec-cohesiveness}). This highlights the assumption in SGL, i.e., clusters are assumed to have approximately the same density, in order for SGL to work well.

%While the use of IK in an existing clustering algorithm could potentially address the issues in density-based algorithms on  some datasets, as illustrated above, the algorithmic issue of using point-to-point linking remains an obstacle to finding clusters of arbitrary shapes, sizes and densities, as stipulated in Section \ref{sec-fundamental-limitation}. See further discussion of the use of IK in DBSCAN in Section \ref{sec-MBSCAN}.
%This can be attributed to the construction of the density-biased anchor graph, which degrade0s the quality of the subsequent spectral clustering. 
% The performance on 3g is easy to understand since SGL is a scalable version of spectral clustering using anchor graph instead of the whole similarity matrix.  The density biased similarity matrix will degrade its performance. 

%Similar to SGL, GLSHC \cite{yang2021graphlshc} samples some representative points to speed up the framework of spectral hypergraph clustering. Although it performs well in all datasets, it should be noted that it is sensitive to the number of representative points. Sampling an excessive number of points from the edges of clusters can lead to a degradation in the quality of the approximated Laplacian matrix. In fact, experimental results have shown that when all points are sampled, GLSHC produces poor results on 3g and jain datasets.

\subsection{Ablation studies}

We conduct two ablation studies here. The first one examines the effect of sparse and dense clusters on clustering algorithms using a real-world dataset. The second investigates the effect of post-processing (step 3) of MMC.

\subsubsection{Study 1: The effect of sparse and dense clusters using the mnist dataset}

We simulate a dataset having sparse and dense clusters from the mnist dataset. It consists of three digits 4, 7 \& 9, where digit 7 is down-sampled to 10\% of the original data size, and the other digits have the same sizes as in the original dataset. The distributions of the three clusters, visualized via MDS \cite{torgerson1952multidimensional}, are shown in the first row in Table \ref{tab-mds479}.

\begin{table}[p]
  \centering
    \caption{Visualization by using MDS  on the mnist479 dataset. (Digit 7 is down-sampled to 10\% of the original data size). Because the three clusters have significant overlaps, each cluster is drawn separately to improve readability. Each of MMC, DMC and GLSHC is run 5 trials, and the result with the highest F1 is shown. The clustering outcome of each algorithm is indicated by one color for each cluster with respect to the ground-truth cluster shown in each column.}
    %The number in the text box near the cluster indicates the number of points within.}
    \label{tab-mds479}
  \begin{tabularx}{\textwidth}{c|ccc}
    \hline
   & Digit 4 & Digit 7& Digit 9 \\
    \hline 
\rotatebox[origin=c]{90} {Ground Truth }&  \includegraphics[width=.32\textwidth,valign=c]{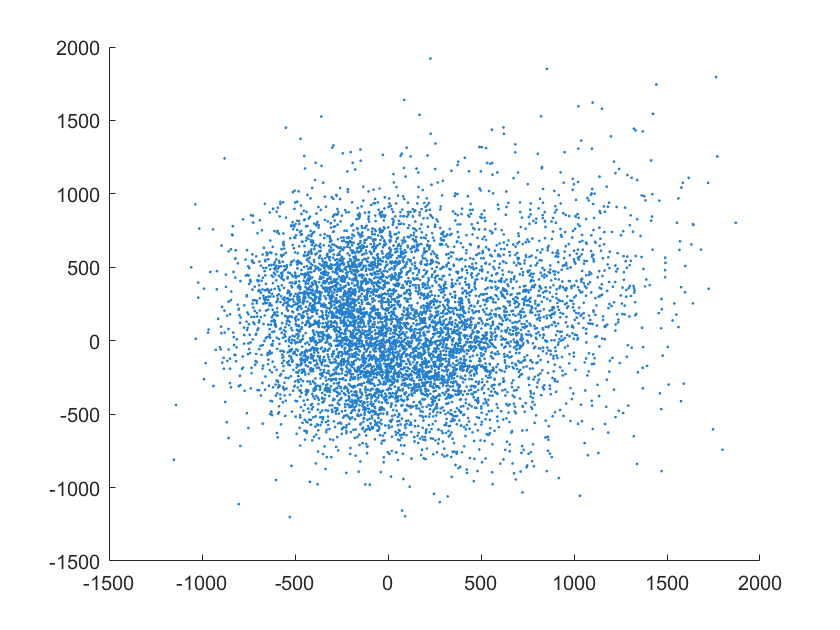} &
\includegraphics[width=.32\textwidth,valign=c]{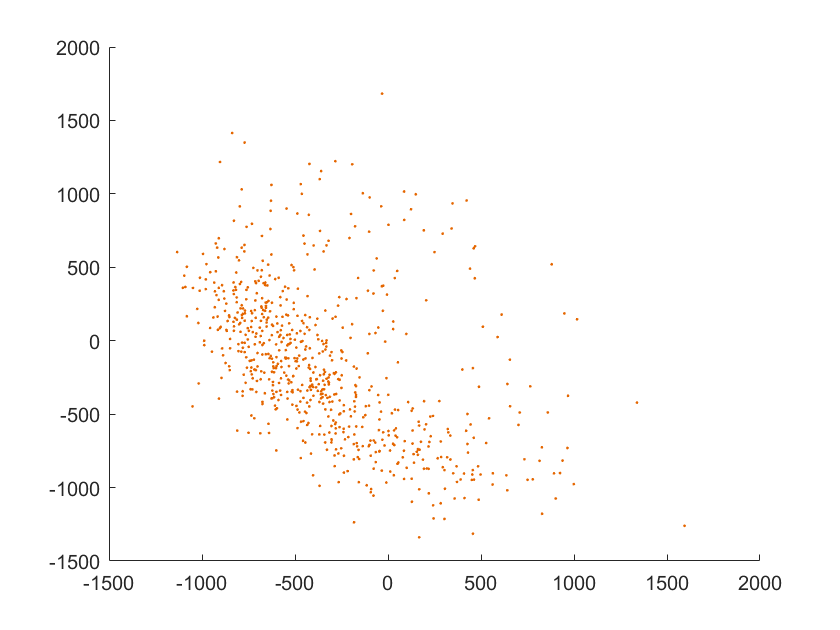} 
&\includegraphics[width=.32\textwidth,valign=c]{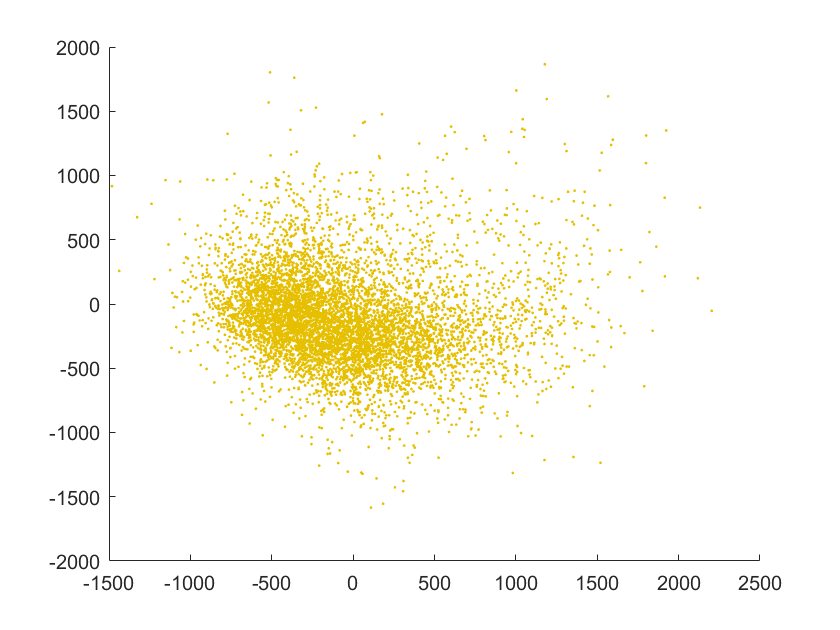} \\ \hline
\rotatebox[origin=c]{90}{MMC F1=0.68}&  \includegraphics[width=.32\textwidth,valign=c]{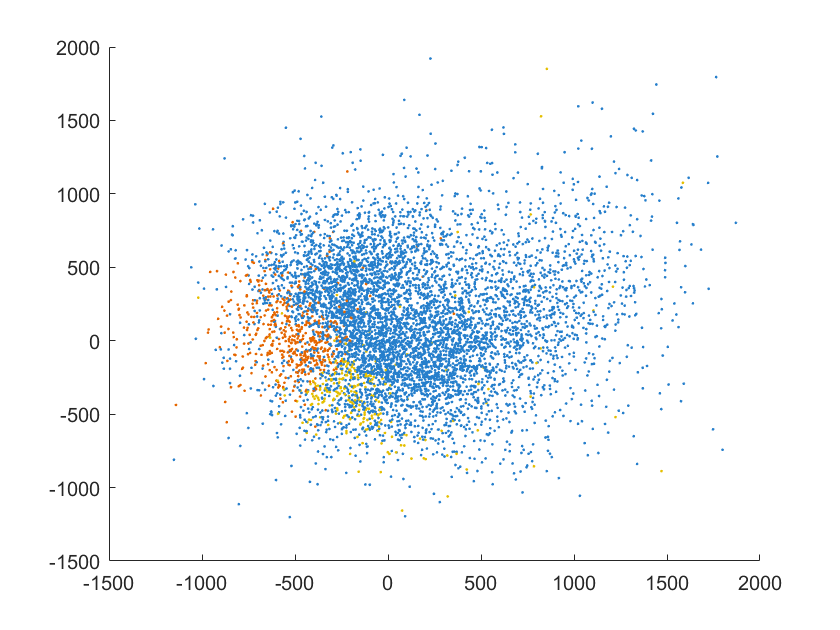} &
\includegraphics[width=.32\textwidth,valign=c]{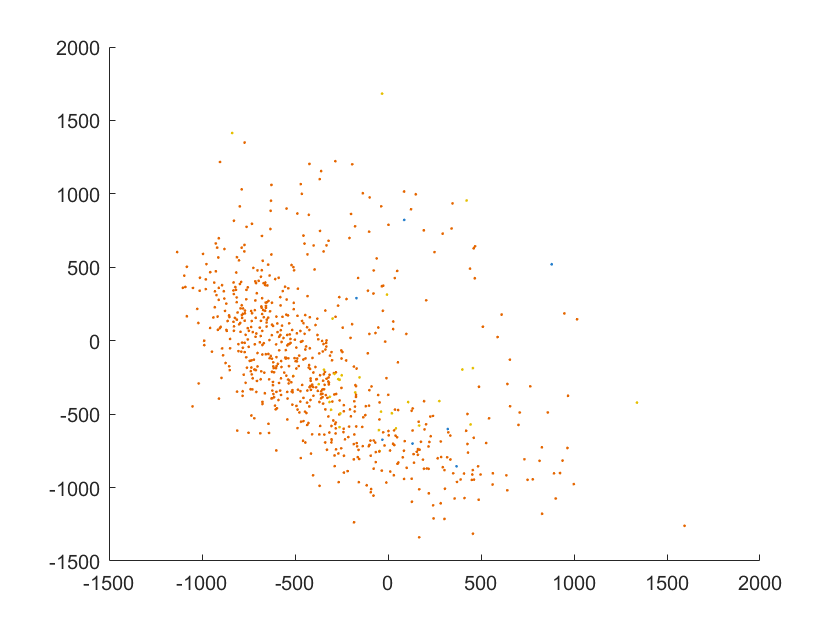} 
&\includegraphics[width=.32\textwidth,valign=c]{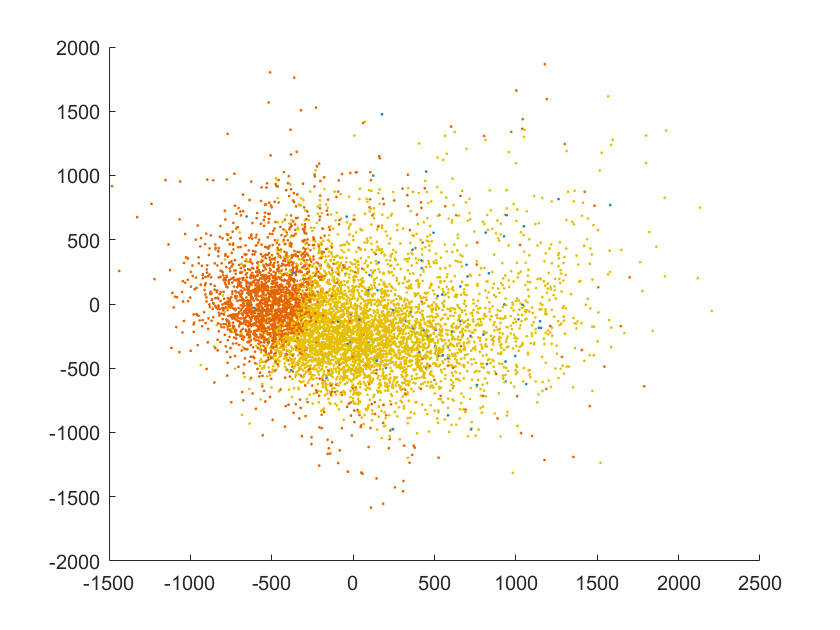} \\ \hline
\rotatebox[origin=c]{90}{DMC F1=0.41}&  \includegraphics[width=.32\textwidth,valign=c]{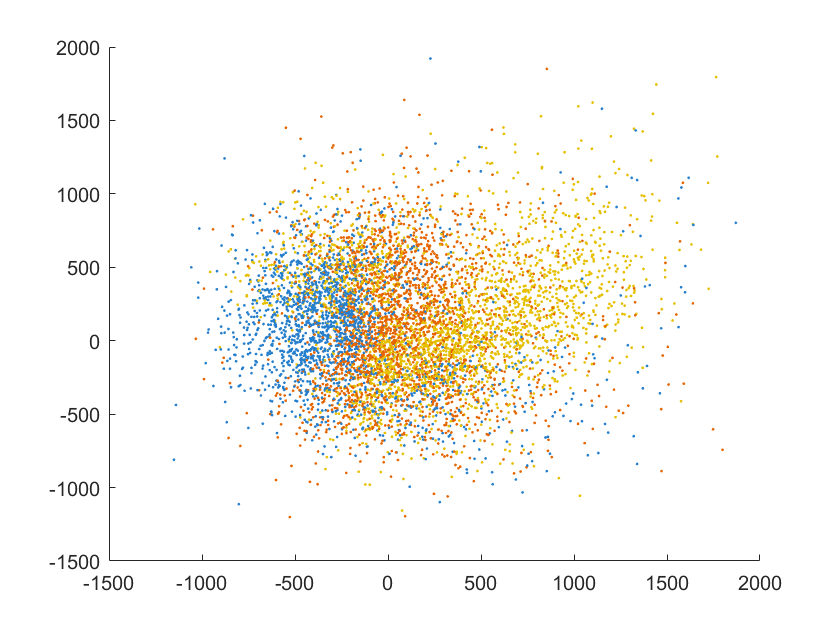} &
\includegraphics[width=.32\textwidth,valign=c]{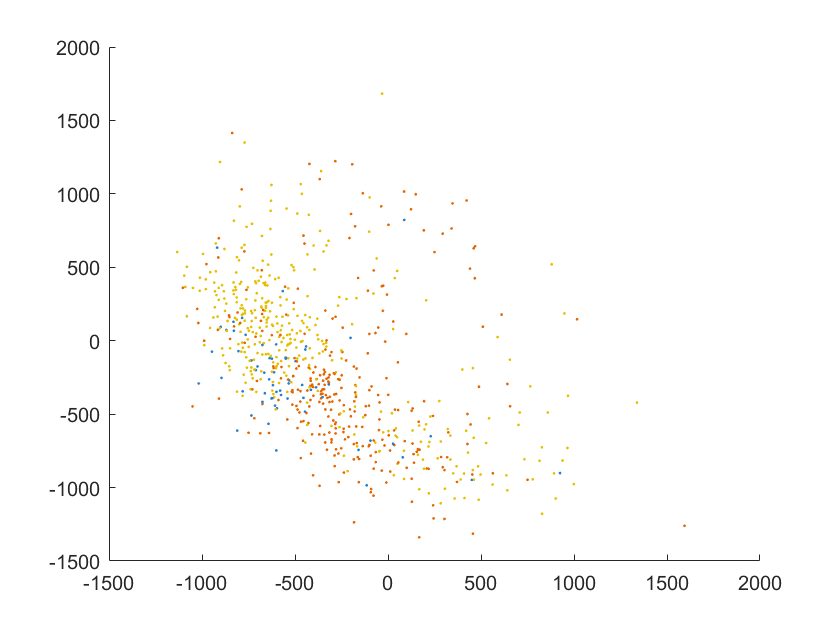} 
&\includegraphics[width=.32\textwidth,valign=c]{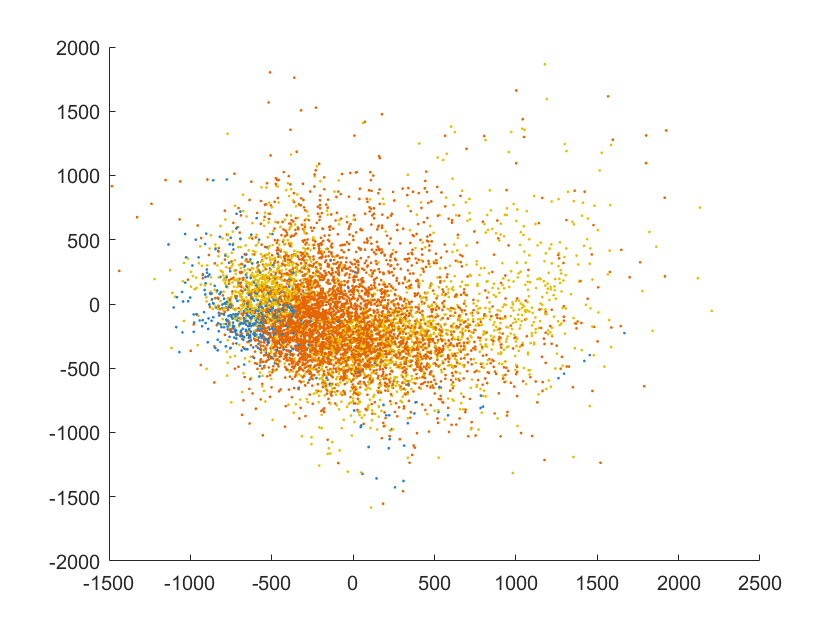} \\ \hline
\rotatebox[origin=c]{90}{LGD F1=0.53}&  \includegraphics[width=.32\textwidth,valign=c]{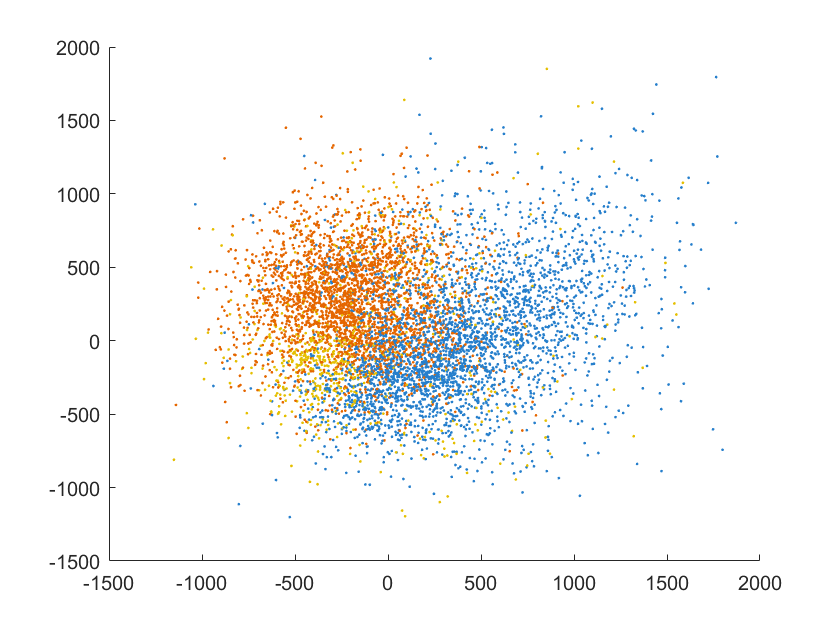} &
\includegraphics[width=.32\textwidth,valign=c]{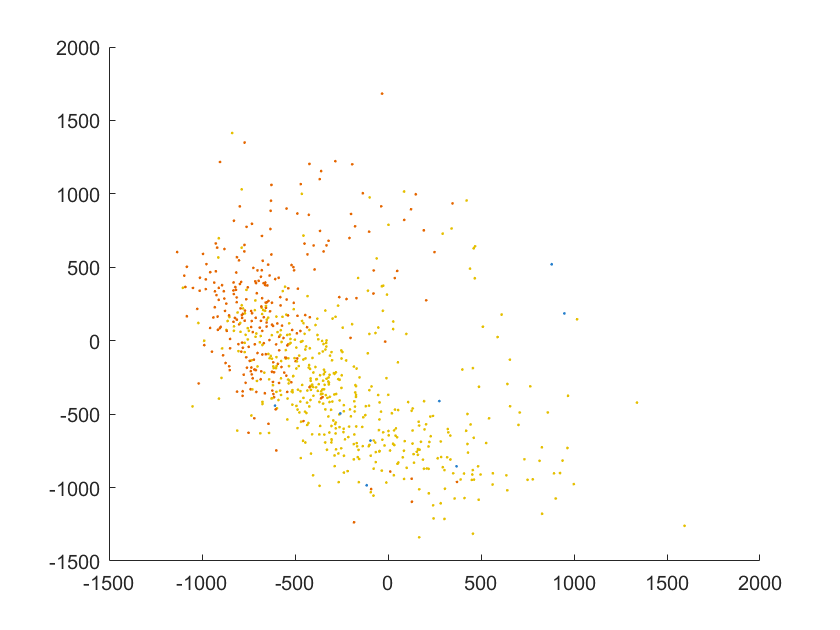} 
&\includegraphics[width=.32\textwidth,valign=c]{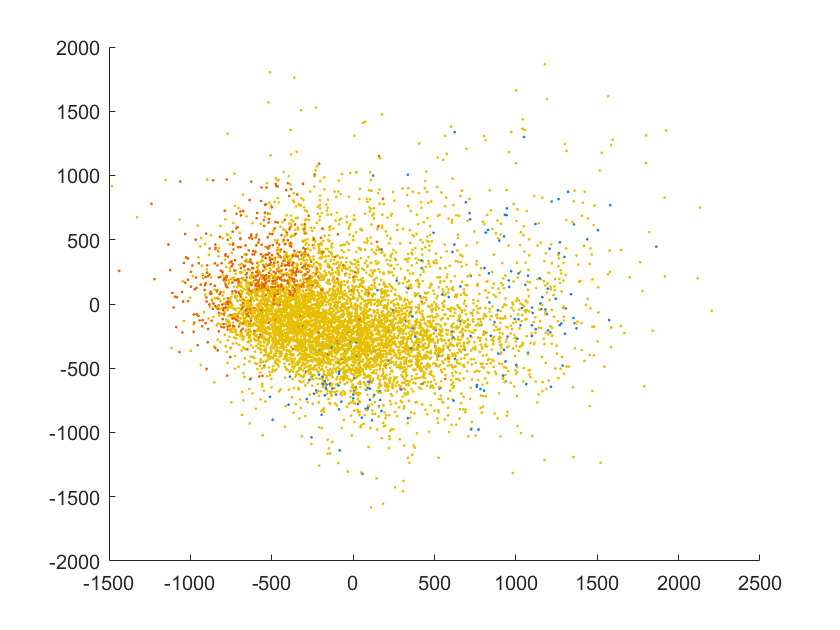} \\ \hline
% \rotatebox[origin=c]{90}{SGL F1=0.38}&  \includegraphics[width=.32\textwidth,valign=c]{figures/mnist479/SGL_4.png} &
% \includegraphics[width=.32\textwidth,valign=c]{figures/mnist479/SGL_7.png} 
% &\includegraphics[width=.32\textwidth,valign=c]{figures/mnist479/SGL_9.png} \\ \hline
\rotatebox[origin=c]{90}{GLSHC F1=0.49}&  \includegraphics[width=.32\textwidth,valign=c]{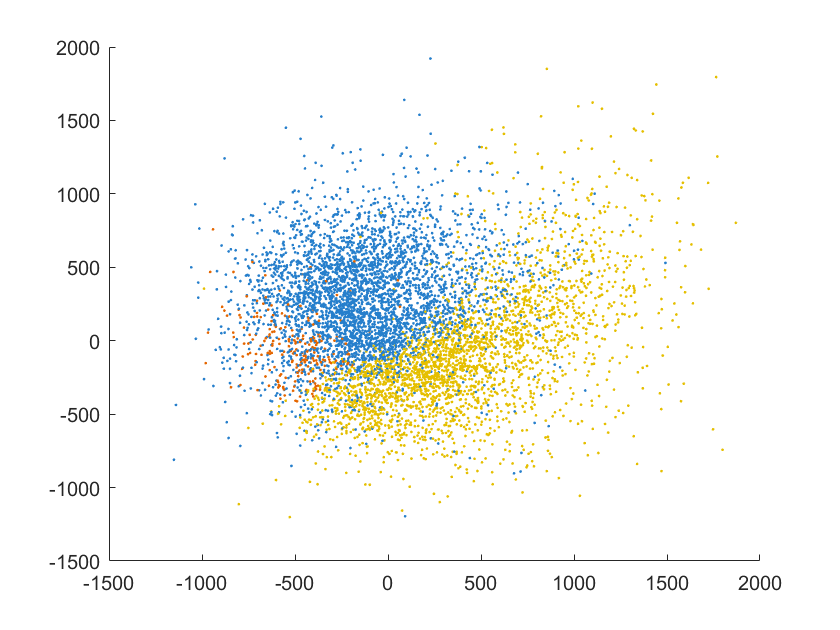} &
\includegraphics[width=.32\textwidth,valign=c]{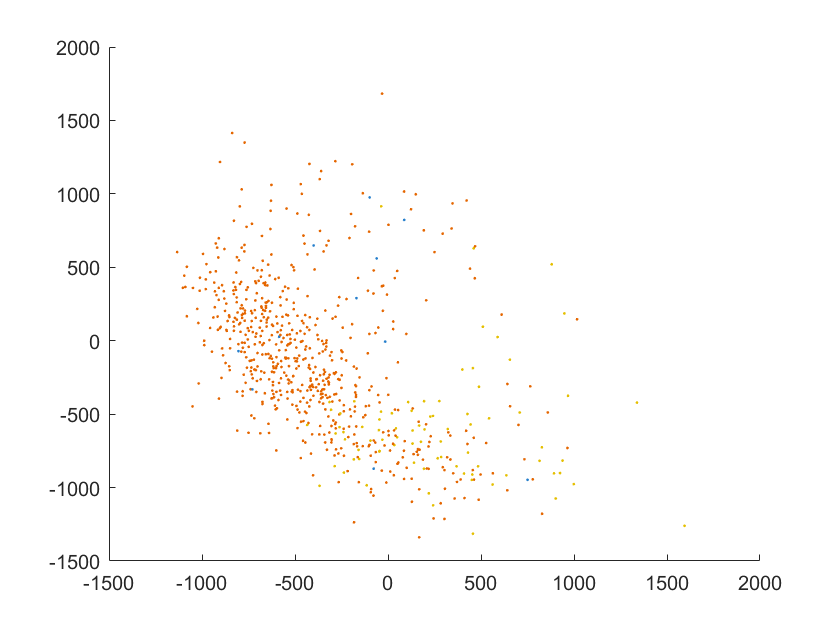} 
&\includegraphics[width=.32\textwidth,valign=c]{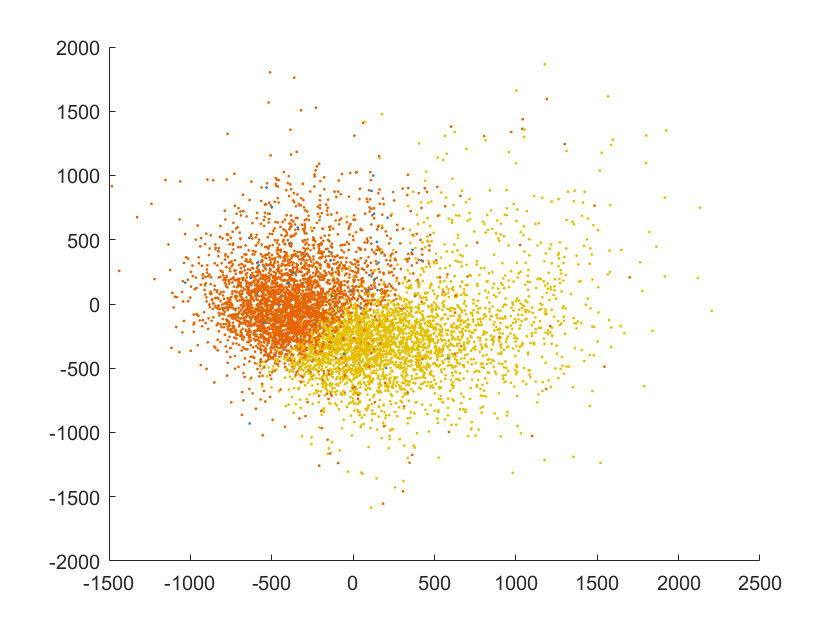} \\ \hline
  \end{tabularx}

\end{table}

The clustering outcomes of MMC, DMC, LGD and GLSHC are shown in the last four rows in Table \ref{tab-mds479}. As there are significant overlaps among the three clusters (of the hand-written digits 4, 7 \& 9), no clustering algorithms can perform very well. However, MMC could deal with the dense and sparse clusters a lot better than the density-based clustering LGD \& MMC and spectral clustering GLSHC. MMC has the least errors in two out of the three clusters (digits 4 \& 7), producing F1= 0.68. The closest contender is LGD. Though it has less errors than MMC on the digit 9 cluster, it has much more errors on the other two clusters, producing F1= 0.53.  The other two algorithms have much more clustering errors than MMC in all three clusters.

This ablation study provides another example of the impact of varied density on density-based clustering algorithms, and the superior ability of using mass distribution in clustering on a real-world dataset.

\subsubsection{Study 2: The effect of post-processing (step 3) of MMC}

The effect of post-processing (step 3) of MMC is shown in Figure \ref{fig:ami_imp}. Fourteen out of the twenty datasets have less than 0.1\% or no improvement of AMI due to post-processing. These are the datasets in which representative samples can be obtained in step 1. Post-processing has exerted an significant improvement on the first four datasets because the samples obtained in step 1 are not representative enough. These are due to their peculiar distributions and possibly high dimensions (on the gisette, w50Gaussian and w10Gaussian datasets).   

Example improvements on three datasets in terms of AMI and total mass (Eq (\ref{eqn-objective-function})), before and after post-processing, are shown in Table \ref{tab:post}.

%\textcolor{red}{Tianrun, provide a plausible reason (or condition) as to why without post-processing work poorly; and how post-processing could improve it significantly.}\textcolor{black}{Post-processing is not required for most cases for efficiency. Cases where the post-processing takes over is that the representative distribution is not stable, which means the center of KME will change a lot after iteration. Typically the altered distribution is spherical in embedding space. If the actual cluster is globular in embedding space the post processing can fix the mistake made by step1. On the contrary, the performance will degrade if the actual distribution is not globular in embedding space.One condition that the representitive distribution is not stable under interation is that 2 distribution are connected. There are many points near the boundary. Each iteration will reassign quite a number of points near the boundary, pushing the boundary towards certain direction, and causing a shift in the KME centers. If this results in a positive feedback loop, this is unstable.
%}

% \begin{figure}[h!]
%     \subfloat[F1]{\includegraphics[width=0.45\textwidth]{figures/boxplotF1.png}}
%      \subfloat[AMI]{\includegraphics[width=0.45\textwidth]{figures/boxplotAMI.png}}
    
%   \caption{Box plot of F1 and AMI results of MMC without pp and with pp on datasets used in Table \ref{tab:f1} }
%   \label{fig:boxplot-post}
% \end{figure}
\begin{figure}[b]
    \centering
    \includegraphics[width=0.8\textwidth]{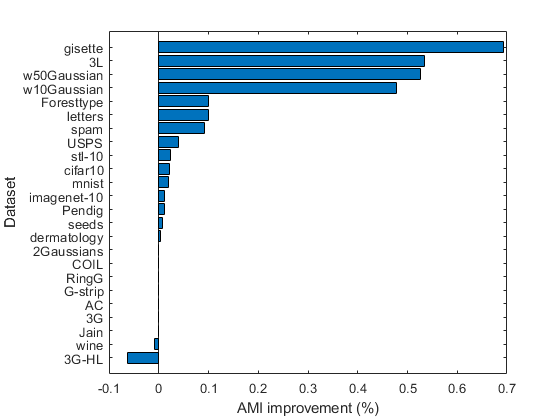}
    \caption{Improvement due to  post-processing in terms of percentage of AMI before post-processing (average results over 5 trials). The influence of post-processing is small or none on many datasets except for the first 4 datasets. }
    \label{fig:ami_imp}
\end{figure}
%\vspace{-4mm}
\begin{table}[b]
    \centering
        \caption{Example improvements due to post-processing (pp) on MMC (on a single trial).}
    \label{tab:post}
    \begin{tabular}{l|c|c|c|c} \hline 
        \multirow{2}{*}{Dataset} &\multicolumn{2}{c|}{AMI} &\multicolumn{2}{c}{Total mass (normalized)} \\ \cmidrule{2-5}
        & without pp& with pp &without pp& with pp\\ \hline
        COIL & 0.96& 0.96& 0.3713& 0.3713\\ 
        imagenet10 & 0.87 & 0.88 & 0.1887&0.1894\\
        gisette & 0.16& 0.58&  0.1740& 0.1988\\ \hline
    \end{tabular}
\end{table}

\subsection{Assessment using a Spatially Transcriptomics dataset}

Spatial Transcriptomics (ST) are a key tool in profiling gene expression and spatial information in tissue samples \cite{marx2021method}. Given a ST dataset, a major task is to find the clusters so that further analyses on the dataset can be conducted. 

%SpatialPCA \cite{shang2022spatially} is a recent method to embed an original ST dataset into a set of multi-dimensional vectors for further processing.

A ST dataset,  containing tissue samples from a healthy human brain derived from  the dorsolateral prefrontal
cortex (DLPFC) domain \cite{pardo2022spatiallibd,maynard2021transcriptome} (\url{http://spatial.libd.org/spatialLIBD/}), is used to compare the clustering capabilities of different algorithms.

\begin{figure}[h]
    
    \centering
    \subfloat[Ground truth]{\includegraphics[width=.3\textwidth]{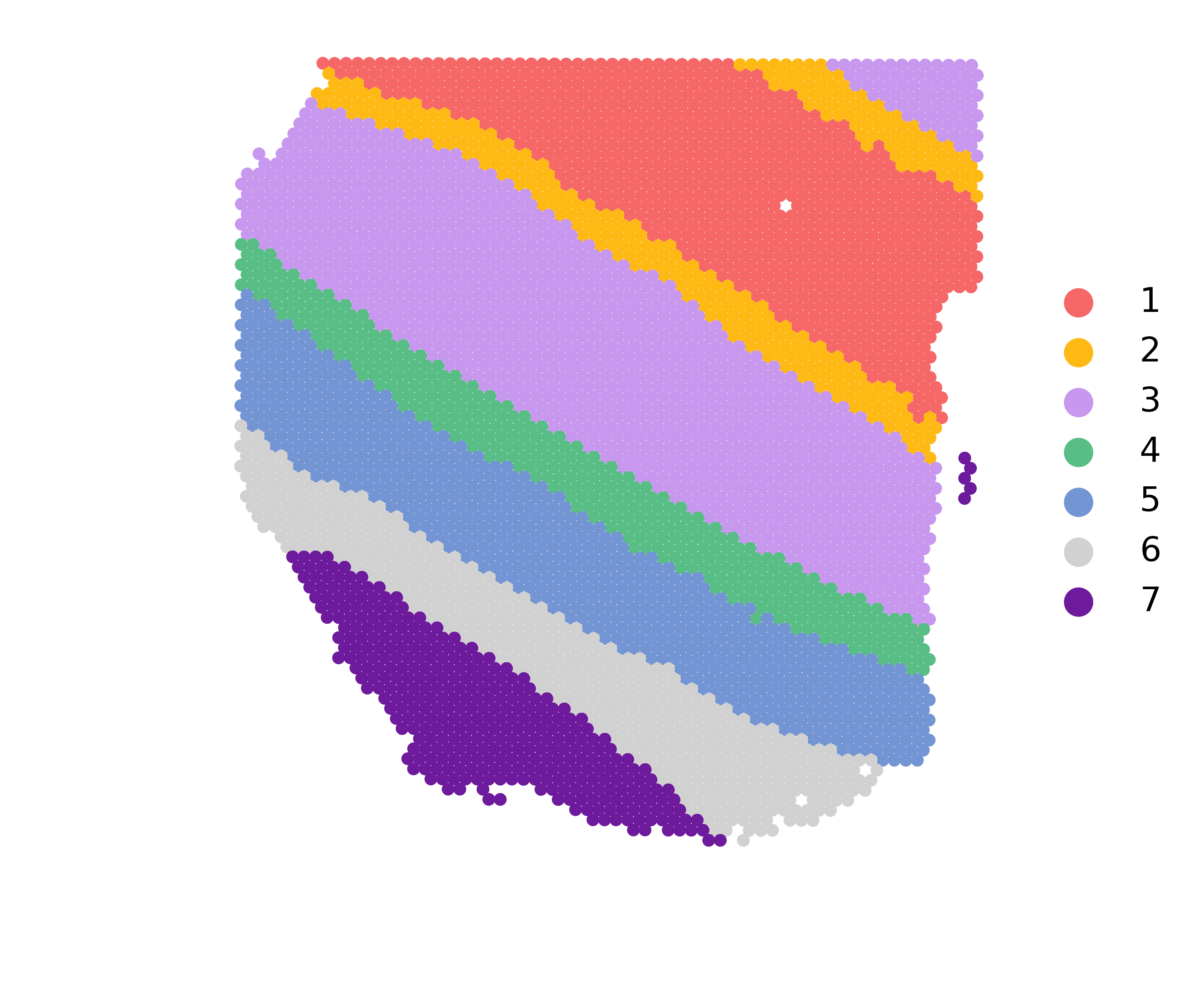} }
    \hspace{0.5cm}
     \subfloat[MMC F1=0.69]{\includegraphics[width=0.3\textwidth]{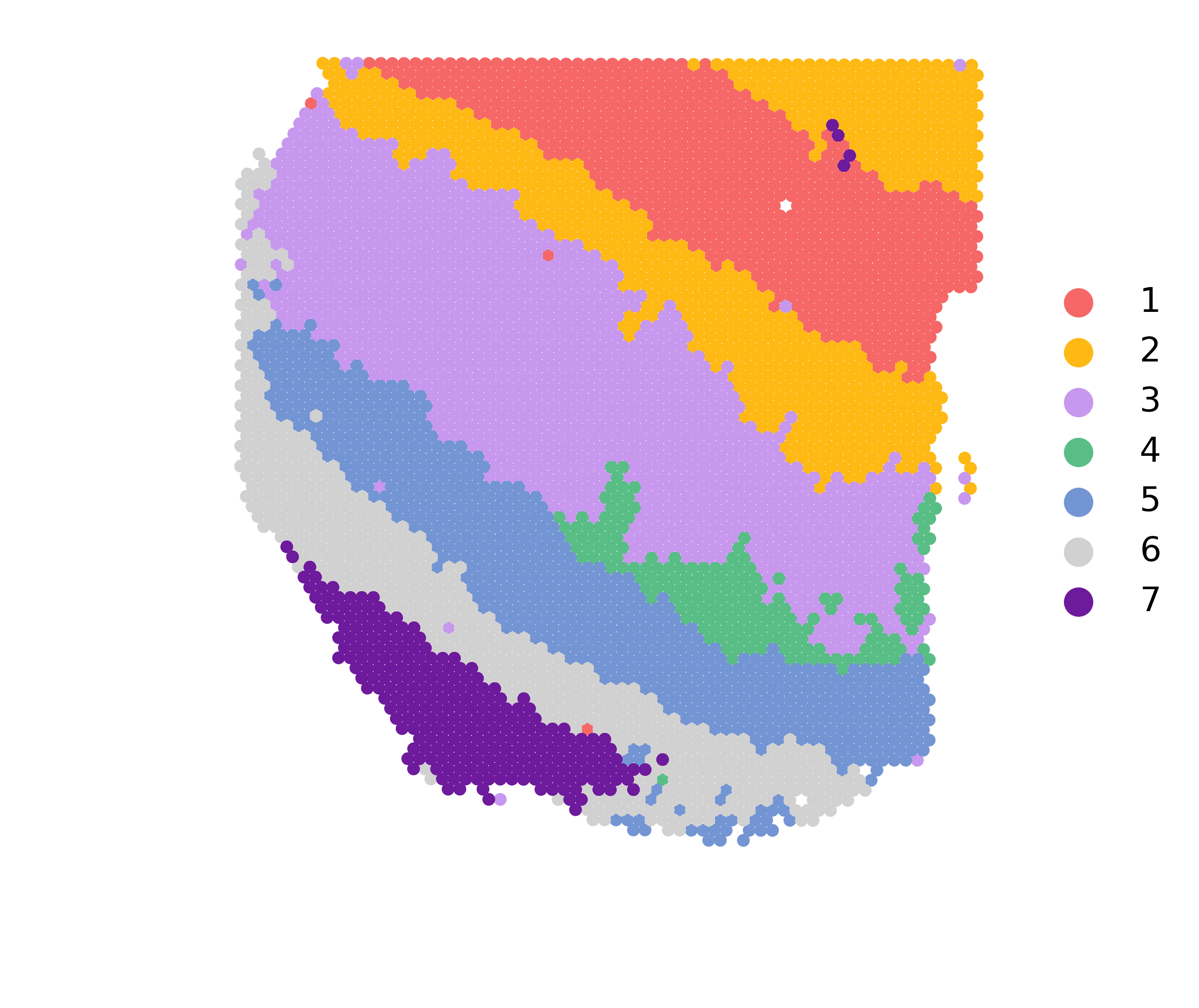}}
    \hspace{0.5cm}
    \subfloat[DMC F1=0.63]{\includegraphics[width=0.3\textwidth]{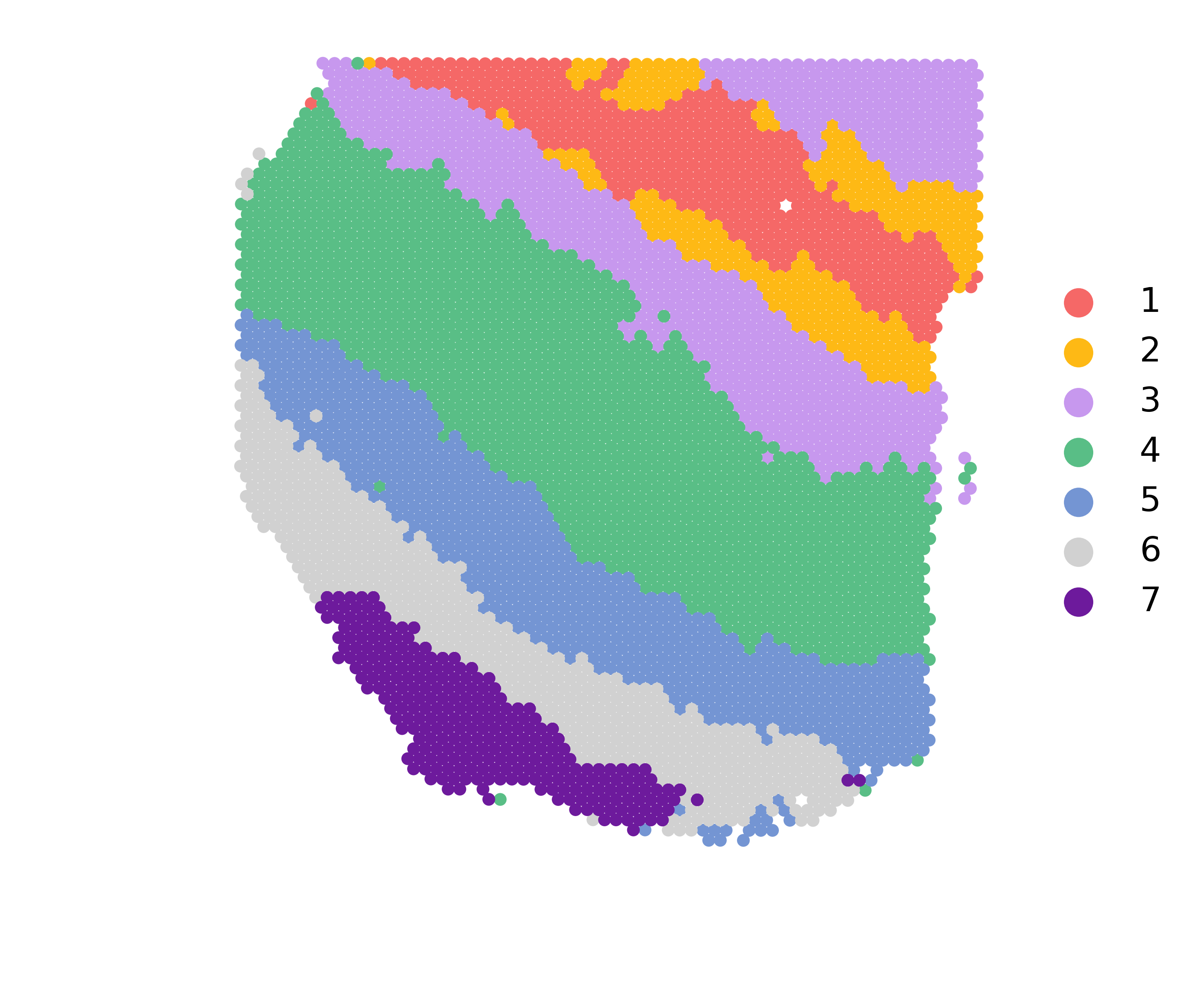}}
   
    \subfloat[DP F1=0.35]{\includegraphics[width=0.3\textwidth]{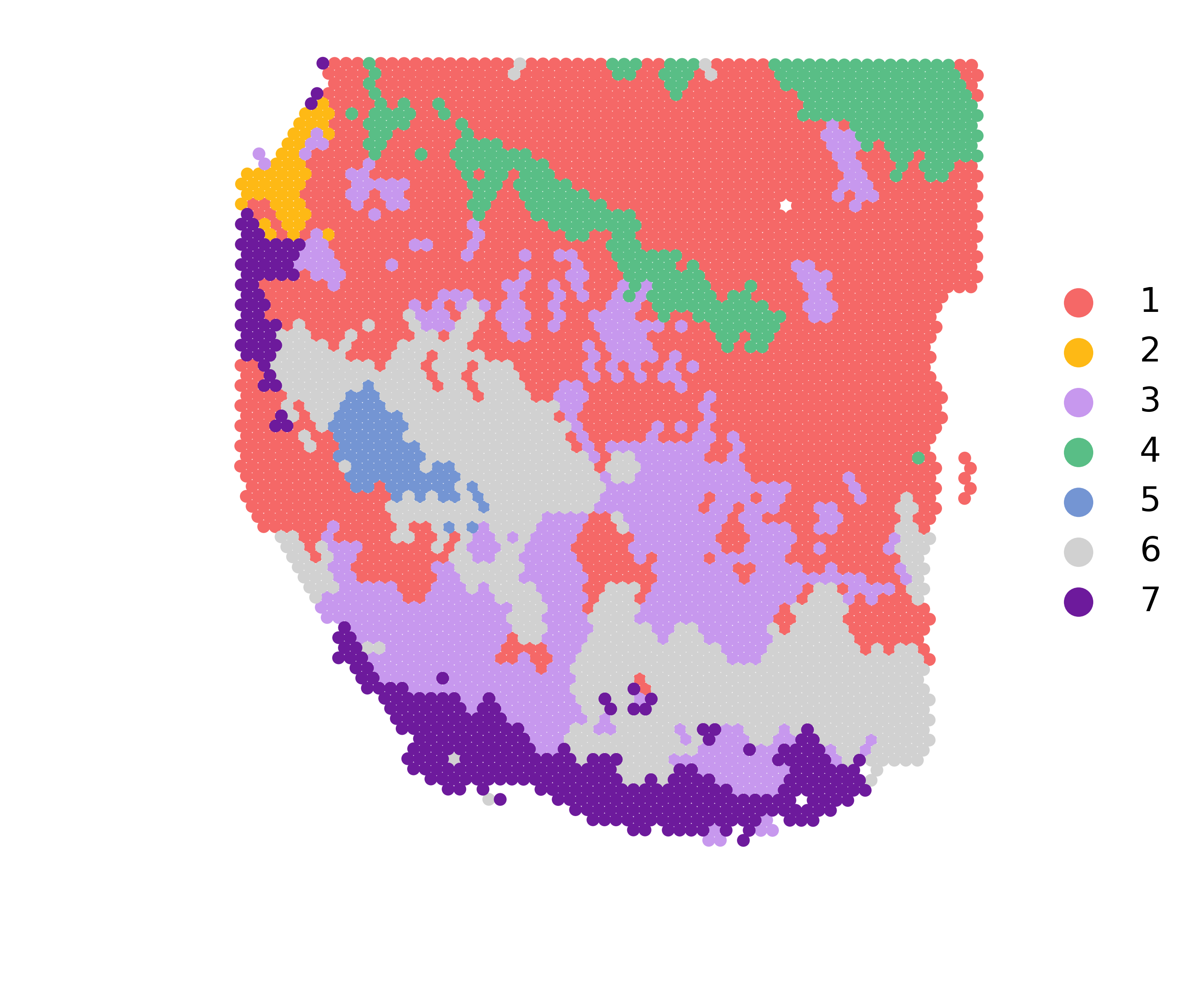}}
    \hspace{0.5cm}
    \subfloat[LGD F1=0.42]{\includegraphics[width=0.3\textwidth]{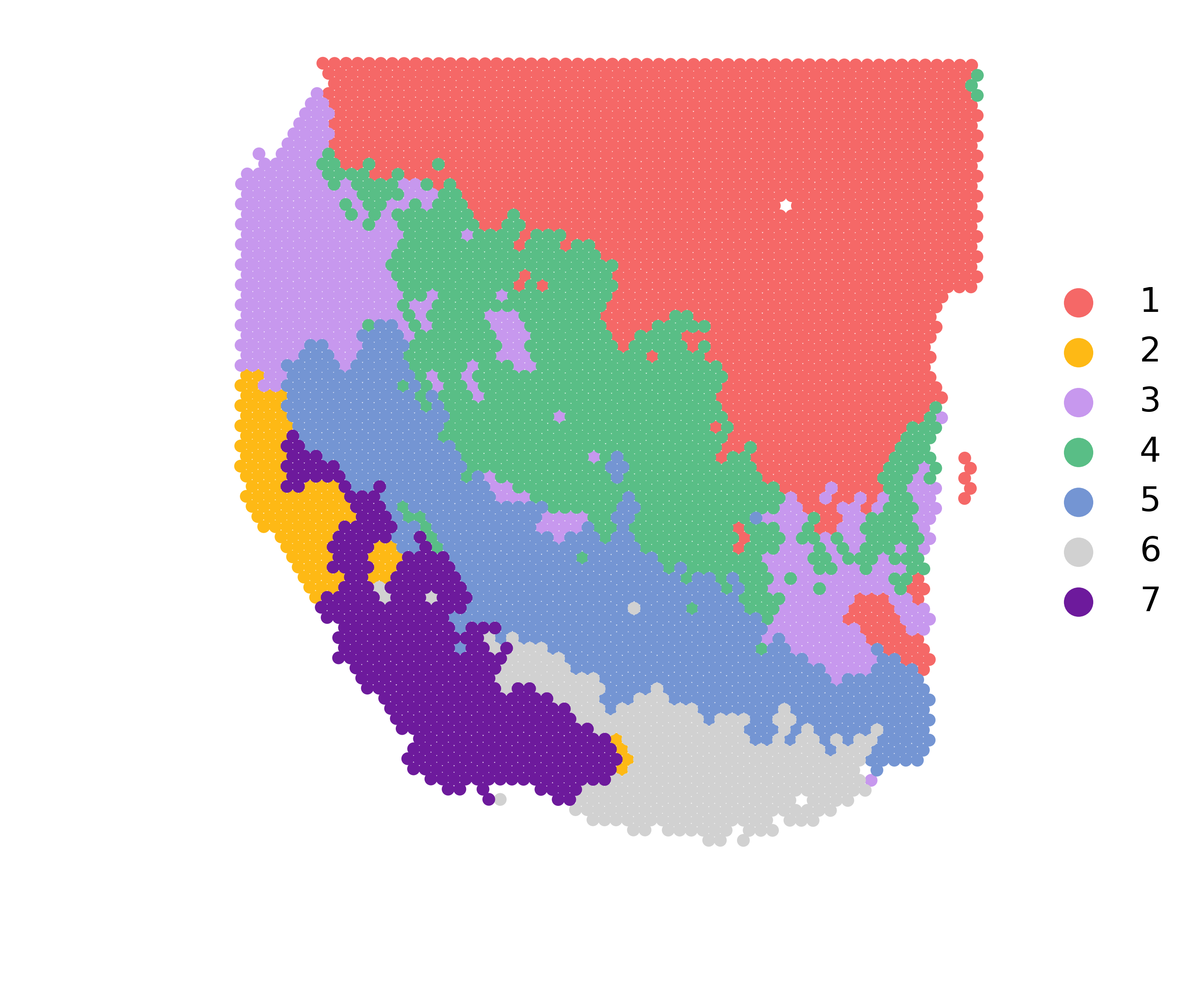}}
    \hspace{0.5cm}
    \subfloat[GLSHC F1=0.67]{\includegraphics[width=0.3\textwidth]{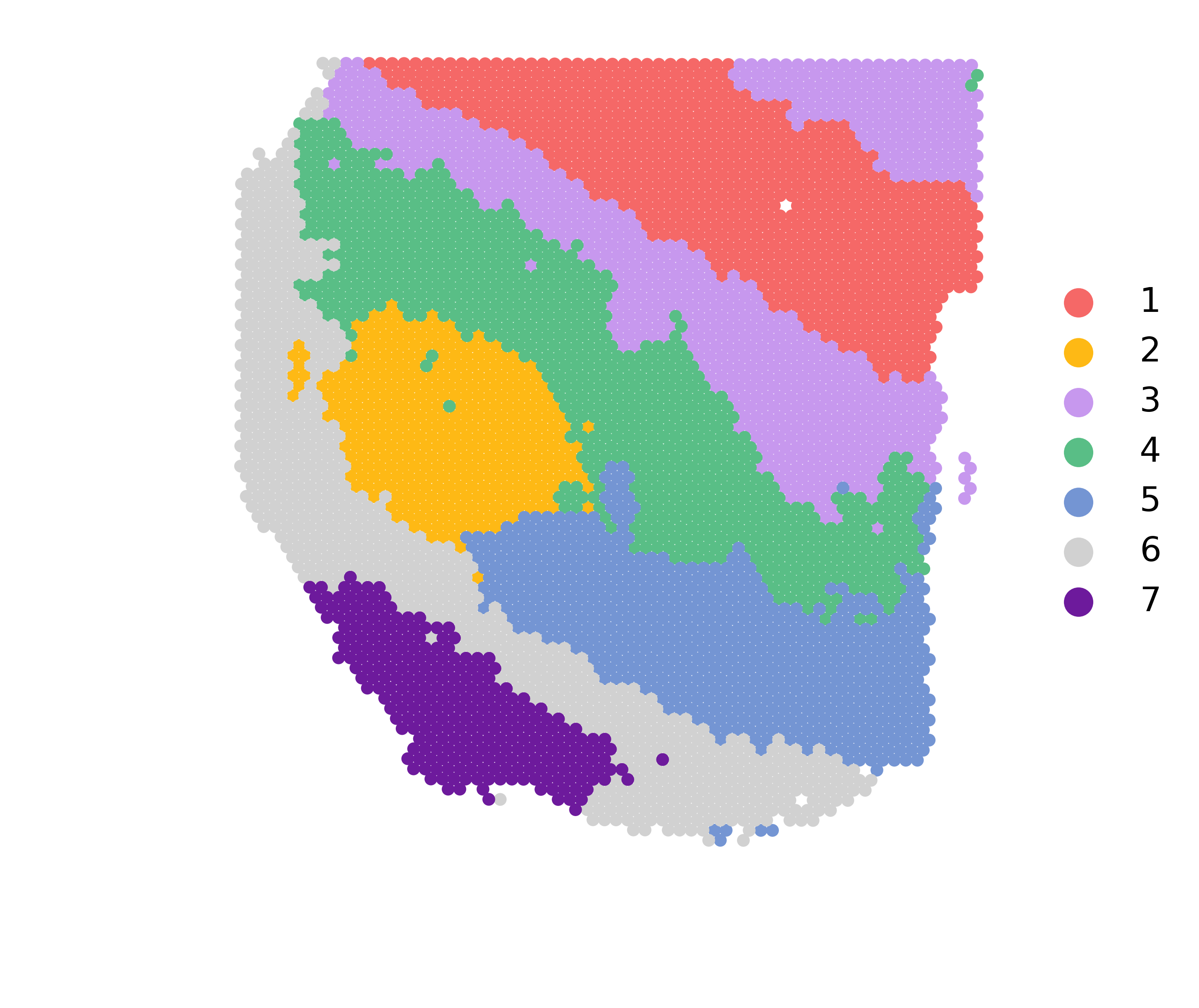}}    
    % \subfloat[sampled init cores]{\label{smp-init}\includegraphics[width=0.3\textwidth]{figures/SR_0.2init.png}}
    % \hspace{0.5cm}
    % \subfloat[DMC init core in \ref{smp-init}]{\includegraphics[width=0.3\textwidth]{figures/SR_0.2step3.png}}
  \caption{An application to a Spatial Transcriptomics (ST)  sample\_151507 on the DLPFC dataset. We ran each algorithm 5 times and show the results with the highest F1 score. SpatialPCA \cite{shang2022spatially} is used to perform the embedding of the original ST sample; and the same embedded dataset is used as the input to all clustering algorithms.}
  \label{fig:DLPFC}
\end{figure}

Figure \ref{fig:DLPFC} shows the clustering outcomes of five clustering algorithms. MMC has the best clustering outcome, shown in Figure \ref{fig:DLPFC}(b), which identifies most layers of clusters in the dataset that closely match those in the ground truth shown in Figure \ref{fig:DLPFC}(a). 
Visually, DMC is a close second, even though it has a slightly lower F1 score than GLSHC because the former finds better layers of clusters than the latter. Both LGD and DP have the poorest outcomes with many mixed clusters without clear layers.

\subsection{Parameter sensitivity study}

The parameter sensitivity of MMC on two datasets is shown in Figure \ref{fig:sensitivity}. This shows that MMC is not too sensitive to the parameter settings of $\tau$ and $\psi$.
% Table \ref{tab-post-processing} shows the effect of post-processing \textcolor{red}{: need to show the total mass before and after the post-processing as well.}

\begin{figure}[h!]
\vspace{-8mm}
    \subfloat[COIL]{\includegraphics[width=0.45\textwidth]{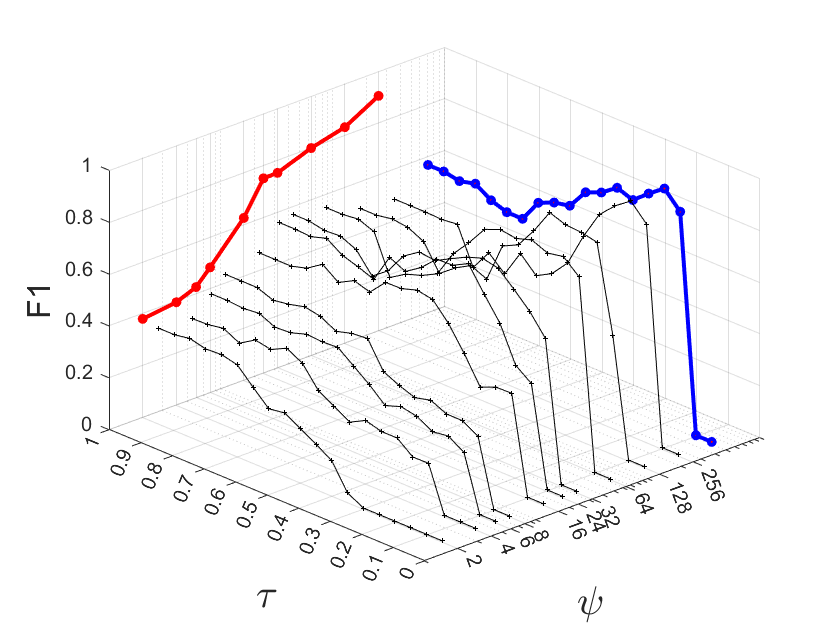}}
    \hspace{1mm}
     \subfloat[Pendig]{\includegraphics[width=0.45\textwidth]{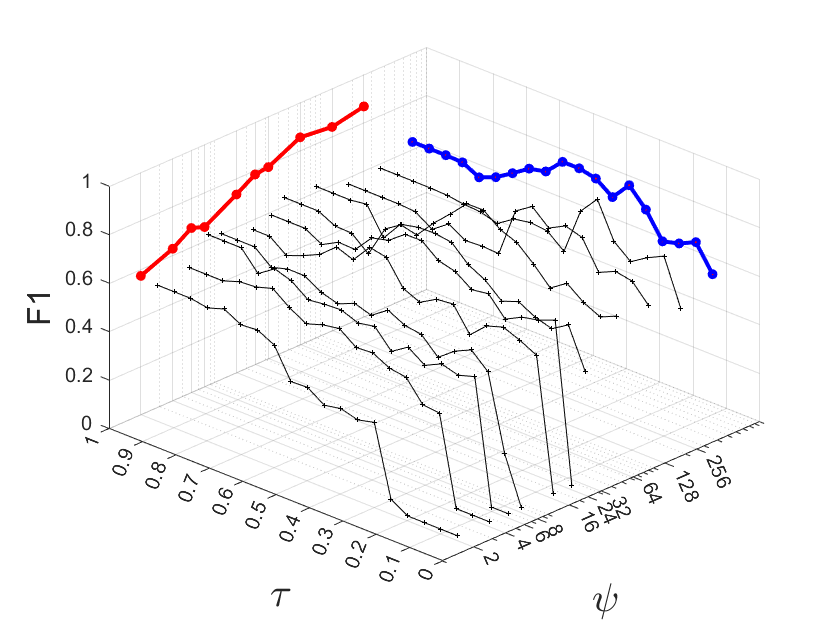}}

 \caption{Parameter sensitivity of MMC. The best clustering results for a $\psi$ setting (over the range of $\tau$) and for  a $\tau$ setting (over the range of $\psi$) are shown 
 in red and blue lines, respectively. }
  \label{fig:sensitivity}
\vspace{-6mm}
\end{figure}

\subsection{The effect of sampling size on MMC}

\textcolor{black}{Note that the use of a small representative sample set in MMC is different from that typically used in the literature, where sampling is often used to improve efficiency, knowing that it reduces its task-specific accuracy. Examples are: subsampling approximation is used to (i) reduce the learning cost in constructing minimal enclosing sphere in Support Vector Clustering \cite{SVC-TKDE2015}; (ii) enable a high computational algorithm such as DP to run on a huge dataset that would otherwise be impossible (see Section 7.2 in \cite{psKC-2023}); and (iii) significantly reduce the eigendecomposition cost in spectral clustering via a small set of anchor points rather than the entire dataset \cite{kang2021structured,yang2021graphlshc} (as discussed in Section \ref{sec-why-fail}).\\
In contrast, mass-based methods, including its first method called Isolation Forest \cite{liu2008isolation}, rely on small samples to do well; in fact, the model trained from a large set performs poorer, defying the conventional wisdom that more data the better. A theoretical analysis \cite{LearningCurve} on a nearest neighbor anomaly detector reveals that the sample size has the following impacts. First, increasing the sample size increases the chances of including anomalies in a training set, leading to a lower detection accuracy of the trained model. Second, the optimal sample size is the one which best represents a data distribution or the geometry of normal instances and anomalies, producing the optimal separation between normal instances and anomalies. Increasing the sample size beyond the optimal size reduces the separation between normal instances and anomalies, leading to decreased detection accuracy. See \cite{LearningCurve} for more details.\\
The above discussion also point to the fact that the $s$ setting in MMC shall not be proportional to the dataset size, unlike the setting of sample set size when sampling is used as a means to trade-off accuracy for efficiency mentioned above.
As a result, MMC is able to deal with large datasets such that one can set $s \ll n$, where $n$ is the dataset size. None of the existing density-based clustering algorithms have the same ability.
}

\textcolor{black}{Here we show the $s$ settings required on two example datasets: The first is the simple 2Gausssians dataset (having 1,000 data points) and the second is the real-world mnist dataset (having 100,000 data points). The clustering results of MMC with different $s$ settings are shown in Figure \ref{fig:s-setting}. On both datasets, with the optimal settings, MMC produces the best F1 results. Note that as 2Gaussians is a simple dataset, only $s=50$ is sufficient; whereas $s=2,000$ is required on the more complex mnist dataset. In both cases, $s \ll n$. 
This demonstrates that the representative sample set size required depends on the data distribution (but not proportional to a given dataset size) in order to produce a good clustering outcome.\\
It is interesting to note that using a sample set size larger than the optimal size has no benefit, and it could be counter-productive, leading a worse clustering outcome than that with the optimal $s$ setting. This is shown on the mnist dataset.}

\begin{figure}[h!]
     
    % \subfloat[COIL]{\includegraphics[width=0.33\textwidth]{graphs/para_s_coil.png}}
    % \hspace{1mm}
    %  \subfloat[Pendig]{\includegraphics[width=0.33\textwidth]{graphs/para_s_pendig.png}}
    \includegraphics[width=0.48\textwidth]{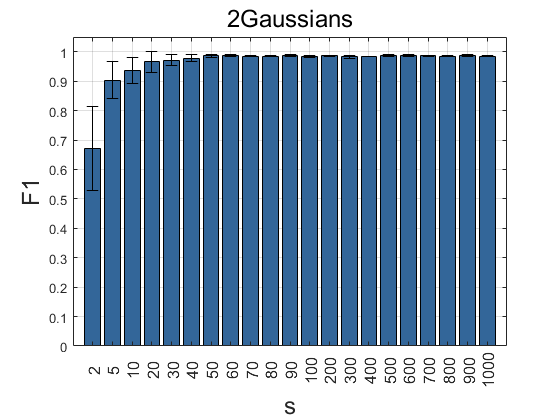}
    \includegraphics[width=0.48\textwidth]{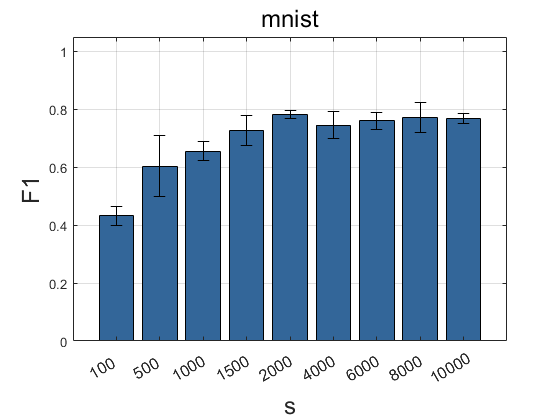}
 \caption{The effect of sampling size on MMC. Each error bar indicates the standard deviation of the F1 results over 5 trials in each $s$ setting. Note that the standard deviations are significantly smaller on the optimal $s$ settings than those on the sub-optimal settings. }
  \label{fig:s-setting}
\vspace{-6mm}
\end{figure}

\subsection{Scaleup test}

%The time complexities of all algorithms are shown in Table \ref{tab:time-complexities}.

Figure \ref{fig:scalup} shows the scaleup test of mass and density based algorithms as well as spectral clustering.
\textcolor{black}{GMM has the lowest runtime, followed by GLSHC. It is interesting to note that while HDBSCAN$^*$ exhibits linear time complexity when the dataset size increases 100 times from $1500$ to $1500 \times 10^2$, it has quadratic time complexity when the dataset size further increases 100 times to $1500 \times 10^4$. This shows that its worst-time complexity is quadratic.}
MMC and DMC have linear time complexity and approximately the same runtime, so as SGL and GLSHC if the number of anchors $a$ employed is significantly smaller than the dataset size $n$. 
These results are consistent with the time complexities of all algorithms shown in Table \ref{tab:time-complexities}.

\begin{figure}[h]
    \centering
        \includegraphics[width=0.49\linewidth]{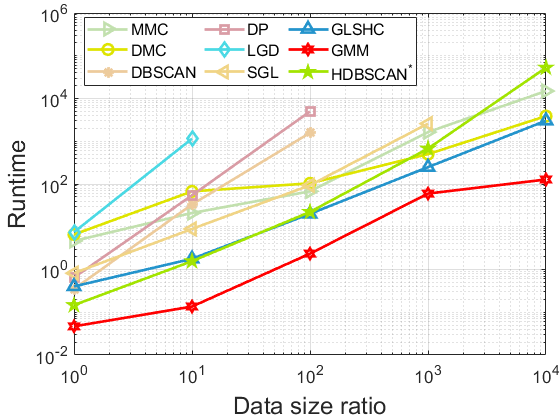}
        \includegraphics[width=0.49\linewidth]{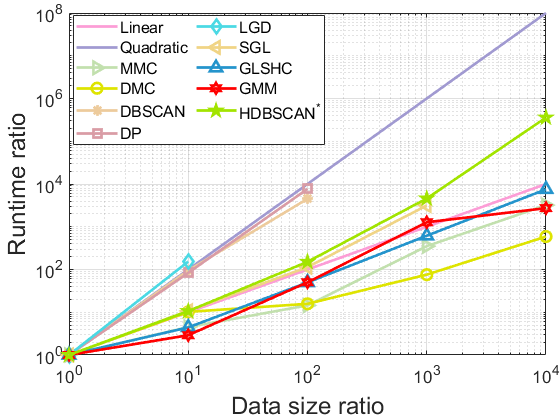}
    \caption{Scaleup test result of actual run time (left) and runtime ratios (right). Data size ratio = 1 denotes that a dataset of 1500 points is employed. The  $10^4$ ratio is conducted using a dataset of 15 million points.  DP and DBSCAN could afford to run up to a dataset of 0.15 million points only. LGD did not complete the run on a dataset of 0.15 million within 2 days. }
    \label{fig:scalup}
\end{figure}

\section{Relation to other kernel-based clustering algorithms}
\label{sec-KBC}

IDKC \cite{IDKC-IS2023} and psKC \cite{psKC-2023}  are two recent clustering algorithms based on kernel, closely related to MMC. They are motivated to improve kernel k-means \cite{dhillon2004kernel,KNNKernel,Scalable-kMeans-JMLR19} and Laplacian k-modes \cite{Scalable-kModes-NEURIPS2018}, respectively. In contrast, MMC is motivated to use mass distribution to describe data distribution, as opposed to the typical density distribution.

Though all MMC, IDKC and psKC share the same treatment by using a kernel to represent each cluster as a distribution, this paper provides two significant breakthroughs over IDKC and psKC:
\begin{itemize}
    \item The first breakthrough of MMC is the use of mass distribution to interpret the distribution represented via the Isolation Kernel. Without this conceptual breakthrough, the reason why these clustering algorithms could work well in clusters of varied densities cannot be explained satisfactorily. IDKC and psKC  have attributed this ability to the data dependency of the Isolation Kernel without further explanation. Using mass distribution, we can now fully explain this ability via  highest-mass clusters and the mass-maximization objection function which produce clusters with approximately the same average cohesiveness $\bar{S}(C^\tau)$ (see the next point).   
    \item IDKC and psKC  have the first step initialization based on one peak for each cluster. As a result, multiple iterations are required in the second step to grow the clusters.

    The second breakthrough of MMC is the use of $\tau$-cohesive clusters $C^\tau$ as the initial clusters. This enables the cluster assignment process to be completed in one iteration because  $\tau$-cohesive clusters are much better representatives, than single-point peaks, of clusters of arbitrary shapes, sizes and densities.  Because all clusters have the same cohesiveness (as stated in Proposition \ref{Lemma-cohesiveness}), regardless of the densities of the clusters, we are able to explain that both the initial and final clusters have no bias towards dense clusters, unlike the clusters discovered by density-based clustering algorithms. 
\end{itemize}
Both IDKC and psKC  do not use the concept of mass distribution to explain their operational principles. Yet, given our revelation, they can be interpreted as mass-based clustering methods. 
This is because both these algorithms and MMC  use the same Isolation Kernel \cite{IDK} to discover clusters by treating each cluster as a mass distribution, albeit the algorithmic details differ. In fact, all three of them share the same objective function.

A comparison between MMC and IDKC is provided in Table \ref{tab-idkc}.
IDKC has comparable clustering performance with MMC, except on Jain, 3G-HL, RingG and COIL. The inferior clustering
performance of IDKC on these four datasets is largely due to the use of peaks (instead of representative samples) of clusters.

\begin{table}[b]
    \centering
        \caption{MMC vs IDKC and DBSCAN vs MBSCAN.}
    \label{tab-idkc}
    		\renewcommand{\arraystretch}{1}
		\setlength{\tabcolsep}{4pt}
%     \resizebox{0.7\textwidth}{!}{
\begin{tabular}{l|rr|rr|rr}
    \hline 
     & \multicolumn{4}{c|}{F1} & \multicolumn{2}{c}{AMI} \\ 
  Dataset & MMC & IDKC      & DBSCAN    & MBSCAN  & MMC & IDKC  \\ \hline 
  Jain     &1	  &    0.93	  & 1         & 1  &1	  &  0.83\\
3L         &0.84	&0.82	& 0.59      & 0.95     &0.59	&0.59  \\
3G-HL      &0.97	&0.83	& 0.87      & 0.98    &0.90	&    0.72 \\
3G         &0.98	&0.98	& 0.57      & 0.99 &0.91	&0.91 \\
2Gaussians &0.99	&0.99	& 0.83      & 0.98 &0.91	&0.92 \\
AC         &1	&1	& 1         & 1    &1	&0.96 \\
G-Strip    &0.97	&0.92	& 0.70      & 0.98 &0.81	&0.67 \\
RingG      &1	  &  0.85	& 0.67      & 1 &0.99	&0.87 \\
w10Gaussian&1	  &  1	    & 0.34      & 0.41 &1	    &0.98 \\
w50Gaussian&1	  &  1	    & 0.33      & 0.46 &1	    &0.99 \\ \hline
Average    & 0.98 & 0.93    & 0.69      & 0.88 & 0.91   & 0.84 \\ \hline
wine &   0.95& 0.95        & 0.72      & 0.96 & 0.83&0.84\\ 
seeds  & 0.92   & 0.92      & 0.76      & 0.93 & 0.73&0.74 \\ 
dermatology & 0.91 &0.94    & 0.52      & 0.91 &0.88&0.92\\ 
Foresttype & 0.83&0.85      & 0.25      & 0.84 &0.59&0.63\\ 
COIL&0.91	&0.73	        & 0.84      & 0.90 &0.95	&0.84 \\
spam       &0.75	&0.78	& 0.38      & 0.64 &0.21	&0.26 \\
gisette    &0.91	&0.91	& 0.01      & 0.59 &0.59	&0.56 \\
Pendig      &0.87	&0.83	& 0.70      & 0.61 &0.84	&0.80 \\
 USPS   &0.73	&   0.69   & 0.27      & 0.31 &0.77&	   0.72 \\
imagenet-10 &0.91	&0.94	& 0.85      & 0.87 &0.86	&0.88 \\
stl-10    &0.75	&0.74	    & 0.53      & 0.60 &0.66	&0.66 \\
letters    &0.40&  0.36	    & 0.29      & 0.43 &0.51	&0.48 \\ 
cifar10    &0.74	&0.70	& $^\dagger$0.01  & \textcolor{black}{NC} &    0.71	&0.70 \\
mnist      & 0.77   &\textcolor{black}{NC} & $^\dagger$0.01 & \textcolor{black}{NC}& 0.74 & \textcolor{black}{NC}\\
\hline
% Average    &0.82   & 0.80   & 0.51      & 0.72 & 0.70   & 0.69 \\ \hline
Average    & 0.81   & 0.80   & 0.44     & 0.72 & 0.71   & 0.69 \\ \hline
    \end{tabular}
\end{table}

\textcolor{black}{An early kernel-based clustering is Maximum Margin Clustering  \cite{SVC-JMLR2002,xu-nips-2004-MMC,zhang-MMC-aaai-2018-ODMC} which borrows the idea of Support Vector Machine (SVM) in classification to perform clustering. It formulates the clustering problem as an optimization that maximizes the margin between clusters. This approach differs from the mass-based methods described above in two key aspects. First, Maximum Margin Clustering does not consider clusters as distributions and does not use a distributional kernel to represent distributions. Second, it is a typical kernel based method which must rely on an optimization procedure to optimize an objection function. None of the mass-based clustering methods thus far need an optimization procedure to achieve the objective stated in Definition~\ref{def-objective}.\\
On another note, it is possible to use the IK instead of the Gaussian Kernel in Maximum Margin Clustering, as shown in SVM when IK was first introduced \cite{ting2018IsolationKernel}. MMC is a much simpler and efficient clustering than Maximum Margin Clustering, and MMC  has an arguably better objective function which can be achieved without optimization.  
}

\section{Discussion}

\subsection{Related recent works on kernel mass estimation and kernel density estimation} 
\label{sec-MBSCAN}
%An earlier work in converting ass estimation describe how an existing density estimator is converted to a mass estimator by simply replacing a data independent distance measure with a data dependent measure. 
MBSCAN \cite{IsolationKernel-AAAI2019} converts the $\epsilon$-neighborhood density estimator (used in DBSCAN) into a mass estimator by replacing the Euclidean distance with the distance version of the Isolation Kernel. This work demonstrates the impact of mass estimation in DBSCAN using exactly the same algorithm, uplifting its clustering performance significantly \cite{IsolationKernel-AAAI2019}. \textcolor{black}{This is consistent with the comparison results between DBSCAN and MBSCAN we have presented in Table \ref{tab-idkc}, where MBSCAN uplifts DBSCAN's clustering F1 results in almost all datasets, some with very large margins, e.g., 3L, 3G, RingG, dermatology, Foresttype and gisette.}

Our work differs in three key aspects. First, the earlier work does not explain mass estimator from its fundamental. Section \ref{sec_Mass} provides this fundamental without referring to an existing density estimator. Second, MMC is a brand new algorithm, and it uses the mass-maximization criterion to form the final cluster, unlike MBSCAN/DBSCAN in which the same algorithmic shortcoming (of using point-to-point linking) remains. \textcolor{black}{This is the reason why MBSCAN still performs significantly worse than MMC on quite a number of datasets shown in Table \ref{tab-idkc}, e.g., w10Gaussian, w50Gaussian, spam, gisette, Pendig, USPS and stl-10. In general, DBSCAN is weaker than the more recent density-based algorithms such as DP and DMC, as shown in Tables \ref{tab:f1} and \ref{tab:ami}. It is clear from the results in Table \ref{tab-idkc} that MBSCAN is weaker than MMC, echoing the relative performance of their density counterparts.} Third, MMC has linear time complexity, whereas MBSCAN and DBSCAN have quadratic time complexity.

There are a number of improvements on kernel density estimators (see e.g., \cite{VariableKDE1992,SOMKE-2012,AdaptiveKDE-2018,DiffusionKDE2010}).
Recent advances in kernel density estimation have significantly improved the time complexity from quadratic to linear (see e.g., \cite{RACE-WWW2020,IKDE-ICDM2021}). MMC has made use of this advancement, though not directly from the perspective of kernel density estimator, to achieve the linear time complexity.

\subsection{Relation to mass estimation and mass-based similarity}
\label{sec-mass}
Mass estimation \cite{mass-estimation-KDD2010,Mass-estimation-MLJ2013} was proposed to be an alternative to density estimation to better model data distribution for data mining and machine learning. Our work here is the first to use the IK based mass estimation to explain its superior data distribution modelling via the notion of cluster cohesiveness in the context of clustering. 

Historically, the idea of mass estimation \cite{Mass-estimation-MLJ2013} was conceived before the introduction of mass-based similarity \cite{LMN-MLJ2019}. But the Isolation Kernel \cite{ting2018IsolationKernel} (a counterpart of mass-based similarity) came before the IK based mass estimation we proposed here. 
%In addition, these early non-IK mass based works are solely axis-parallel partitioning only. It is interesting to note that the first version of mass-based similarity and the first version of IK are based on iForest \cite{liu2008isolation}.  
As mass-based similarity is a direct ancestor of the Isolation Kernel (IK), it is no surprise that  mass estimation derived from IK (stated in Section \ref{sec_Mass}) has a strong connection to the previous versions of mass estimation \cite{mass-estimation-KDD2010,Mass-estimation-MLJ2013}. 

%The key differences are: (i) All the previous versions of mass estimators have one thing in common, they make a point estimation based on the entire dataset. Our mass estimation (stated in Section \ref{sec-mass-cluster}) is an adaptation to the context of clustering, where the mass estimation is based on a cluster in the given dataset. As a result, the notion of mass/density distribution (based on a cluster) used here differs from the typical mass/density distribution (based on the entire dataset). (ii) IK based mass estimation uses Isolation Kernel whereas the non-IK versions of mass estimation employs axis-parallel partitioning methods only (including iForest \cite{liu2008isolation}). Recall that IK could be implemented using iForest \cite{ting2018IsolationKernel}, Voronoi Diagram \cite{IsolationKernel-AAAI2019}, Hyperspheres \cite{IDK}. In other words, IK based mass estimation is more versatile than the non-IK based versions.

Recall that the mass estimator, shown in Equation \ref{eqn_mass} or \ref{eqn_mass2}, does not need to be defined in terms of the Isolation Kernel. Indeed, any of the previous mass estimators \cite{Mass-estimation-MLJ2013,LMN-MLJ2019} could be used here. The use of IK produces clusters of approximately the same cohesiveness, irrespective of densities, shapes and sizes of clusters; and it yields $\tau$-cohesive clusters that are representative samples of all clusters in a dataset (as stated in Sections \ref{sec-cohesiveness} \&
\ref{sec-representative-sample}). Both are essential in the first step of the MMC algorithm. The proposed IK based mass estimator ensures that the objective of mass maximization is achieved efficiently in the last two steps of the MMC algorithm. Because any of the previous versions of mass estimation has high time complexity, their use would have a serious repercussion on the MMC's runtime. 

\textcolor{black}{An earlier work using a primitive version of mass-based similarity has been applied to DBSCAN, by simply replacing the Euclidean distance with it, to create MBSCAN \cite{Mass-based-similarity-KDD2016,LMN-MLJ2019}; and then IK is used instead to create MBSCAN \cite{IsolationKernel-AAAI2019}. In all these cases, MBSCAN has been shown to outperform DBSCAN in datasets of varied densities solely due to the use of this mass-based similarity or IK.} 
%However, as both DBSCAN and MBSCAN use  exactly the same algorithm, they have quadratic time complexity. 
\textcolor{black}{A data dependent measure called Shared Nearest Neighbors (SNN) \cite{Jarvis-Patrick-1973} (which relies on $k$-nearest neighbors to determine SNN) has been used to replace the Euclidean distance in DBSCAN \cite{SNN-DBSCAN-SDM2003}. As revealed previously, when using SNN in the $\epsilon$-neighborhood density estimator, DBSCAN becomes a mass-based clustering algorithm \cite{Mass-based-similarity-KDD2016}. Compared to 
the iForest-based mass estimator \cite{Mass-based-similarity-KDD2016}, SNN has two shortcomings, i.e., SNN is very sensitive to the $k$ setting and has $\mathcal{O}(k^2n^2)$ time complexity. (Further discussion can be found in \cite{Mass-based-similarity-KDD2016}).
}

%the idea of mass-based similarity and mass distribution as a means to overcome the weaknesses of density-based algorithm, in particular DBSCAN, was provided in KDD-1016 and MLJ-2019. These works used a more primitive partitioning mechanism than IK. The AAAI-2019 paper is the first to employ IK in DBSCAN to create MBSCAN. While the difference between mass distribution and density distribution was stipulated, no further explanation is provided as to the reason why MBSCAN produces better clusters that DBSCAN.\\

\subsection{Cohesive clustering and cohesion} 
The terms `cohesive clustering' and `cohesion' have been used in different contexts in the literature. More often than not, they are used without a definition. For example, `Cohesive clustering algorithm' has been used to refer to hierarchical clustering \cite{CohesiveClustering2022}, without defining what a cohesive cluster is; and the algorithms are based on a data independent distance measure. As a result, these algorithms have an issue with a dataset having varied densities, as discovered in a recent work \cite{IK-on-AHC-PRJ2023}. 

In a different usage, the term `cohesion' has been associated with Silhouettes \cite{Silhouettes1987} which is an assessment metric to measure how similar a point is to its own cluster (called cohesion) compared to other clusters (called separation). However, this metric works well for convex and compact clusters only, and it is not a good measure of clustering outcomes for clusters of arbitrary shapes \cite{kazempour2020towards}.  

%\textcolor{red}{Ye, Please provide some details about it, e.g., what are the assumption and it often use a data independent measures such as Euclidean distance, etc; and also add a better or more recent reference}
% It is one of the widely used internal clustering validation measures.
% Let $a(x)$ be the average distance between $x$ and all other points in the cluster to which $x$ belongs, and let $b(x)$ be the minimum average distance from $x$ to all clusters to which $x$ does not belong. $a(x)$ and $b(x)$ measure the compactness and separation, respectively. Then silhouette coefficient $s(x) = (b(x) - a(x)) / max\{a(x), b(x)\}$. The overall silhouette score for a clustering result is the average silhouette coefficient across all data points.

% For a dataset $D$ with $n$ points which is partitioned into $k$ clusters $C_1,...,C_k$. For each point $x\in D$, given a data point $x\in C_i$ ($1\leq i\leq k$), $a(x)$ and $b(x)$ are defined as

% \begin{equation}
% a(x)=\frac{\sum_{y\in C_i,y\neq x}dist(x,y)}{|C_i|-1}
% \end{equation}

% \begin{equation}
% b(x)=\min_{C_j:1\leq j\leq k,j\neq i}\{\frac{\sum_{y\in C_j}dist(x,y)}{|C_j|}\}
% \end{equation}

% The silhouette coefficient of $x$ is then defined as

% \begin{equation}
% s(x)=\frac{b(x)-a(x)}{max\{a(x),b(x)\}}
% \end{equation}

%Smaller values of $a(x)$ and larger $b(x)$ make a higher silhouette coefficient score $s(x)$. The average silhouette coefficient value of all points is used to measure a clustering result and a higher value means a better clustering result. 

Our definitions of cohesive clusters (Definitions \ref{def-tau-cohesion} \& \ref{def_same-tau-cohesion}) highlight the problem of using a data independent measure (e.g., Gaussian kernel and Euclidean distance) and its associated high-density bias. This has enabled us to pin down the root cause of many shortcomings of existing clustering algorithms, especially the density-based clustering algorithms.

\subsection{Limitations of MMC}

We envisage that there are two conditions under which MMC may fail to discover all clusters in a dataset:
\begin{itemize}
    \item Clusters that cannot be represented as distributions. This condition violates the assumption of MMC that each cluster can be represented as an unknown distribution. \textcolor{black}{An example of clusters that cannot be represented as distributions is topological clusters, where clusters are defined in terms of topological features rather than input features. In this case, a clustering method must have the ability to extract topological features in the given dataset in order to identify the clusters (e.g., \cite{TPCC-ICML2023}).}
    \item The representative samples of clusters cannot be obtained in step 1 of MMC.
\end{itemize}

%Although none of the above conditions were found in our investigation, we postulate that density-based clustering and spectral clustering would also fail under these conditions too. This last statement is supported by works 
\textcolor{black}{It is interesting to note that, in complex data objects such as graphs and trajectories, as long as the objects can be embedded into multi-dimensional vectors, MMC and its closely connected distribution-based clustering algorithms (stated in Section~\ref{sec-KBC}) can be expected to produce better clustering outcomes than density-based clustering and spectral clustering, especially when the embedded vectors have clusters of varied densities, as stipulated in this paper. Recent examples are clustering for Spatial
Transcriptomics data in the form of a graph \cite{GenomeResearch2025,kbc2025aij}, and clustering for a dataset of trajectories \cite{tidkc-ICDM2023}. The former uses a variant of MMC and the latter uses IDKC \cite{IDKC-IS2023}. They have been shown to produce better clustering outcomes than SOTA clustering methods, including spectral clustering and deep learning methods \cite{GenomeResearch2025,tidkc-ICDM2023}.} 

\textcolor{black}{Two other `limitations' of MCC are (a) the number of clusters to be discovered must be specified by a user; and (b) noise points are not detected. These can be resolved by minor tweaks in the algorithm, and they are not something fundamental to the use of mass distribution. For example, a previous version of MMC called psKC has already been designed to discover all clusters automatically, as in DBSCAN; and both MMC and psKC have exactly the same objective function.\\
In addition, all versions of mass-based clustering algorithms we know thus far can easily identify noise points, which have the lowest mass values already computed in the clustering process, by simply determining all points below a certain mass threshold. In fact, psKC has noise points defined exactly as we have described (see Definition 3 in the psKC paper \cite{psKC-2023}). This can be done as a post-processing and it does not affect the core members of each cluster found by the algorithm. This way of identifying noise (having the lowest mass/density) points is similar to that used in DP, though the original algorithm is designed to cluster all points in a dataset.\\
Nested clusters are in the domain of hierarchical clustering, and they are outside of the scope of this paper. We are sure that a hierarchical clustering which makes use of the distributional kernel, as used in MMC, would be able to identify nested clusters easily. This is equivalent to upgrading from DBSCAN to HDBSCAN to deal with hierarchical clustering.}

\section{Conclusions}

We establish for the first time that the Isolation Kernel is an effective means to define cluster cohesiveness and estimate mass distribution. Our definition of cluster cohesiveness enables us to affirm (a) the omnipresence of the high-density bias whenever density distribution is employed; and (b) the property that all clusters have approximately the same cohesiveness when mass distribution via the Isolation Kernel is employed; thus the absence of the high-density bias when mass distribution is used.  

We argue that, using density-based algorithms, the goal of finding clusters of arbitrary shapes, sizes and densities  is difficult to achieve. They have two fundamental issues, i.e., the high-density bias (because of the use of density distribution) and the algorithmic means of the point-to-point linking process to form the final clusters. We show that the first issue is more fundamental than the second because using a completely different algorithmic means to form the final clusters also suffers from the high-density bias (e.g., via DMC, described in Section \ref{sec-Density-maximization-issue}).

Here we show that the use of mass distribution via the Isolation Kernel is an effective means to address these fundamental issues. The proposed algorithm called Mass-Maximization Clustering (MMC) employs mass distribution (instead of density distribution) and the mass-maximization criterion to form the final clusters (instead of the point-to-point linking process).
As a result, MMC has no high-density bias, even though its density counterpart DMC (which is exactly the same algorithm, except the Gaussian Kernel is used instead of the Isolation Kernel)  has the high-density bias. 

Our empirical evaluation reveals that (a) MMC has superior clustering outcomes to DMC as well as existing density-based algorithms and spectral clustering; and (b) both MMC and DMC have linear time complexity  \textcolor{black}{with respect to the dataset size}, whereas existing density-based clustering algorithms have at least quadratic time complexity.

%\newpage
% \bibliographystyle{plain}
% \bibliography{references}

\newpage

\appendix
\section*{Appendix}
\section{Formulations for F1 Score and Adjusted Mutual Information}

We have used two commonly used metrics for evaluating clustering outcomes of different clustering algorithms called F1 score and Adjusted Mutual Information (AMI).
Their formulations are provided as follows:

%The F1 score is defined as follows:

\begin{equation}
\text{F1} = 2 \times \frac{\text{Precision} \times \text{Recall}}{\text{Precision} + \text{Recall}}
\nonumber
\end{equation}
\noindent
where Precision is the ratio of true positives (TP) to the sum of true positives and false positives (FP), and Recall is the ratio of true positives to the sum of true positives and false negatives (FN).

%The Adjusted Mutual Information (AMI) is given by:

\begin{equation}
\text{AMI} = \frac{\text{MI}(U, V) - \mathbb{E}[\text{MI}(U, V)]}{\max(H(U), H(V)) - \mathbb{E}[\text{MI}(U, V)]}
\nonumber
\end{equation}
\noindent
where  \(\text{MI}(U, V)\) denotes the Mutual Information between clusterings \(U\) and \(V\). \(H(U)\) and \(H(V)\) are the entropies of the clusterings, and \(\mathbb{E}[\text{MI}(U, V)]\) represents the expected Mutual Information under random labeling.

\section{Sources of the datasets and codes used}

\begin{itemize}
    \item The data characteristics of artificial datasets: Jain, 3L, 3G-HL, 3G, 2Gaussian, AC, G-strips, and RingG are shown in Table \ref{tab-visualization}. 
    \item The  w10Gasussian and w50Gasussian datasets have been used  in a previous paper \cite{hdik}. An example of w1Gaussian, which has one-dimension Gaussian distribution is each of the two-dimensional dataset, is shown in Figure \ref{fig:exmw1}.
    \item The mnist dataset is obtained from \url{https://leon.bottou.org/projects/infimnist.}
    \item The 2-dimensional dataset used in the scale-up test consists of two Gaussian distributions and one arc. An example with 1500 points is shown in Figure \ref{fig:scale-data}.
    \item The other benchmark datasets are from \url{https://www.csie.ntu.edu.tw/~cjlin/libsvmtools/datasets/} and \url{https://archive.ics.uci.edu/}.
\end{itemize}

\begin{figure}[h]
    \centering
    \subfloat[An example of w1Gaussian]{\includegraphics[width=0.35\linewidth]{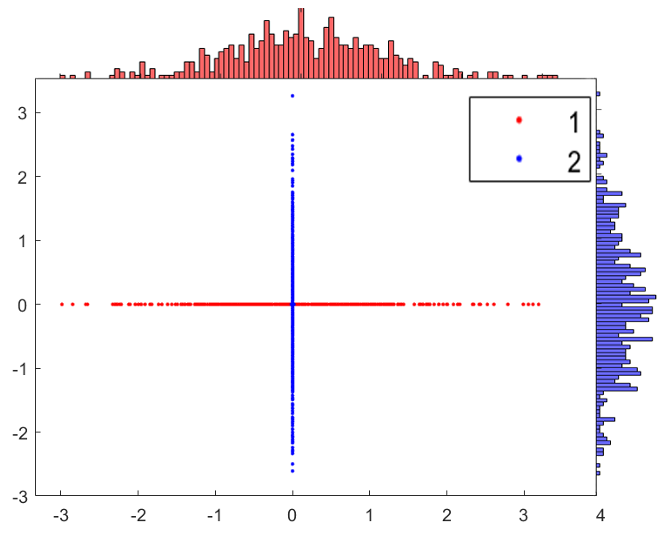}
    \label{fig:exmw1}}
    \hspace{10mm}
    \subfloat[Dataset used in the scaleup test]{\includegraphics[width=0.35\linewidth]{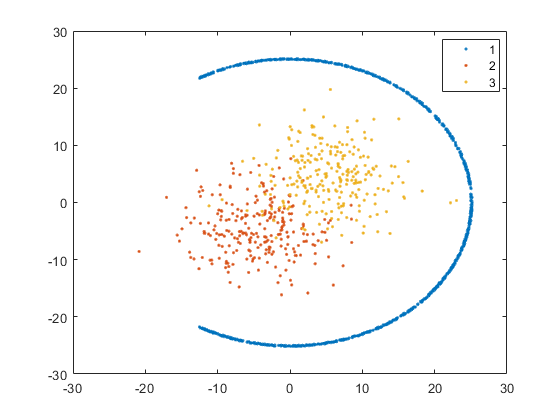}
    \label{fig:scale-data}}
    \caption{Examples of datasets used in the experiments}
\end{figure}

Each dataset is normalized to [0,1] using the min-max normalization in the preprocessing before it is applied  to all clustering algorithms. No other preprocessing is conducted.

The algorithms used in the experiments are obtained from the following sources:
\begin{itemize}
    \item HDBSCAN$^*$:  \url{https://hdbscan.readthedocs.io/en/latest/index.html}.
    \item GMM: \url{https://www.mathworks.com/help/stats/gaussian-mixture-models.html}.
    \item DBSCAN and DP: \url{https://sourceforge.net/projects/hierarchical-dp/}.
    \item MMC \& DMC: \url{https://anonymous.4open.science/r/MMC-ACD9/}.
    \item SGL: \url{https://github.com/sckangz/SGL}
    \item GLSHC: \url{https://github.com/SubaiDeng/LSSHC_matlab} 
    \item LGD: \url{https://github.com/grcai/LGD}
\end{itemize}

For each randomized algorithm, five randomized trials are conducted by using different random seeds in order to produce different initial sample sets $D_s$ from $D$. The average result from these five trials is reported. 

%We summarize the number of points in the smallest cluster and the number of points in the largest cluster in each dataset as follows: Jain: [97, 276], 3L: [68, 305], 3G-HL: [90, 720], 3G: [500, 500], 2Gaussians: [500, 500], AC: [282, 722], G-Strip: [700, 700], RingG: [192, 500], w10Gaussian: [1000, 1000], w50Gaussian: [1000, 1000], wine: [1000, 1000], seeds: [70, 70], dermatology: [20, 111], Foresttype: [83, 195], COIL: [83, 195], spam: [1813, 2788], gisette: [3500, 3500], Pendig: [1055, 1144], USPS: [1100, 1100], imagenet-10: [1300, 1300], stl-10: [1300, 1300], letters: [734, 813], cifar10: [6000, 6000], mnist: [9022, 11300].      15 datasets have balanced class distribution.

% \textcolor{red}{Hang, provide the details of all artificial datasets, some stated the source(s), those which are our creation shall provide some details for the dataset used in the scaleup test}

\section*{Acknowledgments}
Kai Ming Ting is supported by the National Natural Science Foundation of China (Grant No. 92470116 \& 62076120). This project is supported by the State Key Laboratory for Novel Software Technology at Nanjing University (Grant No.  KFKT2024A01). 

\clearpage
\bibliographystyle{ACM-Reference-Format}
\bibliography{references}

\end{document}